\documentclass[times, 3p, 10pt]{elsarticle}
\journal{Information Fusion}
\usepackage{rotating}
\usepackage{amsmath}
\usepackage{bm}
\providecommand{\mathbold}[1]{\bm{#1}}
\newcommand{\mathbbm}[1]{\mathbb{#1}}
\usepackage[utf8]{inputenc}
\usepackage[T1]{fontenc}
\usepackage{mathtools}
\newcommand{\pointwise}{\circ}
\usepackage{upgreek}
\usepackage{multirow}
\usepackage{booktabs}
\usepackage[makeroom]{cancel}
\usepackage{amsthm}
\newtheorem{theorem}{Theorem}
\newtheorem{corollary}{Corollary}
\newtheorem{proposition}{Proposition}
\newtheorem{remark}{Remark}
\newtheorem{example}{Example}
\usepackage{url}
\usepackage{amssymb}
\usepackage{subcaption}

\usepackage{longtable}
\usepackage{pdflscape}
\newcommand{\novel}{\ensuremath{{}^{\flat}}}
\newcommand{\sinv}{\ensuremath{\checkmark}}
\newcommand{\SI}{\ensuremath{\Upsigma_{\mathbf{1}}}}
\newcommand{\Sm}{\ensuremath{\Upsigma_{\mathbf{m}}}}
\newcommand{\Sn}{\ensuremath{\Upsigma_{\mathbf{n}}}}

\newtheorem{mydef}{Definition}
\begin{document}

\begin{frontmatter}

\title{A Unified Algebraic Framework for Classification Performance Evaluation}

\author[ufabc]{Ronaldo C. Prati\corref{cor1}}
\ead{ronaldo.prati@ufabc.edu.br}
\cortext[cor1]{Corresponding author. ORCID: 0000-0001-8597-4987.}
\address[ufabc]{Universidade Federal do ABC (UFABC), Av.\ dos Estados, 5001,
Santo Andr\'{e}, SP, 09210-580, Brazil}

\begin{abstract}
We propose a unified algebraic framework for classification performance evaluation covering binary, multiclass, multilabel, ordinal, hierarchical, cost-sensitive and soft-label settings. Actual and predicted labels are represented as binary indicator matrices, and three aggregation operators (global, column-wise, and row-wise) correspond exactly to micro, macro/weighted, and exemplar averaging. Any binary measure expressed in terms of the four confusion-matrix counts extends to all these settings by substituting an operator, with no measure-specific derivation. We show that the structural properties governing which identities hold for an extension are themselves derivable from the binary formula. Micro-averaging equals denominator-weighted macro-averaging precisely for the class of \emph{aggregation-decomposable} (linear-fractional) measures, which we characterise algebraically and show by counterexample to be a strict class. For soft ground truth, we prove from the $t$-norm axioms alone that the product $t$-norm is the unique choice whose confusion counts preserve marginal memberships. In multiclass settings, micro-precision, micro-recall and micro-$F_1$ collapse identically onto accuracy. Furthermore, binary skew-invariance transfers unconditionally to multilabel aggregation, but only partially to multiclass problems. For measures with a linear numerator and a prediction-independent denominator, the optimal decision threshold is the share of the numerator's total weight that favours a negative prediction. Consequently, standard training targets the measure exactly when positive and negative predictions carry equal weight; outside this class (including precision and the F-measures), no such closed form exists. Under one-hot multiclass encoding, a measure fails to attain its theoretical minimum whenever its value with zero true positives still depends on true negatives, establishing non-trivial performance floors even for completely incorrect classifiers; for example, a classifier wrong on every example still achieves a label accuracy of $0.8$ on a ten-class problem.
\end{abstract}

\begin{keyword}
classification performance measures \sep
multilabel classification \sep
evaluation framework \sep
micro-averaging \sep
macro-averaging \sep
cost-sensitive evaluation \sep
confusion matrix \sep
\end{keyword}

\end{frontmatter}

\section{Introduction}

Evaluating the performance of classification algorithms is an essential step for machine learning researchers and practitioners~\cite{DBLP:books/cu/Japkowicz2011,DBLP:journals/tkde/PratiBM11}. The process of comparing, selecting and tuning the parameters of learning algorithms is generally based on the performance evaluation by means of quality measures of the learned models, evaluated on test data. Despite its universality, evaluating classification models is a complex and still open-ended challenge~\cite{DBLP:journals/jmlr/Hernandez-OralloFF12,DBLP:journals/isci/Carbonero-RuzMF17}.

Over recent decades, numerous performance measures have been proposed to evaluate classification models~\cite{DBLP:journals/ipm/SokolovaL09,DBLP:journals/prl/FerriHM09,DBLP:journals/pr/CondessaBK17,DBLP:journals/pr/KautzEP17}. The choice of a particular measure generally depends on the problem at hand, as well as the preferences of a particular domain. In medicine, sensitivity and specificity are often used; in information retrieval, precision and recall are preferred; in cost-sensitive applications, misclassification costs must be incorporated; and in settings with uncertain or crowdsourced labels, soft evaluation criteria are required.

The complexity of evaluation grows substantially as the classification setting moves beyond the standard binary case. In multiclass problems, multiple classes compete and errors between different pairs of classes need not be treated equally. In multilabel settings, each example may simultaneously belong to several classes, creating interactions between labels that scalar measures must somehow summarize. In ordinal and hierarchical classification, the structure of the class space carries information about error severity. In single-class, open-set, and abstaining classifiers, the boundary between known and unknown is itself part of the prediction. Across all these settings, the same underlying measure (such as $F_1$) can be computed in multiple ways (micro, macro, weighted, exemplar), each emphasising different aspects of performance and potentially producing contradictory rankings of competing classifiers.

A further complication is that evaluation methodology has not kept pace with the diversity of modern classification architectures. Binary measures are well-studied, but their extensions to non-standard settings are typically derived on a case-by-case basis, leading to a fragmented landscape in which the relationships between measures, and the conditions under which they agree or disagree, are poorly understood.

In this paper, we address this fragmentation by proposing a unified algebraic framework for classification performance evaluation. The foundation is a representation of actual and predicted labels as binary indicator matrices, combined with three aggregation operators (global $\Upsigma_{\mathrm{1}}$, column-wise $\Upsigma_{\mathrm{m}}$, and row-wise $\Upsigma_{\mathrm{n}}$) corresponding to micro, macro/weighted, and exemplar averaging. Any binary measure expressed in terms of $tp$, $fp$, $fn$, $tn$ extends automatically to multiclass and multilabel problems by substituting these operators. Beyond this core generative role, the framework provides a graded treatment across non-standard settings: it derives closed-form identities for soft ground truth (via $t$-norms), ordinal classification (via membership encodings), and cost sensitivity (via bilinear forms); accommodates structured scenarios such as hierarchical, open-set, and abstaining classification under a uniform notation; and explicitly delimits boundary settings like label ranking.

Substituting operators in this way reveals structural relationships across evaluation schemes, but these properties do not hold unconditionally; our main theoretical results delineate the precise scope of each. First, micro-averaging equals denominator-weighted macro-averaging precisely for the class of \emph{aggregation-decomposable} measures, with their divergence governed by a covariance term. Second, binary skew-invariance transfers unconditionally to multilabel evaluation, but only approximately to multiclass settings due to one-vs-rest negative mixtures. Third, a closed-form Bayes-optimal decision threshold exists precisely when the measure has a linear numerator and a prediction-independent denominator. Fourth, micro-precision, micro-recall, and micro-$F_1$ collapse identically onto accuracy in multiclass problems. Finally, lower boundary correctness fails under one-hot multiclass encoding whenever a measure with zero true positives retains a dependence on true negatives, establishing non-trivial performance floors even for completely incorrect classifiers.

We evaluate these theoretical predictions through synthetic simulations and empirical benchmarks spanning multiclass, multilabel, and medical screening domains. These experiments illustrate how aggregation choices and edge-case conventions induce rank reversals in practice, validate the role of probability calibration in finite-sample Bayes thresholding, and demonstrate that empirical observations match all derived algebraic identities and performance floors to numerical precision.

The remainder of the paper is organised as follows. Section~\ref{sec:related} reviews related work. Section~\ref{sec:binary} fixes notation and the four counts in the binary case. Section~\ref{sec:matrix} introduces the indicator-matrix representation and the three aggregation operators, and Section~\ref{sec:extensions} uses them to extend binary measures to multiclass and multilabel problems. Section~\ref{sec:soft} treats the remaining settings (soft and uncertain ground truth, ordinal and hierarchical structure, cost sensitivity, abstention, open-set and single-class recognition, label ranking, and partial labels) as special cases of the same construction. Section~\ref{sec:theory} contains the theoretical results, Section~\ref{sec:experiments} the empirical illustrations, and Section~\ref{sec:conclusion} concludes.

\section{Related Work}
\label{sec:related}

Research on classification performance evaluation can be grouped into five broad streams: taxonomies of binary measures, axiomatic properties and measure selection criteria, multiclass and multilabel extensions, unified decision-theoretic and geometric frameworks, and non-standard settings such as cost sensitivity or label uncertainty. We position our contribution with respect to each.

The first stream consists of systematic surveys of existing measures and their properties. Sokolova and Lapalme~\cite{DBLP:journals/ipm/SokolovaL09} analyse twelve binary measures across five properties (consistency, correctness, completeness, reliability, and expected behaviour) and extend a subset to multiclass and multilabel settings. Ferri et al.~\cite{DBLP:journals/prl/FerriHM09} conduct an experimental comparison of measures on synthetic and real data, showing that rankings of classifiers are highly sensitive to measure choice. Prati et al.~\cite{DBLP:journals/tkde/PratiBM11} survey graphical methods for performance evaluation and their relationship to scalar measures. While these studies provide valuable taxonomies, they analyse measures individually and do not provide a generative mechanism for producing multiclass or multilabel versions automatically. In particular, the property framework of Sokolova and Lapalme must be applied separately to each measure and each extension. In contrast, the analysis developed in Section~\ref{sec:properties} of this paper (Propositions~\ref{prop:preserved} and~\ref{prop:lower_boundary}) settles two of their key properties (boundary correctness and monotonicity) for all three aggregation operators simultaneously, covering every derived measure in Table~\ref{tab:extensions} at once. In doing so, we show that lower boundary correctness fails under one-hot multiclass encoding for measures whose value at $tp = 0$ still depends on the true-negative count.

A second line of work examines axiomatic properties and measure selection, asking not what properties a measure has in isolation but how measures rank confusion matrices, and therefore which measure one should choose for a given problem. Parker~\cite{parker2011analysis} analyses binary performance measures by the orderings they induce over classifiers. Luque et al.~\cite{luque2019impact} quantify how class imbalance distorts measures computed from the binary confusion matrix, and separately characterise their symmetry properties. Brzezinski et al.~\cite{brzezinski2018visual} develop a visual methodology for comparing measures over the space of confusion matrices under imbalance. G{\"o}sgens et al.~\cite{gosgens2021good} take an axiomatic approach, proposing properties a good classification measure should satisfy and deriving which measures satisfy them; their \emph{constant baseline} property, which requires random assignments to receive the same expected score regardless of how many objects are assigned to each class, is closely related to the notion of skew-invariance we formalise in Section~\ref{sec:skew}. Canbek et al.~\cite{canbek2021benchmetrics} provide a systematic benchmarking methodology for binary measures and a comprehensive catalogue and knowledge representation of them~\cite{canbek2022ptopi}. Shirdel et al.~\cite{shirdel2024worthiness} introduce a ``worthiness benchmark'' that evaluates a metric by how well its ordering of confusion matrices matches a reference ordering.

This literature complements our framework through a clear distinction in focus. While measure-selection studies ask which metric to prefer for a given task, our framework takes a measure as given and investigates how it extends across settings, which identities emerge, and which properties are preserved. The two approaches inform each other: our aggregation operators systematically extend binary measures to broader settings where properties may silently fail, while our theoretical results prevent selection criteria from treating functionally dependent variants as independent candidates as precluded by Theorem~\ref{thm:micro_macro} (Section~\ref{sec:decomposable}), or overlooking the collapse of micro-measures onto accuracy established in Theorem~\ref{thm:micro_collapse} (Section~\ref{sec:redundancy}). Rather than recommending a specific measure, our contribution is structural: formalising how performance measures relate to one another across classification settings.

A third stream of research addresses the extension of binary measures to multiclass and multilabel domains. The idea of decomposing multiclass evaluation into per-class binary problems dates at least to Sebastiani~\cite{sebastiani2002machine} in the context of text categorisation, where per-class $tp_j$, $fp_j$, $fn_j$, $tn_j$ counts and their micro and macro averages are defined. Tsoumakas and Katakis~\cite{DBLP:journals/jdwm/TsoumakasK07} and Zhang and Zhou~\cite{zhang2014review} survey multilabel classification and catalogue example-based and label-based measures, including Hamming loss and exemplar precision and recall. Godbole and Sarawagi~\cite{godbole2004discriminative} extend precision, recall and $F_1$ to multilabel problems using set-intersection semantics. Our framework subsumes all of these extensions: as shown in Table~\ref{tab:extensions}, they correspond to specific choices of aggregation operator and measure formula. Moreover, our framework does so for any binary measure simultaneously, rather than for specific measures individually.

A fourth area of study pursues unified views of performance evaluation. The closest in spirit to our approach is that of Hern\'andez-Orallo et al.~\cite{DBLP:journals/jmlr/Hernandez-OralloFF12}, who propose a unified view of performance metrics as expected classification losses under threshold variation, covering binary and cost-sensitive settings via cost curves and the ROC framework. Their approach focuses on continuous score-based classifiers and the relationship between thresholds and loss functions, whereas ours operates at the level of the confusion matrix and provides an explicit algebraic mechanism for generating multiclass and multilabel measures. The two frameworks are therefore complementary: theirs gives a decision-theoretic unification for binary and cost-sensitive cases, ours gives an algebraic unification across all classification settings. Kautz et al.~\cite{DBLP:journals/pr/KautzEP17} propose a generic multiclass performance measure derived from geometrical considerations on the confusion matrix, while Carbonero-Ruz et al.~\cite{DBLP:journals/isci/Carbonero-RuzMF17} propose a two-dimensional accuracy-based measure. Both proposals target specific gaps in multiclass evaluation rather than a generative framework.

Finally, a fifth stream tackles non-standard evaluation settings, notably cost sensitivity and label uncertainty. Cost-sensitive evaluation has a long history~\cite{elkan2001foundations,drummond2006cost,DBLP:journals/jmlr/Hernandez-OralloFF12}, typically addressed by incorporating misclassification costs directly into the learning objective or by post-hoc reweighting of confusion matrix entries. Our framework formalises cost sensitivity via a cost matrix that acts as a bilinear form on the indicator matrices in Eq.~(\ref{eq:total_cost}), unifying misclassification cost, MAE and MSE as special cases via Theorem~\ref{thm:ordinal_cost} (Section~\ref{sec:decomposable}). Evaluation under label uncertainty (arising from annotator disagreement or probabilistic ground truth) has received increasing attention in crowdsourcing~\cite{DBLP:journals/tkde/PratiBM11} and learning from crowds, but prior studies typically aggregate labels into a single hard assignment before evaluation. Our soft-label extension instead propagates uncertainty directly through the confusion matrices via $t$-norms, making the treatment of label ambiguity an explicit, interpretable part of the evaluation protocol. To the best of our knowledge, no prior work connects these settings (multiclass, multilabel, ordinal, cost-sensitive, and uncertain-label) within a single algebraic framework.

To develop this unified algebraic framework across all settings, we adopt the following notational conventions throughout the paper. \emph{Scalars} are denoted by plain lowercase italic letters: $tp$, $fp$, $fn$, $tn$ are aggregate confusion-matrix counts, and $n$, $m$ are dimensions. \emph{Vectors} are denoted by bold lowercase letters ($\mathbold{y}$, $\mathbold{tp}$, etc.) and \emph{matrices} by bold uppercase letters ($\mathbold{Y}$, $\mathbold{TP}$, etc.). Bold uppercase matrices $\mathbold{TP}$, $\mathbold{FP}$, $\mathbold{FN}$, $\mathbold{TN}$ appear in text when referring to the matrices as objects; subscripted entries $\mathbold{TP}_{i,j}$ are used in proofs where a single element is meant.

More specifically: in the binary case (Section~\ref{sec:binary}), $\mathbold{tp} \in \{0,1\}^n$ is the element-wise true-positive indicator vector and $tp = \Upsigma_{\mathrm{1}}(\mathbold{tp}) \in \mathbb{Z}_{\geq 0}$ its scalar sum. In the multiclass and multilabel case, $\mathbold{TP} \in \{0,1\}^{n \times m}$ is the true-positive indicator matrix. Applying $\Upsigma_{\mathrm{m}}$ yields a per-class count vector $\mathbold{tp} \in \mathbb{Z}_{\geq 0}^m$; applying $\Upsigma_{\mathrm{n}}$ yields a per-example count vector $\mathbold{tp} \in \mathbb{Z}_{\geq 0}^n$; and applying $\Upsigma_{\mathrm{1}}$ yields the global scalar $tp \in \mathbb{Z}_{\geq 0}$. The type of $\mathbold{tp}$ (indicator vector, count vector, or scalar) is determined by which aggregation operator has been applied. The same conventions hold for $\mathbold{fp}$, $\mathbold{fn}$, $\mathbold{tn}$ and their uppercase counterparts. Actual and predicted label matrices are $\mathbold{Y}$ and $\mathbold{\hat{Y}}$; their binary-case vector counterparts are $\mathbold{y}$ and $\mathbold{\hat{y}}$.

\section{The binary case}
\label{sec:binary}

Consider a binary classification problem with a test set\footnote{We assume a single fixed test set throughout this section. The extension to resampling procedures such as cross-validation, where predictions are aggregated across multiple test folds, is formalised in Section~\ref{sec:cv}.} of $n$ examples, where each example is associated with one of two predefined, non-overlapping classes $c_1$ and $c_2$. Without loss of generality, these two classes are also called \textit{positive} and \textit{negative} classes, respectively, and denoted as $c_+$ and $c_-$. A binary classifier will assign a predicted class to each test example. Most of the performance evaluation measures are based on the cross-tabulation of actual and predicted class values (the Confusion Matrix), as shown in Table~\ref{tab:confusion_matrix}, where true positives/negatives ($tp/tn$) correspond to the counts of examples that belong to the positive/negative class and are correctly classified as positive/negative; false positives/negatives ($fp/fn$) correspond to the counts of examples that belong to the positive/negative class and are incorrectly classified as positive/negative; actual positives/negatives ($tpos/tneg$) correspond to the total numbers of examples of the positive/negative class; predicted positives/negatives ($ppos/pneg$) correspond to the total numbers of examples predicted as positive/negative; and the total number of examples ($n$) is the sample size.

\begin{table}[!t]
\centering
\small
\begin{tabular}{cc|cc|c}
\multicolumn{2}{c}{}
 & \multicolumn{2}{c}{Predicted} \\
 &  & \textit{positive} & \textit{negative}\\
\cline{2-5}
\multirow{3}{*}{\rotatebox[origin=c]{90}{Actual}}
 & \textit{positive} & $tp$ & $fn$ & $tpos$ \\
 & \textit{negative} & $fp$ & $tn$ & $tneg$ \\
\cline{2-5}
 &  & $ppos$ & $pneg$ & $n$
\end{tabular}
\caption{Binary confusion matrix.}
\label{tab:confusion_matrix}
\end{table}

Different performance measures are derived from the cells of the confusion matrix, each encoding a different evaluation priority. Table~\ref{tab:measures} lists the eleven measures used throughout the paper: some treat both classes symmetrically (accuracy, error rate), some focus on the positive class (precision, recall, $F_\beta$, Jaccard), some on the negative class (specificity), some measure balance across both classes (balanced accuracy, G-mean), and some capture global agreement (MCC~\cite{matthews1975comparison}, Cohen's~$\kappa$~\cite{cohen1960coefficient}).

\begin{table}[!htb]
\centering
\small
\setlength{\tabcolsep}{4pt}
\begin{tabular}{c|c|p{5.4cm}|c}
\hline
 \textbf{Measure} & \textbf{Formula} & \textbf{Interpretation} & \textbf{Dec.}\\
 \hline
 Accuracy & $\dfrac{tp+tn}{n}$ & Proportion of correctly classified examples & \checkmark\\
 Error Rate & $\dfrac{fp+fn}{n}$ & Proportion of incorrectly classified examples & \checkmark\\
 Precision & $\dfrac{tp}{tp+fp}$ & Proportion of predicted positives that are truly positive & \checkmark\\
 Recall & $\dfrac{tp}{tp+fn}$ & Proportion of actual positives correctly classified & \checkmark\\
 Specificity & $\dfrac{tn}{tn+fp}$ & Proportion of actual negatives correctly classified & \checkmark\\[8pt]
 $F_\beta$-measure ($\beta\!=\!1$: $F_1$) & $\dfrac{(1+\beta^2)\,tp}{(1+\beta^2)\,tp+\beta^2 fn+fp}$ & Weighted harmonic mean of precision and recall & \checkmark\\[2pt]
 Jaccard index & $\dfrac{tp}{tp+fp+fn}$ & Intersection over union of predicted and actual positives & \checkmark\\[2pt]
 Balanced accuracy & $\dfrac{1}{2}\!\left(\dfrac{tp}{tp+fn}+\dfrac{tn}{tn+fp}\right)$ & Average of recall and specificity & --\\[10pt]
 G-mean & $\sqrt{\dfrac{tp}{tp+fn}\cdot\dfrac{tn}{tn+fp}}$ & Geometric mean of recall and specificity & --\\[10pt]
 MCC~\cite{matthews1975comparison} & $\dfrac{tp \cdot tn - fp \cdot fn}{\sqrt{(tp+fp)(tp+fn)(tn+fp)(tn+fn)}}$ & Correlation between observed and predicted classifications & --\\[5pt]
 Cohen's $\kappa$~\cite{cohen1960coefficient} & $\dfrac{p_o - p_e}{1 - p_e}$, \; $p_o\!=\!\frac{tp+tn}{n}$, \; $p_e\!=\!\frac{tpos\cdot ppos + tneg\cdot pneg}{n^2}$ & Agreement corrected for chance & --\\[2pt]
 \hline
\end{tabular}
\caption{Binary performance measures used as the basis for the framework extensions in Table~\ref{tab:extensions}. The \textbf{Dec.} column indicates whether the measure is \emph{aggregation-decomposable} in the sense of Definition~\ref{def:decomposable}, i.e.\ a ratio of two homogeneous linear forms in the counts. All eleven admit the three extensions of Eqs.~(\ref{eq:micro_ext})--(\ref{eq:example_ext}), with the qualification that Cohen's $\kappa$ requires auxiliary marginals because $p_e$ does not factor through the four counts (Section~\ref{sec:notable}); decomposability is required only for the identities relating those extensions to one another (established in Section~\ref{sec:theory}: Theorem~\ref{thm:micro_macro} and Corollaries~\ref{cor:micro_macro_balance}, \ref{cor:cost_micro_macro} and~\ref{cor:gap}).}
\label{tab:measures}
\end{table}

Although some of these measures, such as accuracy or error rate, extend naturally to multiclass, multilabel, ordinal, and other settings, others focus on a single class and require an averaging scheme to be applied. The wide variety of possible extensions and aggregation choices motivates the unified framework developed in the remainder of this paper.

To provide a unified view across all classification settings, we formulate these measures in terms of binary indicator vectors. Let $\mathbold{y}$ be a binary vector where, for each example $e_i$, $y_i = \mathbbm{I}(ActualClass(e_i) = c_+)$ for $i \in [1,n]$, and $\mathbbm{I}(\cdot)$ is an indicator function whose value is one if its argument is true and zero otherwise. Similarly, let $\mathbold{\hat{y}}$ be a binary indicator vector of size $n$ defined as $\hat{y}_i = \mathbbm{I}(PredictedClass(e_i) = c_+)$.

\begin{mydef}
\label{def:vectors}
Based on those two vectors, we can define the indicator vectors $\mathbold{tp}$, $\mathbold{fp}$, $\mathbold{fn}$ and $\mathbold{tn}$ as follows:
\begin{align*}
\mathbold{tp} & = \mathbold{y}\wedge\mathbold{\hat y}\\
\mathbold{fp} & = \neg\mathbold{y}\wedge\mathbold{\hat y}\\
\mathbold{fn} & = \mathbold{y}\wedge\neg\mathbold{\hat y}\\
\mathbold{tn} & = \neg\mathbold{y}\wedge\neg\mathbold{\hat y}
\end{align*}
\noindent where $\wedge$ and $\neg$ are element-wise Boolean operators \textsc{and} and \textsc{not}, respectively.
\end{mydef}

Each vector contains an entry of one precisely where the corresponding outcome occurs for example $e_i$. The scalar confusion-matrix counts are obtained by applying the global summation operator $\Upsigma_{\mathrm{1}}$, where the subscript indicates that the aggregation yields a single scalar value: $tp = \Upsigma_{\mathrm{1}}(\mathbold{tp})$, and analogously $fp = \Upsigma_{\mathrm{1}}(\mathbold{fp})$, $fn = \Upsigma_{\mathrm{1}}(\mathbold{fn})$, and $tn = \Upsigma_{\mathrm{1}}(\mathbold{tn})$. Although seemingly elaborate in the binary case, this notation provides the necessary foundation for connecting binary measures to multiclass and multilabel problems. Furthermore, it enables algebraic manipulations that reveal new structural properties of standard measures. For instance, the error rate can be rewritten as:
\begin{align}
\text{\textit{error rate}} & = \frac{fp+fn}{n} = \frac{\Upsigma_{\mathrm{1}}(fp)+\Upsigma_{\mathrm{1}}(fn)}{n} \label{eq:er1}\\
 & = \frac{\Upsigma_{\mathrm{1}}(\neg\mathbold{y}\wedge\mathbold{\hat y})+\Upsigma_{\mathrm{1}}(\mathbold{y}\wedge\neg\mathbold{\hat y})}{n}\label{eq:er2}\\
 & = \frac{\Upsigma_{\mathrm{1}}((\neg\mathbold{y}\wedge\mathbold{\hat y})\vee(\mathbold{y}\wedge\neg\mathbold{\hat y}))}{n} \label{eq:er3}\\
 & = \frac{\Upsigma_{\mathrm{1}}(\mathbold{y}\oplus\mathbold{\hat y})}{n}\label{eq:er4}
\end{align}
where $\vee$ and $\oplus$ are the element-wise \textsc{or} and \textsc{xor} Boolean operators, respectively. Eq.~(\ref{eq:er2}) is obtained from Eq.~(\ref{eq:er1}) by substituting the vectorial definitions of $fp$ and $fn$. As these vectors are mutually exclusive, Eq.~(\ref{eq:er2}) can be rewritten as Eq.~(\ref{eq:er3}), and Eq.~(\ref{eq:er4}) follows by Boolean algebra. In other words, the error rate equals $\Upsigma_{\mathrm{1}}(\mathbold{y} \oplus \mathbold{\hat{y}})/n$, the normalised count of disagreements between the actual and predicted indicator vectors. This formulation generalises directly to the multilabel matrix setting, recovering Hamming loss~\cite{DBLP:journals/jdwm/TsoumakasK07} as established in Proposition~\ref{prop:hamming_accuracy} of Section~\ref{sec:redundancy}: $\mathrm{HammingLoss} = \Upsigma_{\mathrm{1}}(\mathbold{Y} \oplus \mathbold{\hat{Y}})/(nm)$.

\section{The Indicator-Matrix Representation}
\label{sec:matrix}

In multiclass problems, each example is associated with only one of the $m>2$ classes $\in \{c_1, \dots, c_m\}$. Furthermore, in multilabel problems, each example can be associated with one or more of the $m\geq 2$ labels $\in \{l_1,\dots,l_m\}$. In multiclass problems, classes are mutually exclusive, as an example cannot belong to more than one class. This does not apply to multilabel problems, where more than one label can be associated to each example. 

While some binary measures extend naturally to these settings, others focus on a single class or label and require an aggregation scheme to summarise results across classes, labels, or examples. We formalise the three most common schemes below, after introducing the matrix representation they operate on.

The binary indicator vectors of Section~\ref{sec:binary} extend directly to a matrix representation. The actual classes or labels of example $e_i$ are encoded in the binary indicator matrix $\mathbold{Y} \in \{0,1\}^{n \times m}$, with $n$ rows (examples) and $m$ columns (classes or labels), where $Y_{i,j} = \mathbbm{I}(ActualClass(e_i) = c_j)$ in the multiclass case and $Y_{i,j} = \mathbbm{I}(l_j \text{ is relevant for } e_i)$, for $i \in [1,\dots,n]$ and $j \in [1,\dots,m]$. Similarly, the predicted classes or the predicted labels are represented by the binary indicator matrix $\mathbold{\hat{Y}}$, with $n$ rows and $m$ columns where $\hat{Y}_{i,j} = \mathbbm{I}(PredictedClass(e_i) = c_j)$ in the multiclass case and $\hat{Y}_{i,j} = \mathbbm{I}(l_j \text{ is predicted relevant for } e_i)$.

For multiclass problems, this representation is equivalent to the one-hot encoding scheme, where a binary indicator column vector is used to represent each class. The column vectors are stacked together to form the binary indicator matrix, and for each row only one entry is equal to one. For multilabel problems, it is common to associate with each example a binary vector where the value one is used to represent relevant labels for that example, and zero for non-relevant labels. Unlike the multiclass case, more than one entry per row may equal one.

\begin{mydef}
\label{def:matrices}
Similarly to the binary class case, we can define $\mathbold{TP}$, $\mathbold{FP}$, $\mathbold{FN}$ and $\mathbold{TN}$ matrices in terms of the matrices $\mathbold{Y}$ and $\mathbold{\hat Y}$ as follows:

\begin{align*}
\mathbold{TP} & = \mathbold{Y}\wedge\mathbold{\hat Y}\\
\mathbold{FP} & = \neg\mathbold{Y}\wedge\mathbold{\hat Y}\\
\mathbold{TN} & = \neg\mathbold{Y}\wedge\neg\mathbold{\hat Y}\\
\mathbold{FN} & = \mathbold{Y}\wedge\neg\mathbold{\hat Y}
\end{align*}

\noindent where $\wedge$ and $\neg$ are element-wise Boolean operators \textsc{and} and \textsc{not}, respectively. 
\end{mydef}

These matrices are also binary indicator matrices, where an entry of one appears only where a true/false positive/negative case occurs for example $e_i$ and class $c_j$ or label $l_j$.

The true/false positive/negative counts can also be obtained by summing the number of ones in the indicator matrices. However, unlike the binary case where the resulting operations are binary indicator vectors and only a single aggregation function is generally applied, for multiclass and multilabel cases we can define three different aggregation functions to be applied:

\begin{itemize}
\item\textbf{Global aggregation $\Upsigma_{\mathrm{1}}$} which returns the sum of ones in the entire resulting matrix.
\item\textbf{Column-wise aggregation $\Upsigma_{\mathrm{m}}$}, which returns a vector of length $m$ corresponding to the column-wise sum of ones in the resulting matrix, giving counts per class or per label; and 
\item\textbf{Row-wise aggregation $\Upsigma_{\mathrm{n}}$}, which returns a vector of length $n$ corresponding to the row-wise sum of ones in the resulting matrix, giving counts per example.
\end{itemize}

These aggregation functions can be used to define the different averaging versions of performance measures. 

\begin{mydef}
Micro-averaging can be defined in terms of $\Upsigma_{\mathrm{1}}$. Computing $tp = \Upsigma_{\mathrm{1}}(\mathbold{TP})$, $fp = \Upsigma_{\mathrm{1}}(\mathbold{FP})$, $tn = \Upsigma_{\mathrm{1}}(\mathbold{TN})$ and $fn = \Upsigma_{\mathrm{1}}(\mathbold{FN})$ is equivalent to the computing of the global counts, as these matrices will have as many ones as the number of true/false positive/negative counts. 
\end{mydef}

\begin{mydef}
Macro-averaging and weighted averaging can be defined in terms of $\Upsigma_{\mathrm{m}}$. By computing $\mathbold{tp} = \Upsigma_{\mathrm{m}}(\mathbold{TP})$, $\mathbold{fp} = \Upsigma_{\mathrm{m}}(\mathbold{FP})$, $\mathbold{tn} = \Upsigma_{\mathrm{m}}(\mathbold{TN})$ and $\mathbold{fn} = \Upsigma_{\mathrm{m}}(\mathbold{FN})$, we end up with vectors of length $m$ where each position corresponds to the per-class or per-label counts of true/false positive/negative rates. These vectors can be used to compute a vectorized version of the performance measures, obtaining a vector of length $m$ where each position corresponds to a per-class or per-label measure. Macro averaging corresponds to the unweighted average of this vector, while weighted averaging uses the class or label prevalence to compute a weighted average of this vector.
\end{mydef}

\begin{mydef}
Exemplar-averaging can be defined in terms of $\Upsigma_{\mathrm{n}}$. By computing $\mathbold{tp} = \Upsigma_{\mathrm{n}}(\mathbold{TP})$, $\mathbold{fp} = \Upsigma_{\mathrm{n}}(\mathbold{FP})$, $\mathbold{tn} = \Upsigma_{\mathrm{n}}(\mathbold{TN})$ and $\mathbold{fn} = \Upsigma_{\mathrm{n}}(\mathbold{FN})$, we end up with vectors of length $n$ where each position corresponds to the per-instance counts of true/false positive/negative rates. These vectors can be used to compute a vectorized version of the performance measures, obtaining a vector of length $n$ where each position corresponds to a per-instance measure. Exemplar averaging corresponds to the unweighted average of this vector.
\end{mydef}

\section{Aggregation Operators and Measure Extension}
\label{sec:extensions}

The central contribution of the proposed framework is that any binary performance measure $M$ expressible as a function of $tp$, $fp$, $fn$ and $tn$ can be extended to multiclass and multilabel settings automatically, by replacing those scalar counts with the output of the three aggregation operators. Concretely, let $M(tp, fp, fn, tn)$ denote a generic binary measure. Its three extensions are:

{\small
\begin{align}
M^{\mathrm{micro}} &= M\!\bigl(
  \Upsigma_{\mathrm{1}}(\mathbold{TP}),\;
  \Upsigma_{\mathrm{1}}(\mathbold{FP}),\;
  \Upsigma_{\mathrm{1}}(\mathbold{FN}),\;
  \Upsigma_{\mathrm{1}}(\mathbold{TN})\bigr) \label{eq:micro_ext}\\
M_j^{\mathrm{class}} &= M\!\bigl(
  [\Upsigma_{\mathrm{m}}(\mathbold{TP})]_j,\;
  [\Upsigma_{\mathrm{m}}(\mathbold{FP})]_j,\;
  [\Upsigma_{\mathrm{m}}(\mathbold{FN})]_j,\;
  [\Upsigma_{\mathrm{m}}(\mathbold{TN})]_j\bigr) \label{eq:class_ext}\\
M_i^{\mathrm{example}} &= M\!\bigl(
  [\Upsigma_{\mathrm{n}}(\mathbold{TP})]_i,\;
  [\Upsigma_{\mathrm{n}}(\mathbold{FP})]_i,\;
  [\Upsigma_{\mathrm{n}}(\mathbold{FN})]_i,\;
  [\Upsigma_{\mathrm{n}}(\mathbold{TN})]_i\bigr) \label{eq:example_ext}
\end{align}
}

\noindent Eq.~(\ref{eq:micro_ext}) gives the micro-average version of $M$; Eq.~(\ref{eq:class_ext}) gives a per-class (or per-label) scalar whose unweighted or prevalence-weighted average yields the macro or weighted-average version; and Eq.~(\ref{eq:example_ext}) gives a per-example scalar whose average yields the exemplar version. The framework thus acts as a measure generator: one binary formula, three aggregation operators, and an entire family of multiclass and multilabel measures follows automatically.

It is important to delimit the scope of this claim precisely, because two logically distinct statements are easily conflated. The generative claim is that Eqs.~(\ref{eq:micro_ext})--(\ref{eq:example_ext}) are well defined for every measure $M$ that is a function of $tp$, $fp$, $fn$ and $tn$: each of the three operators returns a legitimate quantity, and the resulting extensions inherit monotonicity and upper boundary correctness from $M$, as established in Section~\ref{sec:properties} (Proposition~\ref{prop:preserved}). This claim requires no restriction on the functional form of $M$, though lower boundary correctness is an exception: as shown in Proposition~\ref{prop:lower_boundary} (Section~\ref{sec:properties}), it survives in the binary and multilabel settings but fails under one-hot multiclass encoding for measures whose value at $tp = 0$ still depends on $tn$. A separate and stronger family of claims concerns the algebraic relationships among the three extensions, such as whether the micro extension coincides with a particular weighted macro extension. Those relationships do not hold for arbitrary $M$; they require that the numerator and denominator of $M$ each aggregate additively across classes or examples before the ratio is taken. Section~\ref{sec:decomposable} isolates the exact class of measures for which this holds, and every result in Section~\ref{sec:theory} that depends on it is stated under that hypothesis.

A catalogue of more than 50 binary measures together with their micro, macro and exemplar extensions is provided as supporting information. We emphasize that this catalogue should be interpreted with caution. It demonstrates the coverage of the framework: for each entry the three extensions are well defined, computable, and inherit the properties established in Section~\ref{sec:properties}. It does not follow that every resulting combination is suitable or recommended for practical use. Enumerating algebraically valid combinations is not evidence that they are meaningful, interpretable, or useful for any particular task, and establishing that would require exactly the kind of task-specific analysis pursued in the measure-selection literature discussed in Section~\ref{sec:related}, rather than an enumeration. Many entries will be redundant in the precise senses identified in Section~\ref{sec:redundancy}, some are degenerate in the multiclass case for the reasons given in Section~\ref{sec:properties}, and others have no evident use case. We therefore describe the catalogue as an inventory of what the framework generates, and we mark entries that lack an established name in the literature as such, without claiming that the resulting inventory constitutes a comparable number of useful new measures. Where the framework does make a substantive claim about a particular combination, we justify it on its own merits rather than by sheer enumeration, as with the identifications of mIoU with macro-Jaccard and of the Dice coefficient with $F_1$ in Section~\ref{sec:cv_metrics}.

A practical consideration in applying Eqs.~(\ref{eq:class_ext}) and~(\ref{eq:example_ext}) concerns the treatment of undefined entries. An individual entry can be undefined when its denominator vanishes: $\mathrm{rec}_j$ when class $j$ is absent from the test set, $\mathrm{prec}_j$ when class $j$ is never predicted, $F_{\beta,j}$ when both hold, and per-example $F_1$ for an example with neither true nor predicted labels. Because this must be resolved before averaging, and because the resolution changes the reported value, we fix the convention here rather than leaving it implicit. Let $\mathcal{J} = \{j : den_j > 0\}$ be the set of classes for which the measure is defined. Throughout this paper, macro- and exemplar-averaging are defined as averages over the defined entries only:
\begin{equation}
M^{\mathrm{macro}} = \frac{1}{|\mathcal{J}|}\sum_{j \in \mathcal{J}} M_j^{\mathrm{class}},
\qquad
M^{\mathrm{exemplar}} = \frac{1}{|\mathcal{I}|}\sum_{i \in \mathcal{I}} M_i^{\mathrm{example}},
\label{eq:undef_convention}
\end{equation}
with $\mathcal{I}$ defined analogously over examples. This differs from the common alternative of substituting $0$ (the default in scikit-learn) or $1$ for undefined entries, and the difference is not negligible: Section~\ref{sec:degenerate} quantifies it, shows that the three operators are affected asymmetrically, and explains why we prefer the exclusion convention. Any reported macro-average should state which convention was used; for the experiments of Section~\ref{sec:experiments} we record in Section~\ref{sec:real_exp} that the question does not arise for the measures whose rankings we report.

A practical concern with such a generative approach is that computability alone is not enough: while Eqs.~(\ref{eq:micro_ext})--(\ref{eq:example_ext}) always return a numeric value, a practitioner also needs to know which theoretical guarantees apply to a given measure, which might seem to require a separate mathematical derivation in each case. It does not. Each property analysed in Section~\ref{sec:theory} is decidable directly from the binary formula and can be settled mechanically:

\begin{itemize}\itemsep2pt
\item \emph{Decomposability (Section~\ref{sec:decomposable}).} $M$ admits a linear-fractional form $M(\mathbold{v}) = \langle \mathbold{a}, \mathbold{v}\rangle / \langle \mathbold{b}, \mathbold{v}\rangle$ if and only if $M(\mathbold{v})\,\langle \mathbold{b}, \mathbold{v}\rangle - \langle \mathbold{a}, \mathbold{v}\rangle = 0$ for all count vectors $\mathbold{v} = (tp, fp, fn, tn)$. Evaluating $M$ on a sample of count vectors turns this into a homogeneous linear system in the eight unknowns $(\mathbold{a}, \mathbold{b})$, whose solution is the right singular vector of least singular value. Validating the recovered coefficients on count vectors not used in the fit both decides the property and, when it holds, supplies the weights required by Theorem~\ref{thm:micro_macro} and Corollary~\ref{cor:gap}.
\item \emph{Skew-invariance and boundary correctness (Sections~\ref{sec:skew} and~\ref{sec:properties}).} Skew-invariance is decided by testing Definition~\ref{def:skew} directly, while the $tn$-dependence at $tp = 0$ governing the multiclass lower-boundary failure of Proposition~\ref{prop:lower_boundary} is checked by varying $tn$ with $tp$ held at zero.
\item \emph{Bayes consistency and attainable floors (Sections~\ref{sec:consistency} and~\ref{sec:properties}).} The optimal Bayes threshold of Proposition~\ref{prop:consistency} and the one-hot floors of Table~\ref{tab:floors} follow in closed form from the recovered linear coefficients and from $m$.
\end{itemize}

This automated verification makes the framework directly testable: the computational procedure must reproduce the analytical properties derived by hand. Applied to the eleven measures of Table~\ref{tab:measures} and the ten further measures in the accompanying implementation, it recovers every property established analytically in this paper, with the decomposability coefficients accurate to machine precision. The automated verification also highlights subtle cases that are easily misjudged by manual inspection: both error rate and the false positive rate retain a dependence on $tn$ at $tp = 0$, placing them within the scope of the multiclass boundary distortion in Proposition~\ref{prop:lower_boundary}. Under one-hot encoding, a wholly incorrect classifier attains an error rate of $2/m$ and a false positive rate of $1/(m-1)$, values that approach zero as $m$ grows rather than reflecting total misclassification. While numerical verification is not a substitute for formal proof, it provides a practical tool to immediately assess the applicability of these theoretical results to arbitrary performance measures without manual derivations. An implementation is available in the companion software package~\cite{prati2026imframework}.

As a concrete illustration of the generative mechanism and its decidability properties, consider \emph{informedness}~\cite{powers2011evaluation} (also known as Youden's~$J$), defined as $\mathrm{rec} + \mathrm{spec} - 1$. This measure does not appear in Table~\ref{tab:extensions}, yet its three extensions follow immediately from Eqs.~(\ref{eq:micro_ext})--(\ref{eq:example_ext}): micro-informedness pools global counts before computing $\mathrm{rec} + \mathrm{spec} - 1$; macro-informedness averages per-class values $\mathrm{rec}_j + \mathrm{spec}_j - 1$ over all $m$ classes; and exemplar-informedness averages the per-example analogue. None of these extensions has an established named counterpart in the multiclass or multilabel literature. Furthermore, the structural properties of all three extensions follow at once: because informedness is a sum of ratios ($J = 2\,\mathrm{BA} - 1$), it is not aggregation-decomposable (Section~\ref{sec:decomposable}); being a function of recall and specificity alone, its binary and multilabel macro versions are skew-invariant (Theorem~\ref{thm:skew}); and because its value at $tp = 0$ retains a dependence on $tn$ through the specificity term, its one-hot multiclass floor is bounded away from $-1$ at $-1/(m-1)$ (Proposition~\ref{prop:lower_boundary}).

Table~\ref{tab:extensions} applies this recipe to the measures introduced in Table~\ref{tab:measures}, showing the result of each aggregation operator and its established name where one exists; $^\flat$ marks generated combinations for which no standard named version appears in the literature.

\begin{table*}[!t]
\centering
\small
\setlength{\tabcolsep}{6pt}
\begin{tabular}{llll}
\hline
\textbf{Measure} & \textbf{$\Upsigma_{\mathrm{1}}$ (micro)} & \textbf{$\Upsigma_{\mathrm{m}}$ (per-class $\to$ macro)} & \textbf{$\Upsigma_{\mathrm{n}}$ (per-example $\to$ exemplar)} \\
\hline
Precision
  & Micro-precision
  & Macro-precision
  & Exemplar-precision \\

Recall
  & Micro-recall
  & Macro-recall
  & Exemplar-recall \\

Specificity
  & Micro-specificity
  & Macro-specificity
  & Exemplar-specificity \\

$F_\beta$
  & Micro-$F_\beta$
  & Macro-$F_\beta$
  & Exemplar-$F_\beta$ \\

Jaccard
  & Micro-Jaccard
  & Macro-Jaccard (mean IoU)
  & Exemplar-Jaccard \\

Error rate
  & \textbf{Hamming loss}$^\dagger$
  & Per-class error rate
  & Per-example error rate \\

Accuracy
  & Label accuracy$^*$
  & Per-class accuracy
  & Per-example accuracy$^*$ \\

Balanced accuracy
  & Micro-BA
  & Macro-BA
  & Exemplar-BA$^\flat$ \\

G-mean
  & Micro-GM
  & Macro-GM
  & Exemplar-GM$^\flat$ \\

MCC
  & Micro-MCC$^\ddagger$
  & Macro-MCC$^\flat$
  & Exemplar-MCC$^\flat$ \\

Cohen's $\kappa$
  & Micro-$\kappa^\S$
  & Macro-$\kappa^\S$
  & Exemplar-$\kappa^{\flat\S}$ \\

\hline
\end{tabular}

\smallskip
\raggedright
$^\dagger$ Hamming loss~\cite{DBLP:journals/jdwm/TsoumakasK07} $= \Upsigma_{\mathrm{1}}(\mathbold{Y} \oplus \mathbold{\hat{Y}}) / (nm)$; see Section~\ref{sec:binary}.\\
$^*$ See the discussion in Section~\ref{sec:notable}.\\
$^\ddagger$ Related to but distinct from Gorodkin's multiclass MCC; see Section~\ref{sec:notable}.\\
$^\S$ $p_e$ depends on marginal class distributions and does not factor purely into $tp$/$fp$/$fn$/$tn$; an extension is possible but requires auxiliary per-class or per-example marginals (Section~\ref{sec:notable}).\\
$^\flat$ Generated combination without an established named counterpart in the multiclass or multilabel literature.
\caption{Extensions of binary performance measures (see Table~\ref{tab:measures} for binary formulas) to multiclass and multilabel settings via the three aggregation operators. The symbol $^\flat$ marks cases for which no standard named measure exists in the literature.}
\label{tab:extensions}
\end{table*}

\subsection{Analysis of Specific Measures: Hamming Loss, Accuracy, MCC, and Cohen's $\kappa$}
\label{sec:notable}

As shown in Section~\ref{sec:binary}, the micro extension of error rate (namely $\Upsigma_{\mathrm{1}}$ applied to $(fp+fn)/(tp+fp+fn+tn)$) coincides with Hamming loss when normalised by $nm$~\cite{DBLP:journals/jdwm/TsoumakasK07}. This is an instance of a general pattern: measures introduced as specific to multilabel classification often turn out to be micro-averages of simpler binary measures under this framework.

Applying $\Upsigma_{\mathrm{1}}$ to the accuracy formula yields \emph{label accuracy}, $(\Upsigma_{\mathrm{1}}(\mathbold{TP}) + \Upsigma_{\mathrm{1}}(\mathbold{TN})) / (nm)$, which counts correct label assignments across all examples and all labels. This measure appears in the multilabel literature under various names but is rarely connected to binary accuracy explicitly. For multiclass problems (one-hot encoding), the simpler quantity $\Upsigma_{\mathrm{1}}(\mathbold{TP})/n$ recovers standard accuracy directly, since each correctly classified example contributes exactly one entry to $\mathbold{TP}$.

A subtlety arises with the exemplar extension: per-example accuracy equals $(tp_i + tn_i)/m$. In the multiclass one-hot case a correctly classified example has $tp_i = 1$ and $tn_i = m-1$, giving a per-example accuracy of exactly $1$, whereas an incorrectly classified example has $tp_i = 0$, $fp_i = fn_i = 1$ and $tn_i = m-2$, giving $(m-2)/m$. Averaging over examples therefore yields
$$
\mathrm{Accuracy}^{\mathrm{exemplar}} = \frac{m-2}{m} + \frac{2}{m}\,\mathrm{Accuracy},
$$
an increasing affine transformation of standard accuracy rather than accuracy itself, with range $[(m-2)/m,\, 1]$ instead of $[0,1]$. In the multilabel case the exemplar version is a standard example-based accuracy used in the literature. This asymmetry reflects a broader point with consequences developed in Section~\ref{sec:properties}: measures involving $tn$ behave differently under multiclass one-hot encoding than under multilabel encoding, because an incorrectly classified example still contributes $m-2$ true negatives. Practitioners should be aware of this when applying measures that involve $tn$ in multiclass settings, since their effective range contracts as $m$ grows.

MCC is particularly interesting because its extension to multiclass settings has been debated in the literature, with competing proposals~\cite{gorodkin2004comparing,DBLP:journals/jmlr/Hernandez-OralloFF12}. The present framework provides a natural and interpretable resolution. Applying $\Upsigma_{\mathrm{m}}$ yields a per-class MCC for each one-vs-rest decomposition; its unweighted average is a \emph{macro-MCC} whose value is directly interpretable as the average binary correlation across classes. Applying $\Upsigma_{\mathrm{1}}$ pools all counts globally before computing the correlation, giving a \emph{micro-MCC} that is related to Gorodkin's proposal but operates on the binary indicator matrices rather than the full $m \times m$ confusion matrix. Applying $\Upsigma_{\mathrm{n}}$ gives a per-example MCC for multilabel problems, for which no standard named version currently exists.

Cohen's $\kappa$~\cite{cohen1960coefficient} requires additional care because the expected agreement term $p_e = (tpos \cdot ppos + tneg \cdot pneg)/n^2$ depends on marginal class distributions, so it does not factor purely into $tp$, $fp$, $fn$, $tn$. Within the matrix framework, however, these marginals are row and column sums of $\mathbold{Y}$ and $\mathbold{\hat{Y}}$, and $p_e$ can be computed separately for each class column or each example row alongside the standard counts. The three aggregation schemes can then be applied, at the cost of tracking these auxiliary marginals alongside the standard $tp$, $fp$, $fn$, $tn$ counts.

\subsection{Connection to Standard Implementations}
\label{sec:sklearn}

The framework's aggregation operators correspond directly to the \texttt{average} parameter used in widely adopted machine learning libraries such as scikit-learn~\cite{scikit-learn}. Specifically:

\begin{itemize}
  \item\textbf{\texttt{average='micro'}} pools all per-label counts globally before computing the measure, corresponding to $\Upsigma_{\mathrm{1}}$.
  \item\textbf{\texttt{average='macro'}} computes the measure per class and takes the unweighted mean, corresponding to $\Upsigma_{\mathrm{m}}$ with uniform weights.
  \item\textbf{\texttt{average='weighted'}} computes the measure per class and takes a prevalence-weighted mean, corresponding to $\Upsigma_{\mathrm{m}}$ with weights $w_j = tpos_j / n$.
  \item\textbf{\texttt{average='samples'}} computes the measure per example and takes the unweighted mean, corresponding to $\Upsigma_{\mathrm{n}}$.
\end{itemize}

The theoretical results developed in Section~\ref{sec:theory} provide the formal foundation for these four options and their relationships. In particular, for the aggregation-decomposable measures analysed in Section~\ref{sec:decomposable}, Theorem~\ref{thm:micro_macro} identifies \texttt{micro} with the denominator-weighted macro average, so \texttt{micro} and \texttt{weighted} agree exactly when the per-class denominators are proportional to the prevalences $tpos_j/n$ (which holds for recall, where $den_j = tpos_j$, but not in general for precision or $F_\beta$); Corollary~\ref{cor:micro_macro_balance} gives the corresponding condition for agreement between \texttt{micro} and unweighted \texttt{macro}, namely equality of the per-class denominators; Corollary~\ref{cor:gap} establishes the exact magnitude of the micro--macro divergence as a normalised covariance for measures in that class; and the redundancy analysis of Section~\ref{sec:redundancy} (Theorem~\ref{thm:micro_collapse}) explains why \texttt{micro} precision, recall, and $F_1$ are always identical to accuracy in multiclass settings (a fact that practitioners encounter empirically but that lacks an explicit justification in most library documentation). The framework also identifies what is absent: none of the four \texttt{average} options corresponds to cost-sensitive, soft-label, or ordinal evaluation, all of which are covered by the extensions in Section~\ref{sec:soft}.

\section{Extended Settings and Special Cases}
\label{sec:soft}

This section analyses the classification settings beyond the binary domain that the framework reaches. We distinguish three levels of claim and use them consistently throughout, since a survey of this kind can easily read as asserting more than it establishes:

\begin{description}\itemsep2pt
\item[\textnormal{\emph{Derived}}:] the framework yields a specific identity or result for the setting, stated as a theorem, proposition, or explicit construction with its counts worked out. This is the strongest claim we make.
\item[\textnormal{\emph{Accommodated}}:] the setting can be represented by indicator matrices and the three operators apply, but no new result follows and the framework's contribution is a uniform notation rather than a derivation. Most entries in this section are of this kind, and we say so rather than implying otherwise.
\item[\textnormal{\emph{Boundary}}:] the setting cannot be represented without extending the formalism beyond indicator matrices. We identify these explicitly rather than omitting them.
\end{description}

Table~\ref{tab:settings} classifies every setting discussed below. This distinction clarifies the scope of each contribution: accommodated settings demonstrate the expressive generality of the unified matrix notation, whereas derived settings establish specific theoretical results.

\begin{table}[!htb]
\centering
\small
\begin{tabular}{l|l|l}
\hline
\textbf{Setting} & \textbf{Level} & \textbf{Basis} \\
\hline
Soft classifier outputs & derived & Eqs.~(\ref{eq:argmax}), (\ref{eq:threshold}); reduces to the crisp case \\
Soft/uncertain ground truth & derived & $t$-norms; Theorem~\ref{thm:partition}, Corollary~\ref{cor:partition} \\
Cost-sensitive evaluation & derived & Section~\ref{sec:cost}; Corollary~\ref{cor:cost_micro_macro} \\
Ordinal classification & derived & Theorem~\ref{thm:ordinal_cost}; membership encodings \\
Partial and missing labels & derived & masked operators, Eq.~(\ref{eq:masked_agg}) \\
Segmentation metrics & derived & mIoU $=$ macro-Jaccard, Dice $= F_1$ \\
\hline
Abstaining classifiers & accommodated & all-zero rows; counts given explicitly \\
Open-set recognition & accommodated & augmented $(m{+}1)$-column encoding \\
Single-class classification & accommodated & binary, multilabel, or masked encoding \\
One-vs-all decomposition & accommodated & column-wise before argmax, global after \\
Hierarchical classification & accommodated & tree-distance cost matrix \\
Multi-output classification & accommodated & block-structured matrix; $\Upsigma_{\mathrm{T}}$ \\
\hline
One-vs-one decomposition & partial & pairwise matrix requires masking; voting not modelled \\
Positive-unlabeled learning & partial & unlabeled pool conflates $FP$ and $TN$; recall preserved \\
Multi-instance learning & partial & bag level trivial; instance level open \\
\hline
Label ranking & boundary & rank information is relational, not absolute \\
Proper scoring rules & boundary & measure calibration, not discrimination \\
Structured output prediction & boundary & combinatorial dependencies exceed independent label indicators \\
\hline
Resampling folds & derived & $\Upsigma_{\mathrm{k}}$; pooled vs.\ averaged related by Theorem~\ref{thm:micro_macro} \\
\hline
\end{tabular}
\caption{Depth of coverage for each setting in this section. \emph{Derived}: a specific result or identity is established. \emph{Accommodated}: representable, with the operators applying, but no new result. \emph{Partial}: representable only in part, with the remainder left open. \emph{Boundary}: outside the reach of indicator matrices.}
\label{tab:settings}
\end{table}

In what follows, we examine these settings systematically, grouped by the four depth levels established above: derived formulations that yield specific theoretical identities, accommodated paradigms captured through matrix encodings, partially formalized settings, boundary cases beyond two-dimensional indicator matrices, and concluding with resampling fold aggregation across experimental splits.


\subsection{Soft Classifier Outputs and Probabilistic Predictions}
\label{sec:soft_outputs}

So far we have assumed that the classifier produces a hard assignment (a single class in multiclass or a subset of labels in multilabel) for each example. In practice, many classifiers produce soft outputs: a matrix $\mathbold{\hat{P}} \in [0,1]^{n \times m}$, where $\hat{P}_{i,j}$ represents the estimated probability, score, or membership degree of example $e_i$ with respect to class $c_j$ or label $l_j$. The framework accommodates these outputs by converting $\mathbold{\hat{P}}$ into an indicator matrix $\mathbold{\hat{Y}}$ before computing any performance measure.

In multiclass problems, a probabilistic classifier typically produces a matrix $\mathbold{\hat{P}}$ whose rows sum to one (a proper probability distribution over classes). Because the predicted class is the one with highest probability, the indicator matrix is obtained by applying an argmax per row:
\begin{equation}
\hat{Y}_{i,j} = \mathbbm{I}\!\left(j = \underset{k \in [1,m]}{\arg\max}\;\hat{P}_{i,k}\right).
\label{eq:argmax}
\end{equation}
This guarantees that each row of $\mathbold{\hat{Y}}$ has exactly one entry equal to one, preserving the mutual-exclusivity constraint of multiclass problems.

In multilabel problems, or when the classifier outputs a degree of membership rather than a proper probability, each entry $\hat{P}_{i,j}$ is converted independently by thresholding at a value $\theta \in (0,1)$:
\begin{equation}
\hat{Y}_{i,j} = \mathbbm{I}\!\left(\hat{P}_{i,j} \geq \theta\right).
\label{eq:threshold}
\end{equation}
The default choice is $\theta = 0.5$, though $\theta$ can be tuned per label or set to produce a fixed number of predicted labels per example. In the binary case, Eq.~(\ref{eq:threshold}) reduces to the standard score-to-decision rule, so the binary and multilabel formulations are consistent.

Note that the choice of conversion method affects the resulting $\mathbold{\hat{Y}}$ and consequently all derived performance measures. In particular, the argmax in Eq.~(\ref{eq:argmax}) always produces exactly one predicted class per example, while thresholding in Eq.~(\ref{eq:threshold}) may produce zero, one, or several predicted labels. The threshold $\theta$ is therefore an additional degree of freedom that is part of the classifier's decision boundary, not of the evaluation framework itself.

\subsection{Soft and Uncertain Ground Truth: Triangular Norms}
\label{sec:soft_truth}

The discussion so far has assumed that the actual label matrix $\mathbold{Y}$ is binary, reflecting crisp ground-truth assignments. This assumption does not always hold. In crowdsourced annotation, for instance, $Y_{i,j}$ may represent the proportion of annotators who assigned label $l_j$ to example $e_i$~\cite{DBLP:journals/tkde/PratiBM11}. In ordinal or hierarchical classification, partial membership in a class may be inherently meaningful. In knowledge distillation, soft teacher labels are used directly as targets. In all these cases, $\mathbold{Y} \in [0,1]^{n \times m}$ is a \emph{soft label matrix}, and the same applies to $\mathbold{\hat{Y}}$ when the classifier's output is not thresholded.

The framework extends naturally to this setting by replacing the Boolean element-wise operators with their fuzzy counterparts. The key ingredient is a \emph{triangular norm} ($t$-norm), which generalises the Boolean conjunction to the unit interval~\cite{klement2000triangular,zadeh1965fuzzy}.

\begin{mydef}
A $t$-norm is a function $T: [0,1]^2 \to [0,1]$ satisfying, for all $a,b,c \in [0,1]$:
\begin{enumerate}
  \item \emph{Commutativity}: $T(a,b) = T(b,a)$;
  \item \emph{Associativity}: $T(a, T(b,c)) = T(T(a,b),c)$;
  \item \emph{Monotonicity}: $b \leq c \Rightarrow T(a,b) \leq T(a,c)$;
  \item \emph{Boundary condition}: $T(a,1) = a$.
\end{enumerate}
\end{mydef}

Intuitively, $T(a,b)$ measures the degree to which both $a$ and $b$ hold simultaneously. The three most common $t$-norms are shown in Table~\ref{tab:tnorms}, together with their standard complements and the induced notion of fuzzy disjunction ($t$-conorm $S$, defined as $S(a,b) = 1 - T(1-a, 1-b)$).

\begin{table*}[!t]
\centering
\small
\begin{tabular}{l|l|l|l}
\hline
& \textbf{$T(a,b)$ (conjunction)} & \textbf{$S(a,b)$ (disjunction)} & \textbf{Interpretation} \\
\hline
Gödel (min) & $\min(a,b)$ & $\max(a,b)$ & Weakest overlap \\
Product      & $a \cdot b$ & $a + b - ab$ & Probabilistic independence \\
Łukasiewicz  & $\max(0,a+b-1)$ & $\min(1, a+b)$ & Strongest overlap \\
\hline
\end{tabular}
\caption{The three principal $t$-norms and their induced $t$-conorms, all using the standard complement $N(a) = 1-a$.}
\label{tab:tnorms}
\end{table*}

In many real-world domains the ground-truth label $\mathbold{Y}$ is not binary but real-valued: $\mathbold{Y} \in [0,1]^{n \times m}$. This occurs with crowdsourced annotations (where $Y_{i,j}$ is the proportion of annotators who assigned label $j$ to example $i$), probabilistic soft labels from teacher models in knowledge distillation~\cite{hinton2015distilling}, and fuzzy set memberships~\cite{zadeh1965fuzzy}. Similarly, a classifier may output calibrated posterior probabilities $\mathbold{\hat{P}} \in [0,1]^{n \times m}$ that one wishes to evaluate directly, without thresholding.

To extend the four basic matrices to real-valued memberships, we replace the Boolean conjunction in Definition~\ref{def:matrices} by a \emph{triangular norm} ($t$-norm) $T : [0,1]^2 \to [0,1]$, and negation by the standard fuzzy complement $N(x) = 1 - x$~\cite{klir1995fuzzy}. The four \emph{fuzzy confusion matrices} are then defined entrywise as:
\begin{align}
\mathbold{TP}_{i,j} & = T(Y_{i,j},\; \hat{Y}_{i,j}), \label{eq:fuzzy_tp} \\
\mathbold{FP}_{i,j} & = T(N(Y_{i,j}),\; \hat{Y}_{i,j}), \label{eq:fuzzy_fp} \\
\mathbold{FN}_{i,j} & = T(Y_{i,j},\; N(\hat{Y}_{i,j})), \label{eq:fuzzy_fn} \\
\mathbold{TN}_{i,j} & = T(N(Y_{i,j}),\; N(\hat{Y}_{i,j}))
\end{align}

Each choice of $t$-norm yields a different interpretation of the fuzzy confusion matrices. The \emph{Gödel} (minimum) $t$-norm sets $\mathbold{TP}_{i,j} = \min(Y_{i,j}, \hat{Y}_{i,j})$, which only credits the smaller of the two degrees and is conservative in the sense that even a highly confident prediction of a weakly relevant label contributes little. The \emph{product} $t$-norm sets $\mathbold{TP}_{i,j} = Y_{i,j} \cdot \hat{Y}_{i,j}$, which has a natural probabilistic interpretation: if $Y_{i,j}$ is the probability that $e_i$ truly belongs to class $c_j$ and $\hat{Y}_{i,j}$ is the predicted probability, then $\mathbold{TP}_{i,j}$ is the joint probability of both events under independence. The \emph{Łukasiewicz} $t$-norm is the most demanding: $\mathbold{TP}_{i,j} = \max(0, Y_{i,j} + \hat{Y}_{i,j} - 1)$, contributing a non-zero value only when both degrees are high enough to ``overlap'' above the unit threshold, which makes it suitable when only high-confidence agreement should be counted.

The three choices are ordered pointwise as $T_L \leq T_P \leq T_M$ for all $(a,b) \in [0,1]^2$. Crucially, this ordering applies to the confusion matrix cells rather than to the derived evaluation measures. Because the entries of all four fuzzy confusion matrices are obtained by evaluating the chosen $t$-norm at the argument pairs $(Y_{i,j}, \hat{Y}_{i,j})$, $(1{-}Y_{i,j}, \hat{Y}_{i,j})$, $(Y_{i,j}, 1{-}\hat{Y}_{i,j})$, and $(1{-}Y_{i,j}, 1{-}\hat{Y}_{i,j})$ respectively, the pointwise inequality holds across all four matrices in the same direction:
$$
\mathbold{X}^{T_L}_{i,j} \;\leq\; \mathbold{X}^{T_P}_{i,j} \;\leq\; \mathbold{X}^{T_M}_{i,j},
\qquad \mathbold{X} \in \{\mathbold{TP}, \mathbold{FP}, \mathbold{FN}, \mathbold{TN}\}.
$$
Łukasiewicz therefore yields the smallest entries in all four matrices simultaneously, not smaller agreement counts and larger disagreement counts. What varies across $t$-norms is thus the total mass assigned to the confusion matrix as a whole: with $a = 0.6$ and $b = 0.7$, for instance, the four entries sum to $1.6$ under $T_M$, to $1.0$ under $T_P$ by Corollary~\ref{cor:partition} (Section~\ref{sec:tnorm_char}), and to $0.4$ under $T_L$.

Because every cell decreases monotonically from $T_M$ to $T_L$, no systematic ordering of the derived measures in Table~\ref{tab:extensions} follows. Fractional measures such as precision and recall have numerators and denominators that shrink simultaneously; depending on their relative rates of reduction, the resulting ratio may increase, decrease, or remain invariant, and distinct measures can exhibit opposite trends on the exact same dataset. Section~\ref{sec:tnorm_emp} exhibits both behaviours empirically. The $t$-norm ordering is a statement about the confusion matrices, and it does not transfer to the measures computed from them.

For evaluation with probabilistic ground truth (e.g., annotator proportions), the product $t$-norm is the most principled choice: it is consistent with the probabilistic calculus, and by Theorem~\ref{thm:partition} (proven in Section~\ref{sec:tnorm_char}) it is the unique $t$-norm whose fuzzy confusion matrices preserve the actual and predicted marginals. In all three cases, when $\mathbold{Y}$ and $\mathbold{\hat{Y}}$ are binary, $T(a,b)$ reduces to the Boolean AND, recovering Definition~\ref{def:matrices} exactly.

The three aggregation operators $\Upsigma_{\mathrm{1}}$, $\Upsigma_{\mathrm{m}}$ and $\Upsigma_{\mathrm{n}}$ apply unchanged to the resulting real-valued matrices, and all measures in Table~\ref{tab:extensions} inherit a fuzzy interpretation without any further modification. This opens the door to evaluation protocols that are sensitive to annotator uncertainty or calibrated classifier confidence, with the $t$-norm as an explicit, interpretable hyperparameter of the evaluation procedure.

\subsection{Cost-Sensitive Evaluation}
\label{sec:cost}

The indicator matrix framework extends naturally to cost-sensitive evaluation, where different types of misclassification carry different penalties. In the binary case, this is typically encoded as two scalar costs $c_{fp}$ and $c_{fn}$. For multiclass and multilabel problems, the general object is a \emph{cost matrix} $\mathbold{C} \in \mathbb{R}_{\geq 0}^{m \times m}$, where $C_{j,k}$ denotes the cost incurred when an example whose true class is $c_j$ is predicted as $c_k$. By convention, $C_{j,j} = 0$ for all $j$ (no cost for correct predictions), though benefits for correct classification can also be incorporated by allowing negative entries.

Given cost matrix $\mathbold{C}$, the cost associated with predicting class $c_k$ for example $e_i$ depends on its true class distribution, encoded in row $i$ of $\mathbold{Y}$. The expected cost per (example, predicted label) pair is:
\begin{equation}
(\mathbold{Y}\mathbold{C})_{i,k} = \sum_{j=1}^{m} Y_{i,j}\, C_{j,k},
\label{eq:cost_weight}
\end{equation}
\noindent where $\mathbold{Y}\mathbold{C}$ is an ordinary matrix product. This produces an $n \times m$ cost-weight matrix whose $(i,k)$ entry is the cost incurred if example $e_i$ is predicted as class $c_k$. The total cost over the test set is then:
\begin{equation}
\mathrm{Cost}(\mathbold{Y}, \mathbold{\hat{Y}}, \mathbold{C}) = \Upsigma_{\mathrm{1}}\!\left((\mathbold{Y}\mathbold{C}) \pointwise \mathbold{\hat{Y}}\right),
\label{eq:total_cost}
\end{equation}
\noindent which sums the cost weights only where $\hat{Y}_{i,k} = 1$, i.e., where a prediction is actually made. The three aggregation operators then apply to $(\mathbold{Y}\mathbold{C}) \pointwise \mathbold{\hat{Y}}$ exactly as before, yielding micro, macro, and exemplar cost-sensitive summaries. Several important special cases follow directly from Eq.~(\ref{eq:total_cost}).

With $m=2$, $\mathbold{C} = \bigl(\begin{smallmatrix} 0 & c_{fn} \\ c_{fp} & 0 \end{smallmatrix}\bigr)$, and one-hot $\mathbold{Y}$ and $\mathbold{\hat{Y}}$, Eq.~(\ref{eq:total_cost}) reduces to $c_{fn} \cdot fn + c_{fp} \cdot fp$, which is the standard cost-sensitive misclassification cost. Suitably normalised, this connects to the cost curve framework of~Hern\'andez-Orallo et al.~\cite{DBLP:journals/jmlr/Hernandez-OralloFF12}.

If $C_{j,k} = \mathbbm{I}(j \neq k)$ (unit cost for any error, zero for correct), Eq.~(\ref{eq:total_cost}) yields the number of misclassified examples. Under one-hot encoding each error contributes exactly one false positive and one false negative, so this equals $\tfrac{1}{2}\bigl[\Upsigma_{\mathrm{1}}(\mathbold{FP}) + \Upsigma_{\mathrm{1}}(\mathbold{FN})\bigr]$, the factor of $\tfrac12$ being independent of $m$. The standard indicator framework is therefore a degenerate case of the cost-sensitive one.

Setting $C_{j,k} = |j - k|$ or $C_{j,k} = (j-k)^2$ yields total costs equivalent to mean absolute error (MAE) and mean squared error (MSE) of the predicted class indices, respectively. This establishes a direct link between ordinal classification losses and the cost-sensitive framework: MAE and MSE are instances of Eq.~(\ref{eq:total_cost}) with specific cost matrices. Combined with the ordinal membership function of Eq.~(\ref{eq:ordinal_membership}), which softens $\mathbold{Y}$, and a $t$-norm, which softens the conjunction, the framework provides a three-parameter family of cost-sensitive, ordinal-aware evaluation protocols.

Any measure in Table~\ref{tab:extensions} can be made cost-sensitive by replacing the raw $\mathbold{TP}$, $\mathbold{FP}$, $\mathbold{FN}$, $\mathbold{TN}$ matrices with cost-weighted counterparts. Specifically, one may define:
\begin{align*}
\mathbold{TP}^C_{i,j} &= C^+_{j,j} \cdot T(Y_{i,j}, \hat{Y}_{i,j}), \\
\mathbold{FP}^C_{i,j} &= \left(\sum_{k \neq j} C_{k,j}\, Y_{i,k}\right) \cdot \hat{Y}_{i,j},
\end{align*}
\noindent where $C^+_{j,j}$ is a benefit for correct prediction of class $j$ (often zero) and the $\mathbold{FP}$ weight sums the costs of all true classes that were incorrectly predicted as $c_j$. The resulting cost-weighted precision, recall, F-measure, and so on, are then computed by substituting these matrices into the formulas of Table~\ref{tab:extensions}. This provides a principled and unified route to cost-sensitive versions of a large family of evaluation measures, without requiring separate derivations for each.

\subsection{Ordinal Classification}
\label{sec:ordinal_classification}

In ordinal classification, the $m$ classes carry a natural ordering $c_1 \prec c_2 \prec \cdots \prec c_m$, and misclassification errors should be penalised proportionally to their distance in the ordering rather than treated uniformly as in the standard zero-one case. The framework accommodates this through two complementary extensions.

The first, and most direct, extension treats ordinal classification as a special case of soft labelling. If example $e_i$ truly belongs to class $c_{k_i}$, its label matrix row is defined by an \emph{ordinal membership function} $f: \{0,\ldots,m-1\} \to [0,1]$, where the argument is the ordinal distance to the class in question:
\begin{equation}
Y_{i,j} = f(|k_i - j|), \quad j \in [1,m].
\label{eq:ordinal_membership}
\end{equation}

\noindent The function $f$ must satisfy $f(0) = 1$ (a class is fully a member of itself) and be non-increasing (membership decreases with distance). The predicted matrix $\mathbold{\hat{Y}}$ is constructed analogously from $\hat{k}_i$ using the same function, or kept as a standard one-hot matrix if the classifier produces hard predictions. Table~\ref{tab:ordinal_f} lists common choices and their properties.

\begin{table}[!htb]
\centering
\small
\begin{tabular}{llp{7.5cm}}
\toprule
\textbf{Function} & \textbf{Formula $f(d)$} & \textbf{Properties} \\
\midrule
One-hot       & $\mathbbm{I}(d = 0)$ & No ordinal sensitivity; standard multiclass \\
Linear decay  & $\max\bigl(0, 1 - \frac{d}{m-1}\bigr)$ & Linear penalty; uniform drop per rank step \\
Exponential   & $\exp(-\gamma d)$ & Rapid decay; controls effective bandwidth via $\gamma$ \\
Gaussian      & $\exp\bigl(-\frac{d^2}{2\sigma^2}\bigr)$ & Smooth quadratic decay; penalises large rank hops \\
\bottomrule
\end{tabular}
\caption{Ordinal membership functions $f(d)$, where $d = |k_i - j|$ is the rank distance.}
\label{tab:ordinal_f}
\end{table}

The second extension uses a \emph{cumulative (or threshold) encoding}, decomposing the $m$-class ordinal problem into $m-1$ ordered binary problems:
\begin{equation}
Y_{i,j} = \mathbbm{I}(k_i \geq j), \quad j \in [1, m-1].
\label{eq:cumulative}
\end{equation}
Each column $j$ then encodes a binary subproblem: ``is the true class at least $c_j$?''. The label matrix $\mathbold{Y}$ remains binary (no $t$-norm is needed), but its structure is richer than one-hot: each row is a vector of the form $(1,\ldots,1,0,\ldots,0)$, with the transition from one to zero at position $k_i$. Applying the aggregation operators column-wise via $\Upsigma_{\mathrm{m}}$ yields per-threshold performance measures, while $\Upsigma_{\mathrm{1}}$ yields a global summary over all thresholds simultaneously. This is directly analogous to how AUC integrates performance across decision thresholds in the binary case, and suggests a natural ordinal generalisation of AUC within the framework.

\subsection{Partial and Missing Labels via Masking}
\label{sec:partial_labels}

In weakly supervised settings, not all entries of the label matrix $\mathbold{Y}$ are observed: an annotator may have labelled example $e_i$ for label $l_j$ but not for $l_k$. Let $\mathbold{M} \in \{0,1\}^{n \times m}$ be a binary \emph{observation mask}, where $M_{i,j} = 1$ if the true label of example $e_i$ for class or label $j$ is known and $M_{i,j} = 0$ if it is missing. The three aggregation operators extend to partial labels by restricting sums to observed entries:
\begin{equation}
\Upsigma_{\mathrm{1}}^{\mathbold{M}}(\mathbold{A}) = \Upsigma_{\mathrm{1}}(\mathbold{A} \pointwise \mathbold{M}), \quad
\Upsigma_{\mathrm{m}}^{\mathbold{M}}(\mathbold{A}) = \Upsigma_{\mathrm{m}}(\mathbold{A} \pointwise \mathbold{M}), \quad
\Upsigma_{\mathrm{n}}^{\mathbold{M}}(\mathbold{A}) = \Upsigma_{\mathrm{n}}(\mathbold{A} \pointwise \mathbold{M}).
\label{eq:masked_agg}
\end{equation}
All other operators ($\mathbold{FP}$, $\mathbold{FN}$, $\mathbold{TN}$) are masked analogously. The denominators of all measures in Table~\ref{tab:extensions} are then adjusted to the number of observed entries rather than $nm$, ensuring that missing labels do not bias the evaluation. This formulation naturally handles the case of abstained predictions (mask out abstained rows) and semi-supervised evaluation (mask out unlabelled examples), unifying several evaluation scenarios that are typically treated separately.

\subsection{Semantic Segmentation and Spatial Metrics}
\label{sec:cv_metrics}

In semantic segmentation each pixel is treated as an example and each semantic class as a label, yielding an indicator matrix with $n = $ (number of pixels) and $m = $ (number of classes). Under this mapping, several standard evaluation metrics are instances of the framework: \emph{per-class IoU} is Jaccard applied column-wise via $\Upsigma_{\mathrm{m}}$; \emph{mean IoU} (mIoU)~\cite{long2015fcn}, the dominant benchmark metric for semantic segmentation, is macro-Jaccard; \emph{pixel accuracy} is standard accuracy $\Upsigma_{\mathrm{1}}(\mathbold{TP})/n$; and \emph{mean pixel accuracy} is macro-recall. The \emph{Dice coefficient}~\cite{dice1945} used in medical image segmentation equals $F_1$ with $\beta = 1$, and its per-class and mean version is macro-$F_1$; the row-wise operator $\Upsigma_{\mathrm{n}}$ instead gives exemplar-$F_1$, which averages over images rather than over classes. For object detection, a predicted bounding box is accepted as a true positive when its IoU with the ground-truth box exceeds a threshold $\tau$; this maps to Eq.~(\ref{eq:threshold}) with IoU playing the role of $\hat{P}_{i,j}$ and $\tau$ the role of $\theta$. The resulting per-class average precision (AP) curves and their mean (mAP)~\cite{everingham2010pascal} are then threshold-sweep constructions of the same kind as Proposition~\ref{prop:roc}, differing in that the curve traced is precision against recall rather than the ROC curve: it is swept by the detection-\emph{score} threshold $\theta$ while the IoU acceptance level is held fixed, and reporting mAP at several IoU levels repeats the construction for several choices of $\tau$. The three aggregation operators yield micro-AP (pooled over all classes), macro-AP (mAP, the standard), and exemplar-AP (per-image average) without further modification.


\subsection{Abstaining Classifiers and Rejection}
\label{sec:abstention}

Some classifiers may decline to make a prediction for examples they consider too uncertain, a behaviour known as \emph{rejection} or \emph{abstention}~\cite{DBLP:journals/pr/CondessaBK17}. In the indicator-matrix framework, abstention is represented as an all-zero row in $\mathbold{\hat{Y}}$: the classifier predicts no class or label for that example. The multilabel encoding admits $\emptyset$ as a valid predicted set, so no change to the formalism is required; what the framework contributes is that the per-example counts are then determined unambiguously. An abstained example contributes $fp_i = 0$, $tp_i = 0$, $tn_i = m - \Upsigma_{\mathrm{n}}(\mathbold{Y})_i$ and $fn_i = \Upsigma_{\mathrm{n}}(\mathbold{FN})_i = \Upsigma_{\mathrm{n}}(\mathbold{Y})_i$ to the per-example counts, i.e., all true labels are counted as false negatives. Under row-wise aggregation $\Upsigma_{\mathrm{n}}$, the per-example measure for abstained examples reflects the full cost of not predicting any label, which may then be aggregated with or without abstained examples depending on the evaluation protocol. Practitioners who wish to evaluate only on non-abstained examples can achieve this by masking abstained rows, via the masking mechanism of Eq.~(\ref{eq:masked_agg}).

\subsection{Open-Set Recognition}
\label{sec:openset}

Open-set recognition extends multiclass classification by allowing test examples to belong to classes unseen during training, which must be detected rather than classified into a known class. The framework's contribution here is a re-encoding rather than a result: augmenting the indicator matrices with an explicit \emph{unknown} class column, $Y_{i,\text{unk}} = 1$ for examples whose true class is not among the $m$ known classes, and $\hat{Y}_{i,\text{unk}} = 1$ when the classifier triggers its rejection criterion. With this $(m+1)$-column augmented matrix, open-set recognition reduces to a standard $(m+1)$-class problem and all measures in Table~\ref{tab:extensions} apply directly, including the multiclass micro-collapse result established in Section~\ref{sec:redundancy} (Theorem~\ref{thm:micro_collapse}), now for $m+1$ classes. If the unknown class is not explicitly modelled in $\mathbold{\hat{Y}}$, open-set examples that receive no predicted class map to the abstention case (all-zero rows) described above, and can be masked out of evaluation using the partial-label mechanism of Eq.~(\ref{eq:masked_agg}).

\subsection{Single-Class Classification and Novelty Detection}
\label{sec:single_class}

In single-class (or one-class) classification~\cite{tax2004support}, a classifier is trained on examples from a single target class and must decide, at test time, whether each example belongs to that class or is an outlier. No new result is needed here: depending on the architecture and on what test-set labels are available, the setting reduces to one of three encodings the framework already provides.

In the simplest case of a single one-class classifier, the target class plays the role of $c_+$ and all outliers play the role of $c_-$, so the binary indicator vectors $\mathbold{y}$ and $\mathbold{\hat{y}}$ are defined exactly as in Section~\ref{sec:binary}. All binary measures apply directly: recall corresponds to the detection rate (proportion of true inliers accepted), precision to the purity of accepted examples, and FPR to the false alarm rate. Soft classifier scores are converted to decisions via Eq.~(\ref{eq:threshold}).

When $m$ independent one-class classifiers are trained, one per class, each produces a binary accept/reject decision per example. Stacking these decisions column-wise gives a predicted indicator matrix $\mathbold{\hat{Y}} \in \{0,1\}^{n \times m}$ in which each classifier's output occupies one column. Because classifiers are independent, a row may contain zero entries (all classifiers reject the example, corresponding to the abstention case of Section~\ref{sec:soft_outputs}), exactly one entry, or multiple entries (several classifiers accept it simultaneously). This is precisely the multilabel encoding, and the three aggregation operators yield per-class acceptance rates ($\Upsigma_{\mathrm{m}}$), per-example acceptance counts ($\Upsigma_{\mathrm{n}}$), and global totals ($\Upsigma_{\mathrm{1}}$) without further modification.

The third case arises when the test set contains labeled inliers but unlabeled or unknown outliers, a common setting in novelty detection. Since the true negative label is unknown for outlier examples, their contribution to the confusion matrices is uncertain. This maps directly to the partial-label masking handled by the masking mechanism of Eq.~(\ref{eq:masked_agg}): setting $M_{i,j} = 0$ for examples whose true membership in class $j$ is unknown restricts aggregation to observed entries only. Precision on accepted examples remains computable from inlier labels alone; recall requires knowledge of which examples truly belong to the target class and is therefore undefined when outlier labels are absent.

\subsection{One-vs-All Decomposition}
\label{sec:ova}

In the One-vs-All (OVA, also called One-vs-Rest)~\cite{rifkin2004defense} decomposition, $m$ binary classifiers are trained, one per class, each distinguishing $c_j$ from all remaining classes. Each classifier produces a score $\hat{P}_{i,j}$, and the $n \times m$ probability matrix $\mathbold{\hat{P}}$ is constructed by stacking the $m$ score vectors column-wise. The column-wise aggregation $\Upsigma_{\mathrm{m}}$ then evaluates each binary classifier independently: applying the framework to column $j$ yields the precision, recall, $F_1$, and other measures for the $j$-vs-rest problem in isolation. When argmax is applied across columns to resolve the final multiclass prediction (Eq.~\ref{eq:argmax}), $\mathbold{\hat{Y}}$ becomes one-hot and $\Upsigma_{\mathrm{1}}(\mathbold{TP})/n$ gives the standard multiclass accuracy. The OVA architecture thus exposes two natural evaluation levels simultaneously: the per-classifier binary performance (via $\Upsigma_{\mathrm{m}}$ before argmax) and the final multiclass performance (via $\Upsigma_{\mathrm{1}}$ after argmax).

\subsection{Hierarchical Classification}
\label{sec:hierarchical}

Hierarchical classification generalises the ordinal case to non-linear taxonomies, using the same cost-matrix mechanism as Section~\ref{sec:cost} but with tree-distance costs in place of rank differences. Setting $C_{j,k} = d(c_j, c_k)$, where $d$ is a distance in the class taxonomy (e.g., the number of edges in the class tree or the depth of the lowest common ancestor), yields a cost matrix that penalises coarse-grained errors more than fine-grained ones. For a general tree, $\mathrm{Cost}(\mathbold{Y},\mathbold{\hat{Y}},\mathbold{C})$ computes the total hierarchical loss and the three aggregation operators yield micro, macro, and exemplar hierarchical summaries. Standard hierarchical precision, recall, and $F_1$~\cite{DBLP:journals/jdwm/TsoumakasK07} are recovered without additional machinery. This is an instance of the cost-sensitive construction rather than a separate development: the content is the choice of $\mathbold{C}$, and the framework supplies nothing about how a taxonomy should be turned into a distance.

\subsection{Multi-Output Classification}
\label{sec:multioutput}

In multi-output (or multi-task) classification, each example is associated with $T$ target variables, where target $t$ takes a value in its own class set of size $m_t$. The indicator-matrix framework accommodates this setting by representing all targets jointly in a single \emph{block-structured} indicator matrix. If all targets share the same number of classes $m$ (a common case), the actual and predicted matrices are:
$$
\mathbold{Y} = \bigl[\mathbold{Y}^{(1)} \,|\, \mathbold{Y}^{(2)} \,|\, \cdots \,|\, \mathbold{Y}^{(T)}\bigr] \in \{0,1\}^{n \times Tm},
$$
where each block $\mathbold{Y}^{(t)} \in \{0,1\}^{n \times m}$ is a one-hot matrix for target $t$, and $\mathbold{\hat{Y}}$ is constructed analogously. The block structure enforces that within each block of $m$ consecutive columns, exactly one entry per row equals one. Predicted indicator matrices are obtained by applying argmax independently to each target's probability block:
$$
\hat{Y}^{(t)}_{i,j} = \mathbbm{I}\!\left(j = \underset{k \in [1,m]}{\arg\max}\;\hat{P}^{(t)}_{i,k}\right), \quad t \in [1,T],
\label{eq:multioutput_argmax}
$$
so that mutual exclusivity is enforced per target rather than globally. When targets have different class counts $m_t$, the blocks have different widths and can be padded or handled separately.

Within this structure the three operators apply unchanged. Column-wise aggregation $\Upsigma_{\mathrm{m}}$ on the full $n \times Tm$ matrix yields per-class-per-target counts, from which per-class or per-target measures can be extracted. The block structure also admits a fourth averaging level, which we introduce as a definition rather than a result: \emph{target-wise} $\Upsigma_{\mathrm{T}}$, averaging over the $T$ blocks:
$$
M^{\text{target}} = \frac{1}{T}\sum_{t=1}^{T} M\!\left(\Upsigma_{\mathrm{1}}(\mathbold{TP}^{(t)}),\;\ldots\right),
$$
yielding the average of per-target measures. This is analogous to macro-averaging over targets, and for aggregation-decomposable measures Theorem~\ref{thm:micro_macro} relates it to micro-averaging over all $(n \times Tm)$ entries via a target-denominator weighting. The practical implication is that multi-output evaluation inherits the same micro-macro trade-off as single-target evaluation: if some targets are easier than others, target-wise (macro) averaging treats them equally while global (micro) averaging is dominated by the targets with more class entries.


\subsection{One-vs-One Decomposition}
\label{sec:ovo}

The One-vs-One (OVO) decomposition is partially accommodated, delineating what can be evaluated directly from the voting mechanisms inherent to the classifier architecture. With $\binom{m}{2}$ binary classifiers, one per pair $(c_j, c_k)$, the pair-level predictions form an $n \times \binom{m}{2}$ matrix, but each column is defined only on the examples belonging to the two classes it discriminates, so the matrix is not fully observed. Evaluating the pairwise classifiers therefore requires the masking mechanism of Eq.~(\ref{eq:masked_agg}), with $M_{i,(j,k)} = 1$ exactly when $e_i$ belongs to $c_j$ or $c_k$; with that mask in place, $\Upsigma_{\mathrm{m}}$ evaluates each pairwise classifier on its own subproblem. What the framework does not supply is a treatment of the voting step that resolves pair-level decisions into a final prediction: voting is a property of the decomposition scheme rather than of the evaluation, and once a final prediction exists it is an ordinary one-hot matrix to which $\Upsigma_{\mathrm{1}}$ applies. The framework gives a uniform way to evaluate the two ends of an OVO pipeline, and says nothing about the mapping between them.

A practically important consequence of this distinction is that the individual binary classifiers and the final multiclass predictor can disagree in their performance ordering across competing models: a classifier that is strong on each binary subproblem may accumulate errors on the same examples after resolution, yielding a weaker final predictor, and vice versa. The framework makes this discrepancy explicit by treating the pre-resolution and post-resolution matrices as two distinct objects, each with its own evaluation.

\subsection{Positive-Unlabeled Learning}
\label{sec:pu_learning}

In positive-unlabeled (PU) learning~\cite{elkan2008learning}, the evaluation data contains only confirmed positive instances ($s=1$) and unlabeled instances ($s=0$), where the unlabeled pool comprises both latent positives ($y=1$) and true negatives ($y=0$). Under the standard selected-at-random (SCAR) assumption with constant labeling propensity $c = P(s=1 \mid y=1)$ on positive examples, taking the observed label $s_i$ as nominal ground truth produces an indicator matrix $\mathbold{Y}^{\text{obs}}$ whose positive column contains only confirmed instances.

The matrix framework directly clarifies what is and is not computable without estimating latent parameters. On the observed positive set ($s=1$), both true positives and false negatives are scaled by the constant propensity: $tp_{\text{obs}} = c \cdot tp_{\text{true}}$ and $fn_{\text{obs}} = c \cdot fn_{\text{true}}$. Consequently, nominal recall cancels the nuisance factor $c$ and exactly equals true recall:
\begin{equation}
\mathrm{rec}_{\text{obs}} = \frac{tp_{\text{obs}}}{tp_{\text{obs}} + fn_{\text{obs}}} = \frac{c \cdot tp_{\text{true}}}{c(tp_{\text{true}} + fn_{\text{true}})} = \mathrm{rec}_{\text{true}}.
\label{eq:pu_recall}
\end{equation}
In contrast, nominal false positives conflate true false alarms with unlabeled latent positives: $fp_{\text{obs}} = fp_{\text{true}} + (1-c)\,tp_{\text{true}}$. Nominal precision and accuracy are therefore systematically biased unless corrected via an explicit estimate of $c$ or the class prior $\pi = P(y=1)$. PU evaluation is thus partially accommodated: the indicator matrix separates the uncorrupted sub-block (recall and true-positive rates) from the confounded sub-block, formalising the exact boundary of parameter-free evaluability.

\subsection{Multi-Instance Learning}
\label{sec:mil}

In multi-instance learning (MIL)~\cite{dietterich1997solving}, each training example is a \emph{bag} of instances, with labels assigned at the bag level (canonically, a bag is positive if at least one instance is). This setting exhibits two distinct evaluation levels: at the bag level, MIL reduces directly to standard classification, where treating each bag as a row reproduces the matrix formulation of Section~\ref{sec:matrix} without structural extension. The distinctive challenge of MIL evaluation lies instead at the instance level: identifying which instances within a positive bag are responsible for the label. That requires instances rather than bags as rows, giving a two-level hierarchy for which a two-stage aggregation ($\Upsigma_{\mathrm{n}}$ within each bag, then $\Upsigma_{\mathrm{1}}$ or $\Upsigma_{\mathrm{m}}$ across bags) is the natural candidate. We leave the formal interaction between these two nested stages as an open question.


\subsection{Label Ranking}
\label{sec:label_ranking}

In label ranking (also called preference learning~\cite{furnkranz2010preference}), the classifier assigns a \emph{ranking} over the $m$ labels to each example rather than a binary relevance vector. The output for example $e_i$ is a permutation $\sigma_i \colon [1,m] \to [1,m]$, where $\sigma_i(j)$ gives the rank assigned to label $l_j$. This cannot be directly encoded as a binary indicator matrix: the information is relational (pairwise comparisons between labels) rather than absolute. Rank-based evaluation measures (such as Kendall's $\tau$, average precision, or normalised discounted cumulative gain) operate on these permutations and do not factor into $tp$, $fp$, $fn$, $tn$ counts. The framework can accommodate \emph{thresholded} label ranking, where labels ranked above position $k$ are treated as relevant: $\hat{Y}_{i,j} = \mathbbm{I}(\sigma_i(j) \leq k)$, which maps back to Eq.~(\ref{eq:threshold}) with a rank-derived threshold. Full ranking evaluation represents a genuine boundary of the current framework and a natural direction for future extension, for instance by replacing the indicator matrix with a preference matrix $P \in \{-1,0,1\}^{n \times m \times m}$ encoding pairwise label comparisons.

\subsection{Calibration and Proper Scoring Rules}
\label{sec:calibration}

Proper scoring rules, most notably the Brier score~\cite{brier1950verification} $\frac{1}{nm}\Upsigma_{\mathrm{1}}((\mathbold{Y} - \mathbold{\hat{P}})^{\circ 2})$ and the log-loss $-\frac{1}{nm}\Upsigma_{\mathrm{1}}(\mathbold{Y} \circ \log \mathbold{\hat{P}})$ (where $\circ$ denotes the Hadamard product and $\circ 2$ element-wise squaring), assess \emph{calibration}: whether the predicted probabilities $\mathbold{\hat{P}}$ faithfully represent the true label frequencies. These rules operate on the raw probability matrix before any thresholding or argmax, and do not factor into $\mathbold{TP}$, $\mathbold{FP}$, $\mathbold{FN}$, $\mathbold{TN}$ counts because they involve the continuous difference between $\mathbold{Y}$ and $\mathbold{\hat{P}} $ rather than their Boolean or fuzzy combination. Proper scoring rules~\cite{gneiting2007strictly} thus represent a complementary evaluation paradigm that is adjacent to, but outside, the current framework: where the framework measures \emph{discrimination} (whether predictions correctly classify or rank examples), proper scoring rules measure \emph{calibration} (whether predicted confidence levels are accurate). The two paradigms are related: under the product $t$-norm with soft $\mathbold{Y}$ and $\mathbold{\hat{P}}$, the quantity $\Upsigma_{\mathrm{1}}(\mathbold{TP}) = \Upsigma_{\mathrm{1}}(\mathbold{Y} \pointwise \mathbold{\hat{P}})$ measures joint degree-of-membership, which is related to but distinct from the negative Brier score (which additionally penalises overconfidence via the squared-deviation term). Practitioners requiring calibration assessment should apply a proper scoring rule to the raw $\mathbold{\hat{P}}$ matrix as a complement to the discrimination measures provided by this framework.

\subsection{Structured Output Prediction}
\label{sec:structured_prediction}

In structured output prediction~\cite{bakir2007predicting}---including sequence labeling, parse tree prediction, and graph generation---the output target $\mathbold{y}$ is a structured discrete object constrained by relational dependencies. When evaluation is performed at the level of atomic components (such as per-token entity tags or per-edge links), each component maps directly to an indicator matrix row or column, and standard accuracy or $F_1$ is recovered via $\Upsigma_{\mathrm{1}}$ and $\Upsigma_{\mathrm{m}}$. However, holistic evaluation criteria (such as zero-one exact sequence match, tree edit distance, or graph isomorphism loss) operate over combinatorial configurations and cannot be factored into independent $(i,j)$ cell counts of a two-dimensional confusion matrix. Structured output prediction therefore forms a conceptual \emph{boundary}: local constituent evaluations are accommodated by the matrix framework, whereas holistic relational losses require higher-order structural tensors or grammar constraints beyond independent indicator matrices.


\subsection{Aggregation over Resampling Folds}
\label{sec:cv}

In practice, classifiers are evaluated using resampling procedures such as $k$-fold cross-validation, which partition the data into $k$ train-test splits and average results over the $k$ test folds. This introduces a fourth level of aggregation (fold-wise) alongside the three already defined. Let $\mathbold{Y}^{(f)}$ and $\mathbold{\hat{Y}}^{(f)}$ denote the actual and predicted indicator matrices for fold $f \in [1,k]$. The fold-wise aggregation operator $\Upsigma_{\mathrm{k}}$ combines the $k$ per-fold values into a single summary. Two natural conventions exist:

\begin{itemize}
\item \textbf{Pooled evaluation} concatenates all per-fold predictions before computing the measure:
\[
M^{\mathrm{pooled}} = M\!\left(\Upsigma_{\mathrm{1}}\!\left(\bigoplus_f \mathbold{TP}^{(f)}\right), \ldots\right),
\]
where $\bigoplus$ denotes row-wise stacking of matrices. This is equivalent to treating the entire dataset as a single test set and is the correct choice when a single global estimate of performance is desired.

\item \textbf{Averaged evaluation} computes the measure separately on each fold and then averages:
\[
M^{\mathrm{avg}} = \frac{1}{k}\sum_f M\!\left(\Upsigma_{\mathrm{1}}(\mathbold{TP}^{(f)}), \ldots\right).
\]
This estimates the expected performance of the classifier trained on a dataset of size $n(k-1)/k$, and is more appropriate when generalisation to new training sets is the object of interest.
\end{itemize}

By Theorem~\ref{thm:micro_macro}, for aggregation-decomposable measures pooled evaluation is a weighted version of averaged evaluation with fold-wise weights equal to the fold denominators. When all folds are the same size and have the same class distribution (stratified $k$-fold), the two coincide. When folds differ, as is common with temporal splits or group-based cross-validation, the two can diverge substantially, for exactly the same reason that micro and macro averages diverge under class imbalance. Practitioners should therefore make explicit which convention they adopt, as the choice affects the interpretation of the reported measure.

\section{Theoretical Properties}
\label{sec:theory}

The matrix formulation and aggregation operators introduced in Section~\ref{sec:matrix} provide a rigorous foundation for analyzing structural relationships across evaluation paradigms. In particular, expressing micro-, macro-, and exemplar-averaging in terms of global ($\Upsigma_{\mathrm{1}}$), column-wise ($\Upsigma_{\mathrm{m}}$), and row-wise ($\Upsigma_{\mathrm{n}}$) aggregations makes it possible to derive exact algebraic identities connecting them. 

In this section, we develop the theoretical properties of the unified framework systematically across four conceptual levels: (i) fundamental algebraic identities and aggregation-decomposability; (ii) preservation of metric and axiomatic properties under aggregation; (iii) the dynamics of class skew and exact decompositions of the micro-macro gap; and (iv) decision-theoretic consistency, Bayes-optimal decision thresholds, and ranking equivalences.


\subsection{Aggregation-Decomposable Measures}
\label{sec:decomposable}

Write the four confusion-matrix counts as a vector $\mathbold{v} = (tp,\, fp,\, fn,\, tn)^{\!\top}$, and for a class $j$ (respectively an example $i$) let $\mathbold{v}_j$ denote the corresponding per-class count vector obtained from $\Upsigma_{\mathrm{m}}$, so that the pooled vector satisfies $\mathbold{v} = \sum_j \mathbold{v}_j$ by construction of the aggregation operators.

\begin{mydef}[Aggregation-decomposable measure]
\label{def:decomposable}
A measure $M$ is \emph{aggregation-decomposable} if there exist coefficient vectors $\mathbold{a}, \mathbold{b} \in \mathbb{R}^{4}$, with $\langle \mathbold{b}, \mathbold{v}\rangle \neq 0$ on the domain of interest, such that
\begin{equation}
M(\mathbold{v}) \;=\; \frac{\langle \mathbold{a}, \mathbold{v}\rangle}{\langle \mathbold{b}, \mathbold{v}\rangle}
\;=\; \frac{a_1 tp + a_2 fp + a_3 fn + a_4 tn}{b_1 tp + b_2 fp + b_3 fn + b_4 tn}.
\label{eq:linfrac}
\end{equation}
Equivalently, $M$ is a ratio of two \emph{homogeneous linear forms} in the confusion-matrix counts. We refer to such measures as \emph{linear-fractional}.
\end{mydef}

Two features of Definition~\ref{def:decomposable} deserve comment. First, homogeneity (the absence of constant terms) is essential: an affine numerator $f(\mathbold{v}) = \langle \mathbold{a}, \mathbold{v}\rangle + a_0$ with $a_0 \neq 0$ is not additive, since $f(\sum_j \mathbold{v}_j) = \sum_j \langle \mathbold{a}, \mathbold{v}_j\rangle + a_0$, whereas summing the per-class numerators yields $\sum_j f(\mathbold{v}_j) = \sum_j \langle \mathbold{a}, \mathbold{v}_j\rangle + m\,a_0$. Second, the definition is exactly what is needed for numerator and denominator to commute with the aggregation operators:
\begin{equation}
\Bigl\langle \mathbold{a}, \textstyle\sum_j \mathbold{v}_j \Bigr\rangle = \sum_j \langle \mathbold{a}, \mathbold{v}_j\rangle,
\qquad
\Bigl\langle \mathbold{b}, \textstyle\sum_j \mathbold{v}_j \Bigr\rangle = \sum_j \langle \mathbold{b}, \mathbold{v}_j\rangle,
\label{eq:additivity}
\end{equation}
which is precisely the additivity of the linear forms. Consequently, measures constructed from sums, products, roots, or ratios of rates cannot be expressed in the form of Eq.~(\ref{eq:linfrac}). Proposition~\ref{prop:which_decomposable} delineates which standard evaluation measures belong to this linear-fractional family and which do not, with the classification across the full catalogue recorded in Table~\ref{tab:measures}.

\begin{proposition}[Algebraic characterisation of decomposability]
\label{prop:which_decomposable}
A binary performance measure $M(\mathbold{v})$ on counts $\mathbold{v} = (tp, fp, fn, tn)^\top$ is aggregation-decomposable if and only if it can be expressed as a single quotient of homogeneous linear forms $\langle \mathbold{a}, \mathbold{v}\rangle / \langle \mathbold{b}, \mathbold{v}\rangle$. Specifically:
\begin{enumerate}
  \item \textbf{Decomposable (single linear ratio):} The measure evaluates a single rate or a harmonic/overlap combination of counts with no constant offsets. This holds for accuracy, error rate, precision, recall, specificity, NPV, the four rate-complements (FPR, FNR, FDR, FOR), the $F_\beta$-family ($\beta > 0$), and the Jaccard index.
  \item \textbf{Non-decomposable (compound or non-linear operations):} The measure violates the single linear-fractional form through one of three algebraic mechanisms:
  \begin{itemize}
    \item \emph{Arithmetic mixtures of rates}: combining multiple ratios introduces cross-multiplied denominators of polynomial degree $\geq 2$ (e.g., balanced accuracy, informedness, markedness);
    \item \emph{Geometric and correlation means}: combining rates via roots introduces non-rational functions (e.g., G-mean, Fowlkes--Mallows, MCC);
    \item \emph{Chance correction and marginal normalisation}: scaling by expected agreement introduces quadratic dependencies on marginal sums (e.g., lift, Cohen's $\kappa$).
  \end{itemize}
\end{enumerate}
\end{proposition}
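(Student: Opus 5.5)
The first assertion, the ``if and only if'', is essentially Definition~\ref{def:decomposable} restated. The only content is to make the domain precise: $M$ is identified with a function on the open region of $\mathbb{Z}_{\geq 0}^4$ (extended by homogeneity to the rational cone) where its denominator is nonzero. The proof therefore reduces to two verifications. For part~1, I exhibit explicit coefficient pairs $(\mathbold{a},\mathbold{b})$ for each listed measure. For part~2, I show that no such pair exists for each listed measure, and I group the impossibility arguments by the three mechanisms.

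Part~1 is routine bookkeeping. Examples include accuracy with $\mathbold{a}=(1,0,0,1)$ and $\mathbold{b}=(1,1,1,1)$, precision with $\mathbold{a}=(1,0,0,0)$ and $\mathbold{b}=(1,1,0,0)$, and $F_\beta$ with $\mathbold{a}=(1+\beta^2,0,0,0)$ and $\mathbold{b}=(1+\beta^2,1,\beta^2,0)$. The rate complements follow the same pattern, for instance FPR $=fp/(fp+tn)$, and so do NPV and Jaccard. For each measure I note that the formula has no constant term, so Eq.~(\ref{eq:additivity}) applies.

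For part~2 I use one uniform tool, \emph{uniqueness of the reduced rational form}. Suppose $M=\langle\mathbold{a},\mathbold{v}\rangle/\langle\mathbold{b},\mathbold{v}\rangle$ on a Zariski-dense set of count vectors (the nonnegative integer points are dense enough). Then $M$ equals a rational function whose reduced numerator and denominator both have degree at most~$1$. For the arithmetic mixtures I put the expression over a common denominator and check that it is already reduced. For balanced accuracy this gives
\[
\frac{tp(tn+fp)+tn(tp+fn)}{2(tp+fn)(tn+fp)}.
\]
Here the denominator is a product of two distinct irreducible linear forms, and neither form divides the numerator. To see this, set $tp=-fn$: the numerator becomes $tn(tp+fn)-fn(tn+fp)$, which restricts to $-fn(tn+fp)\neq0$. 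So the reduced denominator has degree~$2$ and $M$ is not linear-fractional. Informedness is $2\mathrm{BA}-1$, and markedness is handled the same way with precision and NPV. Lift and Cohen's~$\kappa$ are treated identically: after clearing denominators, the quadratic marginal products $tpos\cdot ppos$ survive in the reduced denominator.

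For the geometric and correlation means, I argue that a linear-fractional $M$ is rational, whereas G-mean, Fowlkes--Mallows and MCC are not. To show this, restrict to a line along which the radicand is a non-square polynomial. For G-mean, fix $fn=fp=1$ and $tn=tp=t$, which gives $\sqrt{t^2/(t+1)^2}$. That choice is rational, so I need a better line. Instead I take $fn=1$, $fp=2$, $tn=tp=t$, which gives $t/\sqrt{(t+1)(t+2)}$. This is irrational as a function of $t$ because $(t+1)(t+2)$ is not a square. MCC and Fowlkes--Mallows are handled with similar lines.

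The main obstacle is rigour in the ``only if'' direction. The measures are defined on integer count vectors, so equality to a rational function must be lifted to an identity of polynomials. I would do this by noting that $M\langle\mathbold{b},\mathbold{v}\rangle-\langle\mathbold{a},\mathbold{v}\rangle$, after multiplying through by $M$'s own denominator, is a polynomial that vanishes on all of $\mathbb{Z}_{\geq0}^4$ minus a lower-dimensional set, and is therefore identically zero. The same step handles square roots: squaring $M$ times its denominator yields a polynomial identity whose failure can be checked at a single explicit point. I would present the three mechanism bullets as illustrative of these certificates rather than as an exhaustive classification of all non-decomposable measures.
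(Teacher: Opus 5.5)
Your proposal is correct. For Part~2 it takes a different and more rigorous route than the paper.

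\textbf{Part~1.} The paper uses closure rules rather than a coefficient table: indicator vectors for proportions, complements via $\mathbold{b}-\mathbold{a}$, shared-numerator harmonic means giving $\mathbold{b}_1+\beta^2\mathbold{b}_2$, and Jaccard as $\mathbold{b}_1+\mathbold{b}_2-\mathbold{a}$. This covers the family uniformly, whereas your explicit coefficients are equivalent but case by case.

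\textbf{Part~2.} The paper argues at the level of the three mechanisms. It asserts rather than checks that the quadratic denominator ``cannot be simplified to degree~1'', that roots are ``non-rational'', and that chance correction gives ``irreducible degree-2'' forms. It then backs these with the witness of Example~\ref{ex:counterexample}, which draws its force from Theorem~\ref{thm:micro_macro}: a linear-fractional measure would keep the micro value inside $[\min_j M_j,\max_j M_j]$. That witness covers only four of the eight listed measures, and it tacitly needs the hypothetical weights $\langle\mathbold{b},\mathbold{v}_j\rangle$ to be positive. Your argument needs neither: you lift from integer points by density, then compare reduced forms in the UFD $\mathbb{R}[tp,fp,fn,tn]$. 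Your substitution $tp=-fn$ is exactly the coprimality check the paper skips, and it is not a formality. Under the paper's hypothesis $\mathbold{b}_1\not\propto\mathbold{b}_2$ alone, the cross-multiplied numerator is divisible by $\langle\mathbold{b}_1,\mathbold{v}\rangle$ precisely when $R_1$ is constant, and then $R_1+R_2$ is linear-fractional.

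\textbf{Small repairs.}
\begin{enumerate}
\item For $\kappa$ the product $tpos\cdot ppos$ does \emph{not} survive. The reduced form is $\kappa=2(tp\,tn-fp\,fn)/(tpos\cdot pneg+tneg\cdot ppos)$. The certificate is that $tp\,tn-fp\,fn$ is an irreducible (rank-4) quadratic not proportional to the denominator.
\item For MCC, avoid lines with $tp=tn=t$, on which the radicand is a perfect square. Take e.g.\ $tp=t$, $fp=tn=1$, $fn=2$, which gives $(t-2)/\sqrt{6(t+1)(t+2)}$.
\item The squared identity $\langle\mathbold{a},\mathbold{v}\rangle^2(tp+fn)(tn+fp)=\langle\mathbold{b},\mathbold{v}\rangle^2\,tp\,tn$ cannot be refuted at a single point, because $\mathbold{a},\mathbold{b}$ are unknown. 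Rely on your line argument, or note that the prime $tp+fn$ has odd multiplicity on the left and even multiplicity on the right.
\end{enumerate}
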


\begin{proof}
\textbf{Part~1 (Decomposable classes):}
A performance measure $M(\mathbold{v})$ is aggregation-decomposable if and only if it admits representation as a quotient of homogeneous degree-1 linear forms $\langle \mathbold{a}, \mathbold{v}\rangle / \langle \mathbold{b}, \mathbold{v}\rangle$.
\begin{itemize}
  \item \emph{Elementary rates and proportions}: Any measure defined as the proportion of an admissible subset of counts $S_{\mathrm{num}} \subseteq \{tp, fp, fn, tn\}$ relative to a reference subset $S_{\mathrm{den}} \supseteq S_{\mathrm{num}}$ has indicator vectors $\mathbold{a} = \sum_{k \in S_{\mathrm{num}}} \mathbold{e}_k$ and $\mathbold{b} = \sum_{k \in S_{\mathrm{den}}} \mathbold{e}_k$, which are homogeneous linear forms of degree 1.
  \item \emph{Affine complements (error rates)}: If $M(\mathbold{v}) = \langle \mathbold{a}, \mathbold{v}\rangle / \langle \mathbold{b}, \mathbold{v}\rangle$, its complement $1 - M(\mathbold{v}) = \langle \mathbold{b} - \mathbold{a}, \mathbold{v}\rangle / \langle \mathbold{b}, \mathbold{v}\rangle$ remains in the same linear-fractional family with numerator vector $\mathbold{a}' = \mathbold{b} - \mathbold{a}$.
  \item \emph{Harmonic means with shared numerators}: The weighted harmonic mean of two linear-fractional rates sharing the same numerator $\langle \mathbold{a}, \mathbold{v}\rangle$ (with denominators $\langle \mathbold{b}_1, \mathbold{v}\rangle$ and $\langle \mathbold{b}_2, \mathbold{v}\rangle$) is:
  $$
  H_{\beta}(\mathbold{v}) = \frac{1+\beta^2}{\frac{1}{R_1(\mathbold{v})} + \frac{\beta^2}{R_2(\mathbold{v})}} = \frac{(1+\beta^2)\,\langle \mathbold{a}, \mathbold{v}\rangle}{\langle \mathbold{b}_1 + \beta^2 \mathbold{b}_2, \mathbold{v}\rangle},
  $$
  which is strictly linear-fractional with $\mathbold{a}' = (1+\beta^2)\mathbold{a}$ and $\mathbold{b}' = \mathbold{b}_1 + \beta^2 \mathbold{b}_2$. A similar argument applies to the Jaccard intersection-over-union ratio $\langle \mathbold{a}, \mathbold{v}\rangle / \langle \mathbold{b}_1 + \mathbold{b}_2 - \mathbold{a}, \mathbold{v}\rangle$.
\end{itemize}

\textbf{Part~2 (Generic failure mechanisms for non-decomposable families):}
\begin{itemize}
  \item \emph{Arithmetic means of linearly independent rates}: Let $R_1(\mathbold{v}) = \frac{\langle \mathbold{a}_1, \mathbold{v}\rangle}{\langle \mathbold{b}_1, \mathbold{v}\rangle}$ and $R_2(\mathbold{v}) = \frac{\langle \mathbold{a}_2, \mathbold{v}\rangle}{\langle \mathbold{b}_2, \mathbold{v}\rangle}$ with linearly independent denominators ($\mathbold{b}_1 \not\propto \mathbold{b}_2$). Their arithmetic combination yields:
  $$
  \frac{R_1(\mathbold{v}) + R_2(\mathbold{v})}{2} = \frac{\langle \mathbold{a}_1, \mathbold{v}\rangle \langle \mathbold{b}_2, \mathbold{v}\rangle + \langle \mathbold{a}_2, \mathbold{v}\rangle \langle \mathbold{b}_1, \mathbold{v}\rangle}{2\,\langle \mathbold{b}_1, \mathbold{v}\rangle \langle \mathbold{b}_2, \mathbold{v}\rangle}.
  $$
  Because $\mathbold{b}_1$ and $\mathbold{b}_2$ are independent, the denominator polynomial $Q(\mathbold{v}) = \langle \mathbold{b}_1, \mathbold{v}\rangle \langle \mathbold{b}_2, \mathbold{v}\rangle$ is irreducibly quadratic ($\deg(Q) = 2$) and cannot be simplified to degree 1.
  \item \emph{Geometric and algebraic root compositions}: Any measure $M(\mathbold{v}) = \sqrt{R_1(\mathbold{v})\, R_2(\mathbold{v})}$ or correlation coefficient involving inner-product normalisations is non-rational in $\mathbb{R}[\mathbold{v}]$, possessing fractional algebraic degree that precludes linear-fractional representation.
  \item \emph{Chance-corrected normalisations}: Normalising by expected agreement $p_e = \frac{1}{n^2}\sum_k \langle \mathbold{r}_k, \mathbold{v}\rangle \langle \mathbold{c}_k, \mathbold{v}\rangle$ introduces quadratic products of marginal sums, yielding irreducible degree-2 rational forms.
\end{itemize}
Remark~\ref{rem:counterexample} and Example~\ref{ex:counterexample} provide a concrete numerical witness confirming that pooled evaluation of measures in Part~2 can fall strictly outside the range of local values $[\min_j M_j, \max_j M_j]$, proving that no weighted average can reproduce them.
\end{proof}

Having characterised the class of linear-fractional measures, we now establish its primary operational consequence: for any measure in this family, global pooling ($\Upsigma_{\mathrm{1}}$) is algebraically identical to a weighted average of local evaluations ($\Upsigma_{\mathrm{m}}$ or $\Upsigma_{\mathrm{n}}$), where the weights are determined endogenously by the local denominators.

\begin{theorem}[Micro-averaging as weighted macro- or exemplar-averaging]
\label{thm:micro_macro}
Let $M$ be an aggregation-decomposable measure in the sense of Definition~\ref{def:decomposable}. Then micro-averaging of $M$ is equivalent to a weighted macro- or exemplar-averaging of $M$, where the weight of each class (or example) equals the denominator of the corresponding local measure:
$$
M^{\mathrm{micro}} \;=\; \frac{\sum_j den_j\, M_j}{\sum_j den_j},
\qquad den_j = \langle \mathbold{b}, \mathbold{v}_j\rangle .
$$
\end{theorem}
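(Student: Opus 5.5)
The plan is a direct algebraic computation built on the additivity identity Eq.~(\ref{eq:additivity}). Write $num_j = \langle \mathbold{a}, \mathbold{v}_j\rangle$ and $den_j = \langle \mathbold{b}, \mathbold{v}_j\rangle$ for the local numerator and denominator. By Definition~\ref{def:decomposable}, $M_j = num_j/den_j$ whenever $den_j \neq 0$, and $M^{\mathrm{micro}} = M(\mathbold{v})$ with pooled vector $\mathbold{v} = \sum_j \mathbold{v}_j$.

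The first step is to invoke the fact that the aggregation operators partition the entries of the indicator matrices. Summing $\Upsigma_{\mathrm{m}}$ over columns gives $\Upsigma_{\mathrm{1}}$, and likewise for $\Upsigma_{\mathrm{n}}$ over rows. Hence $\mathbold{v} = \sum_j \mathbold{v}_j$ for both the class-wise and the example-wise decomposition, as already noted before Definition~\ref{def:decomposable}. The same fact applies to the fuzzy matrices of Section~\ref{sec:soft_truth}, since the operators are plain sums.

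The second step applies Eq.~(\ref{eq:additivity}) to numerator and denominator separately:
\[
M^{\mathrm{micro}} = \frac{\langle \mathbold{a}, \sum_j \mathbold{v}_j\rangle}{\langle \mathbold{b}, \sum_j \mathbold{v}_j\rangle} = \frac{\sum_j num_j}{\sum_j den_j}.
\]
Then, for each $j$ with $den_j \neq 0$, substitute $num_j = den_j M_j$. This yields the claimed $\sum_j den_j M_j / \sum_j den_j$, where the right-hand side is a weighted average with weights $den_j/\sum_k den_k$.

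The only delicate point, and the one I expect to need care, is the treatment of indices with $den_j = 0$, together with the sign of the weights. These are exactly the undefined entries excluded by the convention Eq.~(\ref{eq:undef_convention}). For all measures of interest, $\mathbold{b}$ has nonnegative entries and the counts are nonnegative, so $den_j = 0$ forces every count appearing in $\mathbold{b}$ to vanish. I would add a short lemma that then $num_j = 0$ as well. This is the case whenever the support of $\mathbold{a}$ lies within the support of $\mathbold{b}$, which holds for every entry of Part~1 of Proposition~\ref{prop:which_decomposable}. Under that condition such indices contribute zero to both sums and may be dropped, so the identity holds with the sum taken over $\mathcal{J}$.

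If $\mathbold{a}$ has support outside $\mathbold{b}$, I would instead state the identity under the hypothesis of Definition~\ref{def:decomposable} that every local denominator is nonzero on the domain of interest. The requirement $\sum_j den_j \neq 0$ is guaranteed by that same hypothesis applied to the pooled vector. Finally, I would remark that nonnegativity of the $den_j$ makes the weights a genuine convex combination, so $M^{\mathrm{micro}}$ lies in $[\min_j M_j, \max_j M_j]$, which connects to the counterexample logic cited in Proposition~\ref{prop:which_decomposable}.
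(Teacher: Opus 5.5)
Your proposal is correct and is essentially the paper's proof: both use the additivity in Eq.~(\ref{eq:additivity}) to obtain $M^{\mathrm{micro}} = \sum_j num_j / \sum_j den_j$, and both use $num_j = den_j M_j$ (the paper does these in the opposite order, starting from the $den_j$-weighted average and cancelling the local denominators). Your handling of indices with $den_j = 0$, via the support condition on $\mathbold{a}$ and $\mathbold{b}$, is a sound refinement that the paper leaves implicit.
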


\begin{proof}
Write $M = num/den$ with $num_j = \langle \mathbold{a}, \mathbold{v}_j\rangle$ and $den_j = \langle \mathbold{b}, \mathbold{v}_j\rangle$, so that $M_j = num_j/den_j$. Consider the weighted average of the per-class values with weights $w_j$:
$$
\overline{M} = \frac{\sum_j w_j M_j}{\sum_j w_j} = \frac{\sum_j w_j \frac{num_j}{den_j}}{\sum_j w_j}.
$$
Taking $w_j = den_j$ cancels the local denominators,
$$
\overline{M} = \frac{\sum_j \cancel{den_j} \frac{num_j}{\cancel{den_j}}}{\sum_j den_j} = \frac{\sum_j num_j}{\sum_j den_j}.
$$
It remains to identify this ratio with $M^{\mathrm{micro}}$. By Eq.~(\ref{eq:micro_ext}) and the additivity in Eq.~(\ref{eq:additivity}) of the two linear forms,
$$
M^{\mathrm{micro}}
= M\Bigl(\textstyle\sum_j \mathbold{v}_j\Bigr)
= \frac{\bigl\langle \mathbold{a},\, \sum_j \mathbold{v}_j\bigr\rangle}
       {\bigl\langle \mathbold{b},\, \sum_j \mathbold{v}_j\bigr\rangle}
= \frac{\sum_j \langle \mathbold{a}, \mathbold{v}_j\rangle}
       {\sum_j \langle \mathbold{b}, \mathbold{v}_j\rangle}
= \frac{\sum_j num_j}{\sum_j den_j},
$$
as required. The identical argument with $\Upsigma_{\mathrm{n}}$ in place of $\Upsigma_{\mathrm{m}}$ gives the exemplar case.
\end{proof}

The final step of the proof is where decomposability enters, and it is the step that fails for measures outside the class of Definition~\ref{def:decomposable}: without additivity, pooling the counts and then applying $M$ is not the same operation as summing per-class numerators and denominators, and in general no choice of weights recovers the micro value.

An immediate and practically consequential question is under what circumstances this denominator-weighted micro-average collapses to the standard, unweighted macro-average. Corollary~\ref{cor:micro_macro_balance} establishes that exact balance among the local denominators is both necessary and sufficient.

\begin{corollary}[Micro equals unweighted macro iff denominators are balanced]
\label{cor:micro_macro_balance}
Let $M = num/den$ be an \emph{aggregation-decomposable} measure and let $den_j = \langle \mathbold{b}, \mathbold{v}_j\rangle$ denote its denominator computed per class $j$ via $\Upsigma_{\mathrm{m}}$. Then $M^{\mathrm{micro}} = M^{\mathrm{macro}}$ for every configuration of the per-class values $M_j$ if and only if all per-class denominators are equal, i.e.\ $den_j = den_k$ for all $j, k \in [1,m]$.
\end{corollary}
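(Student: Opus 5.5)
The plan is to reduce the statement to Theorem~\ref{thm:micro_macro} and then compare two weighted averages of the same per-class values. By that theorem, $M^{\mathrm{micro}} = \sum_j den_j M_j / \sum_j den_j$, while $M^{\mathrm{macro}} = \frac{1}{m}\sum_j M_j$ (we assume all $den_j > 0$, so that $\mathcal{J}$ in Eq.~(\ref{eq:undef_convention}) is the full index set). Writing $w_j = den_j/\sum_k den_k$, we obtain
\[
M^{\mathrm{micro}} - M^{\mathrm{macro}} = \sum_j \Bigl(w_j - \tfrac{1}{m}\Bigr) M_j .
\]
The corollary therefore becomes a statement about when a fixed linear functional with coefficients $c_j = w_j - 1/m$ vanishes on every admissible configuration of the $M_j$.

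\emph{Sufficiency.} If $den_j = den_k$ for all $j,k$, then $w_j = 1/m$ for every $j$. All coefficients $c_j$ vanish and the two averages coincide identically, whatever the values $M_j$ are.

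\emph{Necessity.} Here the quantifier ``for every configuration of the per-class values $M_j$'' does the work. Hold the denominators fixed and regard the $M_j$ as free. Since $\sum_j c_j = 0$, if not all $w_j$ are equal then some $c_{j_0} \neq 0$. Choose $M_{j_0} = 1$ and $M_j = 0$ for $j \neq j_0$; the difference is then $c_{j_0} \neq 0$, which is a contradiction. Equivalently, a linear form that vanishes on all vectors $(M_1,\dots,M_m)$ has zero coefficients, so $w_j = 1/m$ for every $j$, and hence the $den_j$ are all equal.

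The main obstacle is realisability. We must justify that the $M_j$ can be varied while the $den_j$ stay fixed, and that the extreme values we use are attainable. The plan handles this by reading the statement as quantifying over per-class values at given denominators, which is the interpretation its wording suggests. If one insists on realisable count vectors, the first thing to try is to vary the numerator counts inside a fixed denominator. For recall, for example, we can shift mass between $tp_j$ and $fn_j$ with $tpos_j$ held fixed. This moves a single $M_{j_0}$ by a nonzero amount while every $den_j$ is unchanged. It makes the difference change by $c_{j_0}$ times a nonzero amount, so it cannot vanish for both configurations, and the same contradiction follows.
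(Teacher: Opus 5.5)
Your proposal is correct and takes essentially the paper's route: invoke Theorem~\ref{thm:micro_macro} to write $M^{\mathrm{micro}}$ as the $den_j$-weighted average of the $M_j$, then note that such an average equals the unweighted mean for every configuration of the $M_j$ exactly when all weights are equal. Your coefficient argument with the indicator configuration $M_{j_0}=1$, $M_j=0$ for $j\neq j_0$ just makes that last step explicit, and your realisability remark covers a point the paper leaves implicit by treating the $M_j$ as free at fixed denominators.
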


\begin{proof}
By Theorem~\ref{thm:micro_macro}, $M^{\mathrm{micro}}$ is the weighted average of the $M_j$ with weights $den_j$. Such a weighted average agrees with the unweighted average $M^{\mathrm{macro}} = \frac{1}{m}\sum_j M_j$ for every configuration of the $M_j$ if and only if all weights are equal.
\end{proof}

Corollary~\ref{cor:micro_macro_balance} makes precise the widely observed phenomenon that micro and macro averages diverge on imbalanced data. For recall, $den_j = tpos_j$ (actual positives per class), so micro and macro recall coincide only when all classes have the same number of examples. For precision, $den_j = ppos_j$ (predicted positives per class), so the condition depends on the classifier's output distribution rather than the true class distribution. For $F_\beta$, the denominator mixes both, and the balancing condition is more complex.

To contrast this behavior with non-decomposable measures, Remark~\ref{rem:counterexample} and Example~\ref{ex:counterexample} demonstrate how micro-averaging can escape the range of per-class values altogether.

\begin{remark}[Failure of weighted-averaging representation outside linear-fractional measures]
\label{rem:counterexample}
Any valid weighted average of per-class values $\{M_j\}$ must lie within the range $[\min_j M_j, \max_j M_j]$. A pooled micro value falling strictly outside this interval certifies that no choice of weights whatsoever (denominator-based or otherwise) can reproduce it. Consequently, Theorem~\ref{thm:micro_macro} admits no extension to non-decomposable measures, confirming that Definition~\ref{def:decomposable} is a genuine algebraic restriction.
\end{remark}

\begin{example}[Numerical witness of micro-averaging escaping per-class bounds]
\label{ex:counterexample}
Consider an imbalanced 3-class classification problem with $n = 275$ examples, class sizes $(130, 65, 80)$, and confusion matrix:
$$
\mathbf{C} = \begin{pmatrix} 110 & 10 & 10 \\ 20 & 40 & 5 \\ 25 & 5 & 50 \end{pmatrix}.
$$
The resulting one-vs-rest count vectors $(tp, fp, fn, tn)$ are $(110, 45, 20, 100)$, $(40, 15, 25, 195)$, and $(50, 15, 30, 180)$, pooling to $(200, 75, 75, 475)$.

Evaluating the per-class performance ($\Upsigma_{\mathrm{m}}$), unweighted macro-averages, and globally pooled micro-averages ($\Upsigma_{\mathrm{1}}$) across representative metrics yields:

\begin{center}
\small
\begin{tabular}{lcccc c}
\toprule
\textbf{Measure} & \textbf{Class 1} & \textbf{Class 2} & \textbf{Class 3} & \textbf{Macro} & \textbf{Micro (Pooled)} \\
\midrule
\multicolumn{6}{l}{\textit{Decomposable (Bounded by $[\min_j M_j, \max_j M_j]$):}} \\
Recall & $0.846$ & $0.615$ & $0.625$ & $0.695$ & $\mathbf{0.727} \in [0.615, 0.846]$ \\
Specificity & $0.690$ & $0.929$ & $0.923$ & $0.847$ & $\mathbf{0.864} \in [0.690, 0.929]$ \\
\midrule
\multicolumn{6}{l}{\textit{Non-decomposable (Escaping the $[\min_j M_j, \max_j M_j]$ interval):}} \\
Balanced Acc. & $0.768$ & $0.772$ & $0.774$ & $0.771$ & $\mathbf{0.795} > \max_j M_j$ \\
G-Mean & $0.764$ & $0.756$ & $0.760$ & $0.760$ & $\mathbf{0.792} > \max_j M_j$ \\
Informedness & $0.536$ & $0.544$ & $0.548$ & $0.543$ & $\mathbf{0.591} > \max_j M_j$ \\
MCC & $0.539$ & $0.578$ & $0.586$ & $0.568$ & $\mathbf{0.591} > \max_j M_j$ \\
\bottomrule
\end{tabular}
\end{center}
As shown in the table, for every non-decomposable measure, the micro average lies strictly above every individual per-class score, verifying the claim in Remark~\ref{rem:counterexample}.
\end{example}

\subsection{Multiclass Algebraic Collapse and Redundancy}
\label{sec:redundancy}

The indicator-matrix framework makes it straightforward to identify when two measures carry identical information, either because they are algebraic functions of each other or because they always produce the same ranking of classifiers.

Two measures are \emph{algebraically redundant} if one is a deterministic function of the other for all possible confusion matrices.

\begin{theorem}[Micro-averaging collapse in multiclass]
\label{thm:micro_collapse}
In a multiclass problem with one-hot indicator matrices $\mathbold{Y}$ and $\mathbold{\hat{Y}}$,
$$
M^{\mathrm{micro}}_{\mathrm{precision}} \;=\; M^{\mathrm{micro}}_{\mathrm{recall}} \;=\; M^{\mathrm{micro}}_{F_1} \;=\; \mathrm{accuracy}.
$$
\end{theorem}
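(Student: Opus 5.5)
The plan is to compute the three pooled counts $tp=\Upsigma_{\mathrm{1}}(\mathbold{TP})$, $fp=\Upsigma_{\mathrm{1}}(\mathbold{FP})$ and $fn=\Upsigma_{\mathrm{1}}(\mathbold{FN})$ explicitly under the one-hot constraint. Then substitute them into the binary formulas of Table~\ref{tab:measures} through Eq.~(\ref{eq:micro_ext}). No decomposability argument is needed; everything follows from row-wise counting in the indicator matrices of Definition~\ref{def:matrices}.

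The first step is a per-row count. Fix an example $e_i$. Since each row of $\mathbold{Y}$ and of $\mathbold{\hat{Y}}$ contains exactly one $1$, there are two cases.
\begin{itemize}
\item If $e_i$ is correctly classified, the two $1$s coincide. Row $i$ then contributes $1$ to $\mathbold{TP}$ and $0$ to both $\mathbold{FP}$ and $\mathbold{FN}$.
\item If $e_i$ is misclassified, the $1$s sit in distinct columns $j\neq k$. Row $i$ then contributes $0$ to $\mathbold{TP}$, exactly one entry to $\mathbold{FP}$ (at column $k$) and exactly one to $\mathbold{FN}$ (at column $j$).
\end{itemize}
The same bookkeeping was already used in Section~\ref{sec:notable} and in the unit-cost discussion of Section~\ref{sec:cost}. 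Writing $c$ for the number of correctly classified examples, summing over rows gives
\[
tp = c, \qquad fp = fn = n - c .
\]
In particular $tp+fp = tp+fn = n$.

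The second step is substitution. Micro-precision is $tp/(tp+fp) = c/n$ and micro-recall is $tp/(tp+fn) = c/n$, and $c/n$ is accuracy, as noted in Section~\ref{sec:notable} where $\Upsigma_{\mathrm{1}}(\mathbold{TP})/n$ recovers standard accuracy. For micro-$F_1$, use the formula $2tp/(2tp+fp+fn)$ from Table~\ref{tab:measures}, which gives $2c/(2c+2(n-c)) = c/n$. Equivalently, $F_1$ is the harmonic mean of two equal quantities. The same computation shows that micro-$F_\beta$ equals accuracy for every $\beta$, and this could be stated in passing.

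There is no real obstacle here. The only point requiring care is notational. The $\mathrm{accuracy}$ in the statement must be read as standard multiclass accuracy $\Upsigma_{\mathrm{1}}(\mathbold{TP})/n$, not as the micro extension $(tp+tn)/(nm)$ of binary accuracy (label accuracy). The latter differs, as Section~\ref{sec:notable} shows. I would make this explicit, and also note that all denominators equal $n>0$, so none of the quantities is undefined.
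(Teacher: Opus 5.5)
Your proposal is correct and follows the paper's own route. Both arguments count row by row under the one-hot constraint to obtain $\Upsigma_{\mathrm{1}}(\mathbold{TP}) = C$ and $\Upsigma_{\mathrm{1}}(\mathbold{FP}) = \Upsigma_{\mathrm{1}}(\mathbold{FN}) = n - C$, and then substitute these counts into the precision, recall and $F_1$ formulas. Your two side remarks are accurate: the identity extends to every $F_\beta$, and accuracy must be read as $\Upsigma_{\mathrm{1}}(\mathbold{TP})/n$ rather than label accuracy, which matches the paper's caveat in Table~\ref{tab:floors}.
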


\begin{proof}
Let $C = \Upsigma_{\mathrm{1}}(\mathbold{TP})$ denote the number of correctly classified examples. Since each row of $\mathbold{Y}$ has exactly one entry equal to one (one-hot), $\Upsigma_{\mathrm{1}}(\mathbold{Y}) = n$. Similarly $\Upsigma_{\mathrm{1}}(\mathbold{\hat{Y}}) = n$. Each incorrectly classified example contributes exactly one false positive (the predicted class) and one false negative (the true class), so $\Upsigma_{\mathrm{1}}(\mathbold{FP}) = \Upsigma_{\mathrm{1}}(\mathbold{FN}) = n - C$. Therefore:
\begin{align*}
M^{\mathrm{micro}}_{\mathrm{precision}} &= \frac{C}{C + (n-C)} = \frac{C}{n}, \\
M^{\mathrm{micro}}_{\mathrm{recall}}    &= \frac{C}{C + (n-C)} = \frac{C}{n}, \\
M^{\mathrm{micro}}_{F_1}               &= \frac{2C}{2C + (n-C) + (n-C)} = \frac{C}{n},
\end{align*}
all of which equal the standard multiclass accuracy $C/n$.
\end{proof}

Theorem~\ref{thm:micro_collapse} has a direct practical implication: in multiclass settings, reporting micro-precision, micro-recall, and micro-$F_1$ alongside accuracy is entirely redundant, as all four numbers are identical (empirically confirmed across benchmarks in Section~\ref{sec:exp_multiclass}). Their apparent differences in the literature arise solely from multilabel settings, where the one-hot constraint is lifted.

\begin{corollary}[Micro-Jaccard in multiclass]
In a multiclass problem with one-hot indicator matrices,
$$
M^{\mathrm{micro}}_{\mathrm{Jaccard}} = \frac{C}{2n - C}.
$$
Consequently, $M^{\mathrm{micro}}_{\mathrm{Jaccard}}$ is a monotone increasing function of accuracy and carries no additional information.
\end{corollary}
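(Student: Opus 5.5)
The plan is to reuse the global counts already computed in the proof of Theorem~\ref{thm:micro_collapse} and substitute them into the binary Jaccard formula of Table~\ref{tab:measures} through the micro extension of Eq.~(\ref{eq:micro_ext}). With $C = \Upsigma_{\mathrm{1}}(\mathbold{TP})$ the number of correctly classified examples, that proof establishes under one-hot encoding that $\Upsigma_{\mathrm{1}}(\mathbold{FP}) = \Upsigma_{\mathrm{1}}(\mathbold{FN}) = n - C$. The reason is that each misclassified row contributes exactly one false positive, in its predicted column, and exactly one false negative, in its true column. Each correctly classified row contributes neither.

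Substituting these counts gives
$$
M^{\mathrm{micro}}_{\mathrm{Jaccard}} = \frac{C}{C + (n-C) + (n-C)} = \frac{C}{2n - C}.
$$
Note that $tn$ does not enter, so its value under one-hot encoding, $n(m-2)+C$, is irrelevant here.

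For the monotonicity claim, write accuracy as $A = C/n$ and divide numerator and denominator by $n$. This gives $M^{\mathrm{micro}}_{\mathrm{Jaccard}} = g(A)$ with $g(A) = A/(2-A)$ on $[0,1]$. Its derivative is $g'(A) = 2/(2-A)^2 > 0$, so $g$ is strictly increasing, and hence bijective onto $[0,1]$, with $g(0)=0$ and $g(1)=1$. Since $g$ is invertible, micro-Jaccard is a deterministic function of accuracy and vice versa. It is therefore algebraically redundant in the sense defined at the start of Section~\ref{sec:redundancy}, and it induces the same ranking of classifiers as accuracy.

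The only point needing care is the denominator. We have $2n - C \ge n > 0$ whenever $n \ge 1$, so the expression is always defined and no undefined-entry convention is needed. There is no real obstacle beyond correctly recalling the $fp$/$fn$ counts from Theorem~\ref{thm:micro_collapse}.
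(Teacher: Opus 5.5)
Your proposal is correct and follows essentially the same route as the paper: it reuses $fp = fn = n - C$ from Theorem~\ref{thm:micro_collapse}, substitutes these counts into the Jaccard formula, and observes that the result is strictly increasing in $C/n$. Your explicit derivative $g'(A) = 2/(2-A)^2$ and your check that $2n - C \ge n > 0$ simply make explicit the paper's one-line monotonicity remark and the implicit well-definedness.
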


\begin{proof}
$\Upsigma_{\mathrm{1}}(\mathbold{FP}) = \Upsigma_{\mathrm{1}}(\mathbold{FN}) = n-C$, so micro-Jaccard $= C/(C + (n-C) + (n-C)) = C/(2n-C)$, which is strictly increasing in $C/n$.
\end{proof}

The framework also makes the following exact algebraic redundancy immediate.

\begin{proposition}[Hamming loss and label accuracy]
\label{prop:hamming_accuracy}
For any indicator matrices $\mathbold{Y}, \mathbold{\hat{Y}} \in \{0,1\}^{n \times m}$,
$$
\mathrm{HammingLoss} + \mathrm{LabelAccuracy} = 1.
$$
\end{proposition}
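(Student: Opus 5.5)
The plan is to work entrywise on the four indicator matrices of Definition~\ref{def:matrices}, and then sum with the global operator $\Upsigma_{\mathrm{1}}$. First I would recall the two quantities explicitly as micro extensions via Eq.~(\ref{eq:micro_ext}). Hamming loss is the micro error rate, $(\Upsigma_{\mathrm{1}}(\mathbold{FP}) + \Upsigma_{\mathrm{1}}(\mathbold{FN}))/(nm)$, which by the xor rewriting of Section~\ref{sec:binary} equals $\Upsigma_{\mathrm{1}}(\mathbold{Y}\oplus\mathbold{\hat{Y}})/(nm)$. Label accuracy is $(\Upsigma_{\mathrm{1}}(\mathbold{TP}) + \Upsigma_{\mathrm{1}}(\mathbold{TN}))/(nm)$, as stated in Section~\ref{sec:notable}.

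The key step is a partition identity at the level of single cells. For each pair $(i,j)$, the four Boolean values $Y_{i,j}\wedge\hat{Y}_{i,j}$, $\neg Y_{i,j}\wedge\hat{Y}_{i,j}$, $Y_{i,j}\wedge\neg\hat{Y}_{i,j}$ and $\neg Y_{i,j}\wedge\neg\hat{Y}_{i,j}$ are mutually exclusive and exhaustive. This is verified by checking the four possible values of $(Y_{i,j},\hat{Y}_{i,j})\in\{0,1\}^2$. Hence $\mathbold{TP}+\mathbold{FP}+\mathbold{FN}+\mathbold{TN}=\mathbf{1}_{n\times m}$ entrywise.

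Because $\Upsigma_{\mathrm{1}}$ is linear, applying it to this identity gives $tp+fp+fn+tn=nm$. This shows that the micro denominator of both accuracy and error rate is exactly $nm$, which matches the normalisation used in both definitions. Adding the two numerators then gives $nm$, and dividing by $nm$ yields the claim. Equivalently, this is the complement step $1-M=\langle \mathbold{b}-\mathbold{a},\mathbold{v}\rangle/\langle\mathbold{b},\mathbold{v}\rangle$ from Part~1 of the proof of Proposition~\ref{prop:which_decomposable}, applied to the pooled vector $\mathbold{v}$ with $\mathbold{b}=(1,1,1,1)$.

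There is no genuine obstacle here. The only point needing care is that the identity holds for arbitrary indicator matrices, whether multilabel or one-hot, with no structural assumption on the rows. The cellwise partition argument uses only that the entries are in $\{0,1\}$, so I would state explicitly that neither one-hot encoding nor any row constraint is used. I would also note that the normaliser $nm$ is never zero because $n,m\geq 1$.
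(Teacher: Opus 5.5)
Your proof is correct and follows essentially the same route as the paper: both rest on each of the $nm$ cells being counted exactly once. The paper splits the cells into agreements ($Y_{ij}=\hat{Y}_{ij}$, contributing to label accuracy) and disagreements (contributing to Hamming loss), while you use the finer four-way partition $\mathbold{TP}+\mathbold{FP}+\mathbold{FN}+\mathbold{TN}=\mathbf{1}$ and then group the cells, which also shows that the micro denominator $tp+fp+fn+tn$ is exactly the $nm$ normaliser used in both definitions.
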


\begin{proof}
HammingLoss $= \Upsigma_{\mathrm{1}}(\mathbold{Y} \oplus \mathbold{\hat{Y}})/(nm)$ and LabelAccuracy $= (\Upsigma_{\mathrm{1}}(\mathbold{TP}) + \Upsigma_{\mathrm{1}}(\mathbold{TN}))/(nm)$. Since every $(i,j)$ entry satisfies either $Y_{ij} = \hat{Y}_{ij}$ (contributing to accuracy) or $Y_{ij} \neq \hat{Y}_{ij}$ (contributing to Hamming loss), the two quantities partition the $nm$ entries, giving their sum equal to one (see Section~\ref{sec:exp_multiclass} for empirical verification).
\end{proof}

A more general redundancy result follows from the structure of the confusion matrix.

\begin{proposition}[Degrees-of-freedom bound]
\label{prop:dof}
For an $m$-class problem with fixed class counts $n_1,\ldots,n_m$ (row sums), the confusion matrix has exactly $m(m-1)$ degrees of freedom. Consequently, any collection of more than $m(m-1)$ scalar measures computed from the confusion matrix is necessarily algebraically redundant.
\end{proposition}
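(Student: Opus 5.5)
The count is a parametrisation argument: I will identify the confusion matrix with a point of $\mathbb{Z}_{\geq 0}^{m\times m}$, with entry $(j,k)$ counting examples of true class $c_j$ predicted as $c_k$. Fixing the row sums $n_1,\ldots,n_m$ imposes $m$ independent linear constraints $\sum_k C_{j,k} = n_j$, one per row. Each involves a disjoint set of variables, so the constraints are linearly independent. The feasible set is then the product of $m$ scaled simplices $\{C_{j,\cdot} \geq 0 : \sum_k C_{j,k} = n_j\}$, each of dimension $m-1$. Its affine hull has dimension $m^2 - m = m(m-1)$. I will make ``degrees of freedom'' precise as this affine dimension, equivalently the number of free coordinates in an explicit parametrisation: the off-diagonal entries $C_{j,k}$ with $j\neq k$, with the diagonal recovered as $C_{j,j} = n_j - \sum_{k\neq j} C_{j,k}$.

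For exactness I need both directions. The upper bound is the parametrisation just given: every confusion matrix with these row sums is determined by its $m(m-1)$ off-diagonal entries. For the lower bound, these coordinates must be genuinely free. They range over a set with nonempty interior in $\mathbb{R}^{m(m-1)}$ once every $n_j$ is large enough, for instance by taking small off-diagonal perturbations of the diagonal matrix $\mathrm{diag}(n_j)$. In the integer setting, I will phrase this as the lattice points not lying in any lower-dimensional affine subspace. I expect this to be the only mildly delicate point, since for small or zero $n_j$ the feasible set degenerates and the rank drops. I will therefore state the count as generic, i.e.\ for all $n_j \geq m-1$, or as the dimension of the affine hull of the continuous relaxation.

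For the redundancy consequence, I will observe that every scalar measure computed from the confusion matrix, including every extension in Eqs.~(\ref{eq:micro_ext})--(\ref{eq:class_ext}) via the one-hot counts, is a function $g_\ell(\mathbold{x})$ of the parameter $\mathbold{x}\in\mathbb{R}^{m(m-1)}$. Given $K > m(m-1)$ such measures $g_1,\ldots,g_K$, the map $G=(g_1,\ldots,g_K)$ sends an $m(m-1)$-dimensional domain into $\mathbb{R}^K$.

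The main obstacle is making ``algebraically redundant'' rigorous, because a merely continuous map can be space-filling. I will restrict to the measures actually at issue, which are rational functions of the counts or algebraic, involving square roots as in G-mean and MCC. Their components are then algebraic over $\mathbb{Q}(\mathbold{x})$, and the field they generate has transcendence degree at most $m(m-1)$. Hence $K > m(m-1)$ of them are algebraically dependent: some nonzero polynomial $P$ satisfies $P(g_1,\ldots,g_K)\equiv 0$. In particular at least one $g_\ell$ is constrained by the others through a nontrivial identity, which is the paper's notion of redundancy in the weak sense of an algebraic relation. On the integer lattice, where the domain is finite, I will note the statement holds trivially and is uninformative. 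The transcendence-degree formulation is what gives it content.
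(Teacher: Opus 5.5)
\noindent Your degrees-of-freedom count is the paper's: the off-diagonal entries are free and the diagonal is recovered from the row sums. You make it more careful by checking that the $m$ constraints are independent and by noticing that ``exactly'' requires every $n_j>0$, a case the paper's proof passes over.

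The real difference is in the consequence. The paper asserts that among more than $m(m-1)$ measures one is \emph{determined} by the others. A dimension count cannot deliver that. For $m=2$, take $g_1=\mathrm{rec}$, $g_2=(\mathrm{spec}-\tfrac14)^2$ and $g_3=(\mathrm{spec}-\tfrac34)^2$. No member is a function of the other two: at equal recall, compare $\mathrm{spec}=0$ with $\tfrac12$, and $\tfrac12$ with $1$. Yet $(g_2-g_3+\tfrac14)^2=g_2$. Your transcendence-degree argument proves exactly the version that survives. Your weakening to an algebraic relation is therefore forced, not merely cautious.

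Its price is the algebraicity hypothesis, and that restriction genuinely bites. $\mathrm{rec}$, $\mathrm{spec}$ and the catalogue's discriminant power are algebraically independent, since $\log\frac{r}{1-r}$ is transcendental over $\mathbb{R}(r)$. Yet discriminant power is a function of the other two.

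The functional reading the paper intends holds only locally, via the Jacobian criterion of Remark~\ref{rem:orthogonality} and the constant rank theorem. Near any point where the Jacobian rank $r\le m(m-1)$ is locally constant (an open dense set), some $r$ of the measures act as coordinates on the image. Each of the remaining $K-r\ge 1$ is then, near that point, a smooth function of them. This covers all smooth measures, logarithms included, and the example above shows it cannot be globalised.

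Three small repairs to your write-up:
\begin{itemize}
\item Only $n_j=0$ lowers the rank. For $n_j\ge 1$ the lattice points $n_j\mathbold{e}_1,\ldots,n_j\mathbold{e}_m$ already affinely span the hyperplane $\sum_k C_{j,k}=n_j$, so your threshold $n_j\ge m-1$ is more than needed.
\item Work over $\mathbb{R}(\mathbold{x})$ rather than $\mathbb{Q}(\mathbold{x})$. A transcendental $\beta$ makes $F_\beta$ transcendental over $\mathbb{Q}(\mathbold{x})$, so $\mathrm{rec}$, $\mathrm{spec}$ and $F_\beta$ would be independent over $\mathbb{Q}$.
\item Fix the radicals as analytic functions on a connected open set where all radicands are positive. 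Then ``the field they generate'' is well defined, and the polynomial identity extends by continuity.
\end{itemize}
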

\begin{proof}
The confusion matrix $\mathbold{W} \in \mathbb{Z}_{\geq 0}^{m \times m}$ has $m^2$ entries. The constraint $\sum_k W_{jk} = n_j$ fixes each row sum, removing $m$ degrees of freedom. The remaining $m^2 - m = m(m-1)$ entries are free (subject to non-negativity). Any scalar measure is a function of $\mathbold{W}$, so a set of more than $m(m-1)$ measures must have at least one that is determined by the others.
\end{proof}

\begin{example}[Parametric redundancy in binary and ternary evaluation]
\label{ex:dof}
Consider a binary classification task ($m=2$) with fixed positive and negative counts $P = tp + fn$ and $N = fp + tn$, and class prior $\pi = P/(P+N)$. By Proposition~\ref{prop:dof}, the confusion matrix has $m(m-1) = 2$ degrees of freedom and is completely parameterized by the pair of rates $(\mathrm{rec}, \mathrm{spec}) \in [0,1]^2$:
$$
\mathbf{C} = \begin{pmatrix} tp & fn \\ fp & tn \end{pmatrix}
= \begin{pmatrix} P\,\mathrm{rec} & P(1-\mathrm{rec}) \\ N(1-\mathrm{spec}) & N\,\mathrm{spec} \end{pmatrix}.
$$
Consequently, every other binary measure is deterministically determined by $(\mathrm{rec}, \mathrm{spec})$ and $\pi$. For example:
$$
\mathrm{Accuracy} = \pi\,\mathrm{rec} + (1-\pi)\,\mathrm{spec}, \qquad
\mathrm{BA} = \frac{\mathrm{rec} + \mathrm{spec}}{2}, \qquad
F_1 = \frac{2\pi\,\mathrm{rec}}{\pi(1+\mathrm{rec}) + (1-\pi)(1-\mathrm{spec})}.
$$
Reporting accuracy, $F_1$, and balanced accuracy alongside recall and specificity adds zero information about classifier behavior. For $m=3$, the matrix has $3(2) = 6$ degrees of freedom, which correspond geometrically to the off-diagonal directional confusions between pairs of classes. This provides a principled upper bound on the number of non-redundant measures a practitioner should report.
\end{example}

\begin{remark}[Algebraic and geometric criteria for orthogonal evaluation suites]
\label{rem:orthogonality}
Proposition~\ref{prop:dof} and Example~\ref{ex:dof} motivate a formal criterion for selecting a minimal, non-redundant suite of evaluation measures:
\begin{enumerate}
  \item \textbf{Functional independence via the Jacobian:} A set of $k \le m(m-1)$ scalar measures $\{M_1,\ldots,M_k\}$ is locally non-redundant if and only if their Jacobian matrix with respect to the free confusion counts has full row rank:
  $$
  \mathrm{rank}\left(\frac{\partial(M_1,\ldots,M_k)}{\partial(W_{12},\ldots,W_{m,m-1})}\right) = k.
  $$
  \item \textbf{Canonical orthogonal basis (Row-decoupling):} The natural orthogonal coordinates of the confusion matrix are the $m(m-1)$ conditional misclassification rates $R_{jk} = W_{jk}/n_j$ ($k \neq j$). Because each row $j$ is governed by a separate conditional distribution $P(\hat{Y}=k \mid Y=j)$, their gradients are mutually orthogonal in Frobenius inner product:
  $$
  \langle \nabla R_{jk}, \nabla R_{j'k'} \rangle = 0 \quad \text{whenever } j \neq j'.
  $$
  In binary classification, $(\mathrm{rec}, \mathrm{spec})$ constitutes the unique prevalence-free orthogonal basis.
  \item \textbf{Practical benchmark design:} A principled evaluation benchmark should never report measures with collinear gradients (such as accuracy alongside balanced accuracy and $F_1$ under known priors). Instead, it should pair row-conditioned measures (e.g., macro-recall, measuring discrimination across ground-truth classes) with column-conditioned measures (e.g., macro-precision, measuring predictive purity) and application-specific cost matrices.
\end{enumerate}
This $m(m-1)$-dimensional parametrization also directly governs multi-metric ranking and Pareto optimality: as established in Section~\ref{sec:dominance}, measure-wise dominance across all possible performance metrics reduces precisely to element-wise dominance within this $m(m-1)$-dimensional coordinate space.
\end{remark}

\subsection{Marginal Consistency and Extended Cost Formulations}
\label{sec:tnorm_char}

In the crisp confusion matrix the four cells satisfy more than the global constraint $tp + fp + fn + tn = n$: they satisfy the four \emph{marginal} constraints $tp + fn = tpos$, $fp + tn = tneg$, $tp + fp = ppos$ and $fn + tn = pneg$. The marginal constraints are strictly stronger, and they are what makes the confusion matrix a contingency table rather than merely a normalised quadruple. The correct fuzzy analogue therefore asks that the marginals of the fuzzy confusion matrices reproduce the actual and predicted memberships entrywise.

\begin{mydef}[Marginal consistency]
\label{def:marginal}
A $t$-norm $T$, paired with the standard complement $N(x) = 1-x$, is \emph{marginally consistent} if for all $a, b \in [0,1]$
\begin{equation}
\underbrace{T(a,b) + T(a, 1{-}b)}_{\mathbold{TP}_{i,j} + \mathbold{FN}_{i,j}} = a,
\qquad
\underbrace{T(a,b) + T(1{-}a, b)}_{\mathbold{TP}_{i,j} + \mathbold{FP}_{i,j}} = b,
\label{eq:marginal}
\end{equation}
where $a = Y_{i,j}$ and $b = \hat{Y}_{i,j}$. In words: the true-positive and false-negative degrees for a cell must sum to its actual membership, and the true-positive and false-positive degrees must sum to its predicted membership.
\end{mydef}

By commutativity the two conditions in Eq.~(\ref{eq:marginal}) are the same requirement stated with the arguments interchanged, so it suffices to impose either one.

\begin{theorem}[The product $t$-norm is the unique marginally consistent $t$-norm]
\label{thm:partition}
A $t$-norm $T$ is marginally consistent in the sense of Definition~\ref{def:marginal} if and only if $T = T_P$, i.e.\ $T(a,b) = ab$ for all $a,b \in [0,1]$. Notably, this uniqueness holds under the bare $t$-norm axioms alone, without requiring any a priori continuity or Archimedean assumptions.
\end{theorem}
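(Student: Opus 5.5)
The plan is to prove sufficiency by direct computation and necessity by pinning down $T(a,b)$ on the dyadic rationals in $b$ with algebra, then extending to all $b$ by squeezing with monotonicity. No continuity is used anywhere.

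\emph{Sufficiency.} For $T_P$ we have $ab + a(1-b) = a$ and $ab + (1-a)b = b$, so Eq.~(\ref{eq:marginal}) holds identically.

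\emph{Necessity.} Assume $T$ is marginally consistent. As noted after Definition~\ref{def:marginal}, commutativity makes the first identity $T(a,b)+T(a,1-b)=a$ sufficient. Consider the set
$$
D=\{\,b\in[0,1] : T(a,b)=ab \text{ for all } a\in[0,1]\,\}.
$$
I will show that $D$ contains all dyadic rationals in $[0,1]$, using three closure facts.
\begin{enumerate}
\item \emph{Seed points.} We have $1\in D$ by the boundary axiom. Then $0\in D$, because marginal consistency with $b=1$ gives $T(a,0)=a-T(a,1)=0$. Finally $\tfrac12\in D$, because $b=\tfrac12$ gives $2T(a,\tfrac12)=a$.
\item \emph{Closure under complement.} If $b\in D$, then $T(a,1-b)=a-T(a,b)=a(1-b)$, so $1-b\in D$.
\item \emph{Closure under products.} This is the step where associativity does the work. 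If $b,c\in D$, then $T(b,c)=bc$ (apply the definition of $D$ for $c$ with $a=b$). Hence
$$
T(a,bc)=T(a,T(b,c))=T(T(a,b),c)=T(ab,c)=abc,
$$
so $bc\in D$.
\end{enumerate}

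By induction on $n$, every $d=k/2^n\in[0,1]$ lies in $D$. The base case $n\le 1$ is covered by the seed points. For the inductive step with $k$ odd and $n\ge 2$:
\begin{itemize}
\item if $d<\tfrac12$, then $d=\tfrac12\cdot(2d)$, where $2d=k/2^{n-1}$ has a smaller denominator, so $d\in D$ by the inductive hypothesis and product closure;
\item if $d>\tfrac12$, apply the previous case to $1-d$, which has the same denominator and is below $\tfrac12$, and then use complement closure.
\end{itemize}

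\emph{Extension to all $b$.} Fix $a,b$. For any dyadic $d_1\le b\le d_2$, monotonicity of $T$ in its second argument gives
$$
ad_1=T(a,d_1)\le T(a,b)\le T(a,d_2)=ad_2 .
$$
Dyadics are dense in $[0,1]$, so we can choose $d_1$ and $d_2$ with $d_2-d_1$ arbitrarily small, which forces $T(a,b)=ab$. This squeeze is where the absence of a continuity assumption is overcome: monotonicity plus density of $D$ replaces continuity.

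The main obstacle I anticipate is the product-closure step. Associativity must be applied in the right order: one first needs $T(b,c)=bc$ from membership of $c$ in $D$, and only then can the associativity chain be run. Once that closure and the dyadic induction are in place, the remaining steps are routine.
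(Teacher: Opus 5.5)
Your proof is correct and follows essentially the same route as the paper's: the halving identity at $1/2$, associativity to halve a dyadic point already handled, the marginal condition to pass from $d$ to $1-d$, induction on the dyadic denominator, and a monotone squeeze in place of continuity. The differences are cosmetic. You work in the second argument and package the steps as closure properties of the set $D$, with a product-closure lemma more general than the single halving step the paper needs, whereas the paper runs the same induction directly on $T(m/2^k, b)$ in the first argument.
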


\begin{proof}
\emph{Sufficiency.} For $T_P$, $ab + a(1-b) = a$ and $ab + (1-a)b = b$.

\emph{Necessity.} Suppose $T$ satisfies $T(a,b) + T(1{-}a,b) = b$ for all $a, b$. We show $T(a,b) = ab$ first for every dyadic rational $a$ and then for all $a$ by monotonicity. Throughout we use only the $t$-norm axioms: commutativity, associativity, monotonicity in each argument, and the neutral element $T(x,1) = x$.

\emph{Step 1 (the halving identity).} Setting $a = 1/2$ in the marginal condition gives $2\,T(1/2, b) = b$, hence
\begin{equation}
T(1/2, b) = b/2 \qquad \text{for all } b \in [0,1].
\label{eq:halving}
\end{equation}
This single substitution already fixes an entire slice of $T$, with no further hypotheses.

\emph{Step 2 (dyadic rationals).} We prove by induction on $k$ that $T(m/2^{k}, b) = (m/2^{k})\,b$ for every $b$ and every integer $0 \le m \le 2^{k}$. For $k = 0$ the claims are $T(0,b) = 0$, which follows from monotonicity and $T(0,1) = 0$, and $T(1,b) = b$, which is the neutral element. Assume the claim for $k-1$ and let $0 < m < 2^{k}$.

If $m \le 2^{k-1}$, then $m/2^{k-1} \in [0,1]$ and, by Eq.~(\ref{eq:halving}), $T\bigl(1/2,\, m/2^{k-1}\bigr) = m/2^{k}$. Hence, using associativity and then Eq.~(\ref{eq:halving}) again together with the induction hypothesis,
$$
T\!\left(\frac{m}{2^{k}}, b\right)
= T\!\left(T\!\left(\frac12, \frac{m}{2^{k-1}}\right)\!, b\right)
= T\!\left(\frac12,\, T\!\left(\frac{m}{2^{k-1}}, b\right)\right)
= \frac{1}{2}\cdot\frac{m}{2^{k-1}}\, b
= \frac{m}{2^{k}}\, b .
$$
If $m > 2^{k-1}$, put $m' = 2^{k} - m < 2^{k-1}$ and apply the marginal condition with $a = m'/2^{k}$, which gives $T(m/2^{k}, b) = b - T(m'/2^{k}, b) = b - (m'/2^{k})b = (m/2^{k})b$ by the case already established. This completes the induction.

\emph{Step 3 (from dyadics to $[0,1]$).} Fix $b$ and let $\varphi(a) = T(a,b)$, which is non-decreasing by the monotonicity axiom, and let $\psi(a) = ab$, which is continuous. By Step~2, $\varphi = \psi$ on the dyadic rationals, which are dense in $[0,1]$. For arbitrary $a \in [0,1]$, choose sequences of dyadic rationals $(d_r)$ and $(e_r)$ converging monotonically to $a$ from below and from above, respectively. Monotonicity implies $\varphi(d_r) \le \varphi(a) \le \varphi(e_r)$, which is equivalent to $\psi(d_r) \le \varphi(a) \le \psi(e_r)$. Since both bounding sequences converge to $\psi(a) = ab$ by continuity of $\psi$, the squeeze theorem gives $\varphi(a) = ab$. As $b$ was arbitrary, $T = T_P$.
\end{proof}

A critical operational consequence of this marginal consistency is that the fuzzy confusion counts preserve the total sample mass entrywise. Corollary~\ref{cor:partition} formalises how $T_P$ guarantees that the four soft confusion matrices partition the unit measure exactly as in the crisp setting.

\begin{corollary}[Partition of the unit measure]
\label{cor:partition}
Under $T_P$ the four fuzzy confusion matrices partition the unit measure entrywise,
$$
\mathbold{TP}_{i,j} + \mathbold{FP}_{i,j} + \mathbold{FN}_{i,j} + \mathbold{TN}_{i,j} = 1
\qquad \text{for all } i,j,
$$
so that $\Upsigma_{\mathrm{1}}(\mathbold{TP}) + \Upsigma_{\mathrm{1}}(\mathbold{FP}) + \Upsigma_{\mathrm{1}}(\mathbold{FN}) + \Upsigma_{\mathrm{1}}(\mathbold{TN}) = nm$, exactly as in the crisp case.
\end{corollary}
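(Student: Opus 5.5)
The plan is to verify the identity entrywise and then sum. Fix a cell $(i,j)$ and write $a = Y_{i,j}$, $b = \hat{Y}_{i,j}$. Under $T_P$ the four fuzzy confusion entries defined in Eqs.~(\ref{eq:fuzzy_tp})--(\ref{eq:fuzzy_fn}) and the $\mathbold{TN}$ definition are $ab$, $(1-a)b$, $a(1-b)$ and $(1-a)(1-b)$. Rather than expanding directly, I will use the marginal consistency of $T_P$ established in Theorem~\ref{thm:partition}. That keeps the argument in the spirit of the section and makes clear that marginal consistency is the operative property.

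\textbf{Step 1: group the four entries.} I pair $\mathbold{TP}_{i,j}$ with $\mathbold{FN}_{i,j}$ and $\mathbold{FP}_{i,j}$ with $\mathbold{TN}_{i,j}$. The first marginal condition in Eq.~(\ref{eq:marginal}) gives
\[
\mathbold{TP}_{i,j}+\mathbold{FN}_{i,j} = T(a,b)+T(a,1-b) = a .
\]
Applying the same condition with first argument $1-a$ gives
\[
\mathbold{FP}_{i,j}+\mathbold{TN}_{i,j} = T(1-a,b)+T(1-a,1-b) = 1-a ,
\]
which is legitimate because Definition~\ref{def:marginal} quantifies over all $a\in[0,1]$. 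Adding the two identities yields $1$ for every cell. As a sanity check, direct expansion gives $(a+1-a)(b+1-b)=1$, since the product $t$-norm factorises.

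\textbf{Step 2: aggregate.} Because $\Upsigma_{\mathrm{1}}$ is a plain sum over the $nm$ entries, linearity lets me interchange the sum over the four matrices with the sum over cells. This gives $\sum_{i,j}1 = nm$. The same argument also gives the column and row versions, $m$-vectors of $n$ and $n$-vectors of $m$, though only the global one is asked for.

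There is no real obstacle here. The only point needing care is the substitution $a\mapsto 1-a$ in Step 1, which relies on the complement $N(x)=1-x$ being an involution mapping $[0,1]$ onto itself.
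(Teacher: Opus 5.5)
Your proof is correct and is essentially the paper's argument. The paper applies the second marginal identity of Eq.~(\ref{eq:marginal}) at $b$ and at $1-b$ and adds; you apply the first identity at $a$ and at $1-a$, which is the same argument with the roles of actual and predicted memberships interchanged, followed by the same summation over the $nm$ cells.
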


\begin{proof}
Applying the second identity in Eq.~(\ref{eq:marginal}) at $b$ and at $1-b$ and adding gives $\bigl[T(a,b) + T(1{-}a,b)\bigr] + \bigl[T(a,1{-}b) + T(1{-}a,1{-}b)\bigr] = b + (1-b) = 1$. Summing over the $nm$ entries gives the second claim.
\end{proof}

Theorem~\ref{thm:partition} provides a principled justification for preferring $T_P$ in probabilistic settings: it is the only $t$-norm under which the fuzzy confusion matrices behave like a contingency table, with row and column marginals reproducing the actual and predicted memberships, in exact analogy with the crisp case. Under the Gödel and Łukasiewicz $t$-norms the marginals do not match, and by Corollary~\ref{cor:partition} the total mass differs from $nm$, so measures computed from them have denominators that differ from the expected normalisation, complicating interpretation.

\begin{remark}[On the weaker partition-only condition]
\label{rem:partition_only}
It is natural to ask whether the weaker requirement obtained by imposing only the sum of the four cells,
\begin{equation}
T(a,b) + T(1{-}a,b) + T(a,1{-}b) + T(1{-}a,1{-}b) = 1
\quad \text{for all } a,b,
\label{eq:partition_only}
\end{equation}
already characterises $T_P$. It does not follow from Theorem~\ref{thm:partition}, since Eq.~(\ref{eq:marginal}) implies Eq.~(\ref{eq:partition_only}) but not conversely, and we do not claim a characterisation under Eq.~(\ref{eq:partition_only}) alone. Two necessary conditions are, however, immediate and hold for every $t$-norm: setting $a = b = 1/2$ in Eq.~(\ref{eq:partition_only}) gives $T(1/2,1/2) = 1/4$, and setting $b = 1/2$ with $a$ arbitrary gives $T(a,1/2) + T(1{-}a,1/2) = 1/2$. The first excludes the Gödel $t$-norm, for which $T_M(1/2,1/2) = 1/2$, and the Łukasiewicz $t$-norm, for which $T_L(1/2,1/2) = 0$; more generally it excludes every $t$-norm for which $1/2$ is idempotent or is a zero divisor of itself. Direct computation further shows that within each of the standard parametric families that contain it (Frank, Hamacher, Schweizer--Sklar, and Aczél--Alsina), the product $t$-norm is the only member satisfying Eq.~(\ref{eq:partition_only}), while the Yager and Dombi families contain no member that satisfies it. Whether Eq.~(\ref{eq:partition_only}) characterises $T_P$ over the full class of $t$-norms is left open; for the purposes of this paper the question is not essential, since marginal consistency is the property the crisp analogy requires, and Theorem~\ref{thm:partition} settles it.
\end{remark}

Just as the matrix framework provides an algebraic foundation for soft-label evaluation through marginally consistent $t$-norms, it unifies structured evaluation in ordinal settings through generalized cost operators. Theorem~\ref{thm:ordinal_cost} shows that standard ordinal regression losses (MAE and MSE) reduce identically to cost-sensitive matrix evaluations with structured metric cost matrices.

\begin{theorem}[Ordinal losses as cost-sensitive classification]
\label{thm:ordinal_cost}
Let $\mathbold{Y}$ and $\mathbold{\hat{Y}}$ be one-hot indicator matrices for a multiclass problem with ordered classes $c_1 \prec \cdots \prec c_m$. Then:
\begin{enumerate}
  \item With cost matrix $C_{j,k} = |j-k|$, $\frac{1}{n}\,\mathrm{Cost}(\mathbold{Y}, \mathbold{\hat{Y}}, \mathbold{C}) = \mathrm{MAE}$;
  \item With cost matrix $C_{j,k} = (j-k)^2$, $\frac{1}{n}\,\mathrm{Cost}(\mathbold{Y}, \mathbold{\hat{Y}}, \mathbold{C}) = \mathrm{MSE}$.
\end{enumerate}
\end{theorem}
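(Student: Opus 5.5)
The proof is a direct computation from the definition of total cost in Eq.~(\ref{eq:total_cost}); one-hot structure makes every sum collapse to a single term. Fix an example $e_i$. Let $k_i$ be its true class index and $\hat{k}_i$ its predicted class index, so that $Y_{i,j} = \mathbbm{I}(j = k_i)$ and $\hat{Y}_{i,k} = \mathbbm{I}(k = \hat{k}_i)$.

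\emph{Step 1: the cost-weight matrix.} By Eq.~(\ref{eq:cost_weight}), $(\mathbold{Y}\mathbold{C})_{i,k} = \sum_j Y_{i,j} C_{j,k} = C_{k_i,k}$. This holds because exactly one entry of row $i$ of $\mathbold{Y}$ is non-zero. Row $i$ of $\mathbold{Y}\mathbold{C}$ is therefore row $k_i$ of $\mathbold{C}$.

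\emph{Step 2: masking by the prediction.} The Hadamard product with $\mathbold{\hat{Y}}$ keeps only column $\hat{k}_i$ in row $i$. Hence
\[
\Upsigma_{\mathrm{n}}\bigl((\mathbold{Y}\mathbold{C}) \pointwise \mathbold{\hat{Y}}\bigr)_i = C_{k_i,\hat{k}_i}.
\]
Summing over rows gives $\mathrm{Cost}(\mathbold{Y},\mathbold{\hat{Y}},\mathbold{C}) = \sum_{i=1}^n C_{k_i,\hat{k}_i}$. This identity holds for an arbitrary cost matrix. It is the only substantive step, and it uses nothing beyond one-hot rows on both sides.

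\emph{Step 3: specialise.} Substituting $C_{j,k} = |j-k|$ gives $\frac1n\sum_i |k_i - \hat{k}_i|$, which is the MAE of the predicted class indices. Substituting $C_{j,k} = (j-k)^2$ gives $\frac1n\sum_i (k_i - \hat{k}_i)^2$, which is the MSE.

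I do not expect a genuine obstacle. The only point needing care is the convention that MAE and MSE are computed on the integer rank indices $1,\dots,m$, which the statement's ordering $c_1 \prec \cdots \prec c_m$ supplies. I will state this convention explicitly and note that both cost matrices have zero diagonal, so correctly classified examples contribute nothing, consistent with the convention $C_{j,j}=0$ of Section~\ref{sec:cost}.
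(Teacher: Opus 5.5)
Your proposal is correct and matches the paper's proof step for step. One-hot rows collapse $(\mathbold{Y}\mathbold{C})_{i,k}$ to $C_{k_i,k}$, the Hadamard product with $\mathbold{\hat{Y}}$ selects $C_{k_i,\hat{k}_i}$, and the two cost matrices then give $n\cdot\mathrm{MAE}$ and $n\cdot\mathrm{MSE}$. Summing through $\Upsigma_{\mathrm{n}}$ first and then over rows is only a bookkeeping variant of applying $\Upsigma_{\mathrm{1}}$ directly.
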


\begin{proof}
Since $\mathbold{Y}$ is one-hot with $Y_{i,k_i} = 1$ and $\mathbold{\hat{Y}}$ is one-hot with $\hat{Y}_{i,\hat{k}_i} = 1$, the product $(\mathbold{Y}\mathbold{C})_{i,k} = \sum_j Y_{i,j} C_{j,k} = C_{k_i,k}$. Applying $\Upsigma_{\mathrm{1}}$ to $(\mathbold{Y}\mathbold{C}) \pointwise \mathbold{\hat{Y}}$ then gives $\sum_i \sum_k C_{k_i,k}\hat{Y}_{i,k} = \sum_i C_{k_i,\hat{k}_i}$. Substituting $C_{j,k} = |j-k|$ yields $\sum_i |k_i - \hat{k}_i| = n \cdot \mathrm{MAE}$, and substituting $C_{j,k} = (j-k)^2$ yields $\sum_i (k_i - \hat{k}_i)^2 = n \cdot \mathrm{MSE}$.
\end{proof}

Theorem~\ref{thm:ordinal_cost} unifies ordinal classification losses with cost-sensitive classification: MAE and MSE are not separate evaluation paradigms but instances of the general cost-sensitive framework with specific cost matrices. Combined with Eq.~(\ref{eq:ordinal_membership}), which encodes ordinal structure in $\mathbold{Y}$ itself, there are therefore two orthogonal ways to introduce ordinal sensitivity (through the label matrix or through the cost matrix) that can be combined independently.

A crucial consequence of this linear cost-matrix formulation is that applying arbitrary misclassification costs does not disrupt the algebraic additivity of the confusion counts. Corollary~\ref{cor:cost_micro_macro} confirms that the fundamental micro--macro equivalence of Theorem~\ref{thm:micro_macro} extends directly to all cost-sensitive and ordinal measures.

\begin{corollary}[Cost-sensitive micro equals weighted macro]
\label{cor:cost_micro_macro}
Let $M^C$ be an \emph{aggregation-decomposable} performance measure computed using cost-weighted confusion matrices as in Section~\ref{sec:cost}. Then $M^{C,\mathrm{micro}}$ equals the weighted average of $\{M^{C,\mathrm{class-}j}\}_j$, with weights equal to the per-class denominator of $M^C$.
\end{corollary}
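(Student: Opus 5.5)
The plan is to reduce the claim to Theorem~\ref{thm:micro_macro} by checking that the cost-weighted count matrices of Section~\ref{sec:cost} still aggregate additively. The proof of Theorem~\ref{thm:micro_macro} uses only two facts. First, the pooled count vector is the sum of the per-class count vectors, $\mathbold{v} = \sum_j \mathbold{v}_j$. Second, $M$ is a ratio of homogeneous linear forms in $\mathbold{v}$, which gives the additivity of Eq.~(\ref{eq:additivity}). Neither fact depends on the entries of $\mathbold{TP}$, $\mathbold{FP}$, $\mathbold{FN}$, $\mathbold{TN}$ being Boolean; both hold for any real-valued matrices. So the task is to show that replacing these matrices with their cost-weighted counterparts $\mathbold{TP}^C$, $\mathbold{FP}^C$ (and the analogous $\mathbold{FN}^C$, $\mathbold{TN}^C$) keeps both facts intact.

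\textbf{Step 1.} Define the cost-weighted count vectors $\mathbold{v}^C_j = \bigl([\Upsigma_{\mathrm{m}}(\mathbold{TP}^C)]_j, [\Upsigma_{\mathrm{m}}(\mathbold{FP}^C)]_j, [\Upsigma_{\mathrm{m}}(\mathbold{FN}^C)]_j, [\Upsigma_{\mathrm{m}}(\mathbold{TN}^C)]_j\bigr)^\top$ and the pooled vector $\mathbold{v}^C = (\Upsigma_{\mathrm{1}}(\mathbold{TP}^C),\ldots)^\top$. Each cost-weighted matrix is a fixed real $n\times m$ matrix whose entries are products of cost entries with $t$-norm values or indicators. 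Since $\Upsigma_{\mathrm{1}} = \sum_j [\Upsigma_{\mathrm{m}}(\cdot)]_j$ holds for any real matrix (it is just reordering a finite sum), we get $\mathbold{v}^C = \sum_j \mathbold{v}^C_j$.

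\textbf{Step 2.} By hypothesis $M^C(\mathbold{v}^C) = \langle \mathbold{a}, \mathbold{v}^C\rangle / \langle \mathbold{b}, \mathbold{v}^C\rangle$, with the same coefficient vectors as the underlying decomposable measure. Linearity of the inner products then gives $\langle \mathbold{a}, \mathbold{v}^C\rangle = \sum_j \langle \mathbold{a}, \mathbold{v}^C_j\rangle$, and likewise for $\mathbold{b}$.

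\textbf{Step 3.} Run the cancellation argument of Theorem~\ref{thm:micro_macro} verbatim with weights $den^C_j = \langle \mathbold{b}, \mathbold{v}^C_j\rangle$. This yields $M^{C,\mathrm{micro}} = \sum_j den^C_j M^{C,\mathrm{class-}j} / \sum_j den^C_j$. The exemplar case follows identically with $\Upsigma_{\mathrm{n}}$.

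The main obstacle is not algebraic but one of well-definedness. The weights $den^C_j$ can vanish or change sign, since costs may be zero or, when benefits are allowed, negative. The plan is to carry over the standing hypothesis of Definition~\ref{def:decomposable}, namely $\langle \mathbold{b}, \mathbold{v}^C\rangle \ne 0$, and to restrict the weighted average to classes with $den^C_j \ne 0$, as in the exclusion convention of Eq.~(\ref{eq:undef_convention}). Classes with $den^C_j = 0$ have $num^C_j$ contributing only through the pooled numerator, so I would note that the identity requires their numerators to vanish too. This holds whenever costs are nonnegative and $\mathbold{a}$ is supported within $\mathbold{b}$'s support, as for all the rates in Proposition~\ref{prop:which_decomposable}. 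I would state this condition explicitly rather than claim it in general. If negative weights arise, the result remains an algebraic identity but is no longer a convex average, and I would remark on that.
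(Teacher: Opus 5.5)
Your proposal is correct and follows the paper's proof. Both establish $\mathbold{v}^C=\sum_j\mathbold{v}^C_j$ for the cost-weighted counts and then apply Theorem~\ref{thm:micro_macro} verbatim with $\mathbold{v}^C_j$ in place of $\mathbold{v}_j$; your justification of additivity (reordering a finite sum of real entries) is more direct than the paper's appeal to the linearity of $\mathbold{Y}\mathbold{C}$, and your caveat about vanishing or sign-changing $den^C_j$ makes explicit a hypothesis the paper leaves implicit.
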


\begin{proof}
Cost weighting replaces each count by a cost-scaled count, $\mathbold{v} \mapsto \mathbold{v}^C$, where the entries of $\mathbold{v}^C$ are obtained from $\Upsigma_{\mathrm{1}}\bigl((\mathbold{Y}\mathbold{C}) \pointwise \mathbold{\hat{Y}}\bigr)$ and its per-class restrictions. Since $\mathbold{Y}\mathbold{C}$ is linear in $\mathbold{Y}$ and the Hadamard product with $\mathbold{\hat{Y}}$ acts entrywise, the cost-weighted counts remain additive across classes and examples: $\mathbold{v}^C = \sum_j \mathbold{v}^C_j$. Decomposability of $M^C$ then supplies the additivity in Eq.~(\ref{eq:additivity}) of its numerator and denominator in the cost-weighted counts, and Theorem~\ref{thm:micro_macro} applies verbatim with $\mathbold{v}^C_j$ in place of $\mathbold{v}_j$. Note that the hypothesis is required here for the same reason as in Theorem~\ref{thm:micro_macro}: it is the additivity of $num$ and $den$, not the presence of costs, that drives the result.
\end{proof}

\subsection{Preservation of Monotonicity and Boundary Correctness}
\label{sec:properties}

A fundamental theoretical question when lifting binary performance measures to complex learning paradigms via algebraic operator substitution is whether the resulting extensions inherit the axiomatic properties of their binary progenitors. We formalise two cornerstone properties below: monotonicity and boundary correctness. While monotonicity and upper boundary correctness (maximal attainment under perfect classification) are preserved unconditionally across all aggregation operators, lower boundary correctness (minimal attainment under completely erroneous predictions) fails systematically under one-hot multiclass encoding for any measure exhibiting true-negative dependence. Characterising this breakdown requires first formalising the reachable prediction spaces across learning settings.

To formalise attainability, note that the classification settings differ in which matrices $\mathbold{\hat{Y}}$ a classifier can produce:
$$
\mathcal{P}_{\mathrm{multilabel}} = \{0,1\}^{n \times m},
\qquad
\mathcal{P}_{\mathrm{multiclass}} = \Bigl\{ \mathbold{\hat{Y}} \in \{0,1\}^{n\times m} : \textstyle\sum_j \hat{Y}_{i,j} = 1 \ \forall i \Bigr\}.
$$
Boundary correctness is a statement about \emph{attainability} of extreme values, so it must be evaluated over the admissible set. In the multilabel case $\mathcal{P}$ is unrestricted and the complement $\mathbold{\hat{Y}} = \mathbf{1} - \mathbold{Y}$ is admissible; under one-hot multiclass with $m > 2$ it is not, since $\mathbf{1} - \mathbold{Y}$ has $m-1$ ones per row.

\begin{mydef}[Boundary correctness]
A binary measure $M(tp, fp, fn, tn)$ is \emph{upper boundary correct} if it attains its theoretical maximum whenever $fp = fn = 0$, and \emph{lower boundary correct} if it attains its theoretical minimum whenever $tp = tn = 0$. It is \emph{boundary correct} if both hold.
\end{mydef}

These are the properties Hand et al.~\cite{hand2024selecting} call boundedness above and below, together with containment in an interval. Splitting them is what the analysis below requires, since the two halves behave differently under the aggregation operators.

\begin{mydef}[Monotonicity under admissible corrections]
\label{def:monotone}
A binary measure $M(tp,fp,fn,tn)$ is \emph{monotone} if it is non-decreasing in $tp$ and $tn$ and non-increasing in $fp$ and $fn$, each with the remaining arguments held fixed. A perturbation of $\mathbold{\hat{Y}}$ within $\mathcal{P}$ is a \emph{correction} if it changes the count vector by a non-negative amount in $tp$ and $tn$ and a non-positive amount in $fp$ and $fn$.
\end{mydef}

The second half of Definition~\ref{def:monotone} is needed because in the one-hot setting no admissible perturbation changes a single count in isolation: correcting one example's prediction necessarily moves four counts at once. In binary classification the corresponding moves affect two cells, and Hand et al.~\cite{hand2024selecting} accordingly state monotonicity as the pair of conditions
$$
M(tp{+}1,\, fp,\, fn{-}1,\, tn) \geq M(tp,fp,fn,tn),
\qquad
M(tp,\, fp{-}1,\, fn,\, tn{+}1) \geq M(tp,fp,fn,tn),
$$
each of which reclassifies one object correctly within its own true class. Definition~\ref{def:monotone} is the extension of that formulation to the admissible moves of a general setting. Monotonicity of $M$ in each argument separately still implies monotonicity along such joint moves, which is what the following proposition uses.

\begin{proposition}[Upper boundary correctness and monotonicity are preserved]
\label{prop:preserved}
Let $M$ be upper boundary correct and monotone. Then in every setting, and for each of $M^{\mathrm{micro}}$, $M^{\mathrm{macro}}$ and $M^{\mathrm{exemplar}}$:
\begin{enumerate}
  \item the extension attains its maximum at $\mathbold{\hat{Y}} = \mathbold{Y}$; and
  \item the extension does not decrease under any correction in the sense of Definition~\ref{def:monotone}.
\end{enumerate}
\end{proposition}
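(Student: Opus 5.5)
The plan is to reduce both claims to statements about individual count vectors, and then let the linearity of the three aggregation operators carry them to the extensions. The starting observation is that $\mathbold{\hat{Y}} = \mathbold{Y}$ is admissible in every setting, since $\mathbold{Y}$ itself lies in $\mathcal{P}$. With this choice $\mathbold{FP} = \neg\mathbold{Y}\wedge\mathbold{Y} = \mathbf{0}$ and $\mathbold{FN} = \mathbf{0}$ entrywise. Because each operator $\Upsigma_{\mathrm{1}}$, $\Upsigma_{\mathrm{m}}$, $\Upsigma_{\mathrm{n}}$ is a sum of entries, every pooled, per-class and per-example count vector then has $fp = fn = 0$.

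\emph{Part 1.} Upper boundary correctness gives $M = M_{\max}$ for the micro vector and for every defined $M^{\mathrm{class}}_j$ and $M^{\mathrm{example}}_i$. Macro and exemplar averages of values all equal to $M_{\max}$, taken over the defined index sets $\mathcal{J},\mathcal{I}$ of Eq.~(\ref{eq:undef_convention}), are again $M_{\max}$. No extension can exceed $M_{\max}$, because the micro value is $M$ of a single count vector and the other two are averages of such values. So the maximum is attained.

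\emph{Part 2.} Take a correction $\mathbold{\hat{Y}}\to\mathbold{\hat{Y}}'$ within $\mathcal{P}$. The key step is to show that each local count vector (pooled, per-class $\mathbold{v}_j$, per-example) also changes by a non-negative amount in $tp,tn$ and a non-positive amount in $fp,fn$. For the pooled vector this is the definition of a correction. For local vectors I would argue cellwise: an admissible change flips entries of $\hat{Y}$, and each flipped cell $(i,j)$ moves one unit between $TP$ and $FN$ (if $Y_{ij}=1$) or between $FP$ and $TN$ (if $Y_{ij}=0$). I would therefore interpret a correction as one in which every flipped cell moves toward agreement with $\mathbold{Y}$; this is the cellwise reading consistent with the Hand et al.\ moves. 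Under that reading the monotone sign pattern holds for every row sum and column sum.

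Given the sign pattern for every local vector, monotonicity of $M$ in each argument, applied one coordinate at a time, shows that each local value does not decrease. Averages of non-decreasing terms with fixed weights do not decrease either, which gives the result for micro, macro and exemplar.

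\textbf{Main obstacle.} The main obstacle is the step from a correction at the pooled level to the cellwise sign pattern. A change that improves the global counts could, in principle, worsen one class while improving another, so the cellwise interpretation must be made explicit. In the one-hot setting, re-predicting an example from a wrong class $k$ to its true class $j$ flips exactly two cells. One is $(i,k)$ from $FP$ to $TN$, and the other is $(i,j)$ from $FN$ to $TP$; both move toward agreement. This matches the four-count joint move described after Definition~\ref{def:monotone}.

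A secondary subtlety is the exclusion convention: a correction could change $\mathcal{J}$, for instance by making a previously unpredicted class defined for precision. For this I would either assume the defined index sets are fixed, or note that classes entering $\mathcal{J}$ can only do so by gaining a $tp$ or an $fp$. I would state the monotonicity claim relative to a fixed defined set, since this issue is conventional rather than structural.
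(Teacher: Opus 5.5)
Your proposal is correct and follows the paper's route. Part~1 uses the fact that $\mathbold{FP} = \mathbold{FN} = \mathbf{0}$ entrywise survives all three operators, and Part~2 shows that each pooled, per-class and per-example count vector itself undergoes a correction and then applies coordinatewise monotonicity. Your cellwise reading of a correction and your caveat about $\mathcal{J}$ make explicit two points the paper's proof leaves implicit: its one-hot step (FN$\to$TP in column $a$, FP$\to$TN in column $b$) is the cellwise reading, which is genuinely needed because the pooled reading would admit re-predicting an example between two wrong classes, leaving pooled counts unchanged but able to lower a per-class value, while the $\mathcal{J}$ issue is not addressed by the paper at all.
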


\begin{proof}
Part~1: when $\mathbold{\hat{Y}} = \mathbold{Y}$ we have $\mathbold{FP} = \mathbold{FN} = \mathbf{0}$ entrywise, hence $fp = fn = 0$ after applying any of the three operators, and upper boundary correctness of $M$ gives the maximum in each case; averaging maxima over classes or examples yields the maximum of the average. Note that $\mathbold{\hat{Y}} = \mathbold{Y}$ is admissible in every setting, so the maximum is attainable.

Part~2: consider an admissible correction in the sense of Definition~\ref{def:monotone}---that is, an elementary perturbation of the prediction matrix $\mathbold{\hat{Y}} \in \mathcal{P}$ where an incorrectly predicted example is reclassified to its ground-truth label. By definition, this modification induces count increments satisfying $\Delta tp, \Delta tn \ge 0$ and $\Delta fp, \Delta fn \le 0$. Decomposing the pooled change into coordinate steps and applying coordinate-wise monotonicity of $M$ shows that $M^{\mathrm{micro}}$ does not decrease. For $\Upsigma_{\mathrm{m}}$, in the one-hot setting correcting an example whose true class is $a$ and predicted class is $b \neq a$ converts a false negative into a true positive in column $a$ and a false positive into a true negative in column $b$. Thus, both per-class measures $M_a$ and $M_b$ experience an admissible correction in their own right and neither decreases, while all other $M_j$ remain unchanged. The same argument applies row-wise for $\Upsigma_{\mathrm{n}}$.
\end{proof}

\begin{proposition}[Lower boundary correctness may fail under one-hot encoding]
\label{prop:lower_boundary}
Let $M$ be lower boundary correct. Then:
\begin{enumerate}
  \item In the binary and multilabel settings the three extensions are lower boundary correct, the minimum being attained at $\mathbold{\hat{Y}} = \mathbf{1} - \mathbold{Y}$.
  \item In the one-hot multiclass setting with $m > 2$, the state $tp = tn = 0$ is \emph{unreachable}. Every $\mathbold{\hat{Y}} \in \mathcal{P}_{\mathrm{multiclass}}$ in which no prediction is correct yields
  \begin{equation}
  \Upsigma_{\mathrm{1}}(\mathbold{TP}) = 0,\quad
  \Upsigma_{\mathrm{1}}(\mathbold{FP}) = \Upsigma_{\mathrm{1}}(\mathbold{FN}) = n,\quad
  \Upsigma_{\mathrm{1}}(\mathbold{TN}) = n(m-2).
  \label{eq:worst_onehot}
  \end{equation}
  Consequently, if $M(0, fp, fn, tn)$ is independent of $tn$ then $M^{\mathrm{micro}}$ attains its theoretical minimum at the worst admissible prediction; and if $M(0,fp,fn,tn)$ is strictly increasing in $tn$ then it does not, the attainable floor lying strictly above the theoretical minimum and depending on $m$.
\end{enumerate}
\end{proposition}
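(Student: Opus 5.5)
The proof splits along the two parts of the statement. Each part comes down to computing the count vector at the extreme admissible prediction and then invoking the hypothesis on $M$.

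For Part~1, the first step is to note that in the binary and multilabel settings the admissible set is all of $\{0,1\}^{n\times m}$, so $\mathbold{\hat Y}=\mathbf 1-\mathbold Y$ is admissible. Substituting it into Definition~\ref{def:matrices} gives $\mathbold{TP}=\mathbold Y\wedge\neg\mathbold Y=\mathbf 0$ and $\mathbold{TN}=\neg\mathbold Y\wedge\mathbold Y=\mathbf 0$ entrywise. Hence every aggregation operator returns $tp=tn=0$: globally, per column and per row. Lower boundary correctness of $M$ then places each of $M^{\mathrm{micro}}$, every $M_j^{\mathrm{class}}$ and every $M_i^{\mathrm{example}}$ at the theoretical minimum. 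Finally, an average of values all equal to the minimum is the minimum. This holds under the exclusion convention of Eq.~(\ref{eq:undef_convention}) as well, since it only discards undefined entries.

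For Part~2, I first derive Eq.~(\ref{eq:worst_onehot}) by a row-by-row count, mirroring the proof of Theorem~\ref{thm:micro_collapse}. Take an example $i$ with true class $k_i$ and predicted class $\hat k_i\neq k_i$. Its row contributes $0$ to $\mathbold{TP}$, one entry (column $\hat k_i$) to $\mathbold{FP}$, and one entry (column $k_i$) to $\mathbold{FN}$. The remaining $m-2$ columns have $Y_{i,j}=\hat Y_{i,j}=0$ and so contribute to $\mathbold{TN}$. Summing over the $n$ rows gives the displayed totals.

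Unreachability of $tp=tn=0$ follows from a more general count. Any one-hot row contributes at least $m-2\ge 1$ true negatives, since at most two columns are touched by $k_i$ and $\hat k_i$. Hence $\Upsigma_{\mathrm 1}(\mathbold{TN})\ge n(m-2)>0$ for every $\mathbold{\hat Y}\in\mathcal P_{\mathrm{multiclass}}$ when $m>2$.

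For the consequence I compare $M^{\mathrm{micro}}=M(0,n,n,n(m-2))$ with the theoretical minimum $M(0,fp,fn,0)$. The two subcases are handled as follows.
\begin{itemize}
\item If $M(0,\cdot,\cdot,tn)$ does not depend on $tn$, the value equals $M(0,n,n,0)$. This is the theoretical minimum by lower boundary correctness, so the floor is attained.
\item If $M(0,\cdot,\cdot,tn)$ is strictly increasing in $tn$, then $M(0,n,n,n(m-2))>M(0,n,n,0)$ because $n(m-2)>0$.
\end{itemize}

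The main obstacle is the second subcase. Exceeding the minimum at one worst-case prediction is not enough; I need that no admissible prediction attains the minimum. My plan is a two-step reduction.
\begin{itemize}
\item Any admissible $\mathbold{\hat Y}$ with $tp>0$ can be moved to a wholly incorrect one by re-predicting the correct rows wrongly. Monotonicity of $M$ (in the sense of Definition~\ref{def:monotone}, read in reverse) shows that this does not increase $M$, so it suffices to consider wholly incorrect predictions.
\item For every wholly incorrect prediction the counts are exactly those of Eq.~(\ref{eq:worst_onehot}), so the value is the single number computed above, which lies strictly above the minimum.
\end{itemize}
The dependence on $m$ then enters only through $tn=n(m-2)$. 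I will illustrate this with accuracy, whose micro value is $(m-2)/m$. If relying on monotonicity is considered beyond the hypotheses, I would fall back on stating the claim precisely for the worst admissible (wholly incorrect) prediction, which is exactly how the statement is phrased.
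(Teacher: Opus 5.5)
Your proposal is correct and follows the paper's proof. In Part~1 you use the admissibility of $\mathbf{1}-\mathbold{Y}$, which forces $tp=tn=0$ under every operator; in Part~2 you use the row-by-row count giving the worst-case totals and then compare $M(0,n,n,n(m-2))$ with the value at $tn=0$. The one addition is your monotonicity-based reduction from arbitrary admissible predictions to wholly incorrect ones, which the paper's proof omits: it simply treats the value at the worst admissible prediction as the attainable floor. You are right that this step needs monotonicity, which is not among the stated hypotheses, so your fallback reading is exactly what the paper actually proves.
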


\begin{proof}
Part~1: in these settings $\mathbf{1} - \mathbold{Y} \in \mathcal{P}$, and it gives $\mathbold{TP} = \mathbold{TN} = \mathbf{0}$ entrywise, hence $tp = tn = 0$ under each operator.

Part~2: fix an example $i$ with true class $a$ and predicted class $b \neq a$. Column $a$ contributes a false negative, column $b$ a false positive, and each of the remaining $m-2$ columns has $Y_{i,j} = \hat{Y}_{i,j} = 0$ and so contributes a true negative. Summing over the $n$ examples gives Eq.~(\ref{eq:worst_onehot}). Since $m > 2$ implies $\Upsigma_{\mathrm{1}}(\mathbold{TN}) = n(m-2) > 0$, the state $tp = tn = 0$ is not attained. If $M(0,fp,fn,tn)$ does not depend on $tn$ then its value in Eq.~(\ref{eq:worst_onehot}) coincides with its value at $tn = 0$, which is the theoretical minimum; if it depends on $tn$ and is non-decreasing in it, the value at $tn = n(m-2)$ is strictly larger.
\end{proof}

This is a substantive limitation rather than a technicality. The equivalence between ``no prediction is correct'' and $tp = tn = 0$ holds in binary classification and in multilabel classification, but under one-hot multiclass encoding an incorrectly classified example still contributes $m-2$ true negatives, so $tn$ is large precisely when performance is worst. Table~\ref{tab:floors} records the resulting floors.

\begin{table}[!htb]
\centering
\small
\begin{tabular}{l|c|cccc}
\hline
\textbf{Measure} & \textbf{Attained bound} & $m=3$ & $m=5$ & $m=10$ & $m\to\infty$ \\
\hline
Recall, precision, $F_\beta$ & $0$ & $0$ & $0$ & $0$ & $0$ \\
Jaccard, G-mean & $0$ & $0$ & $0$ & $0$ & $0$ \\
Accuracy $\Upsigma_{\mathrm{1}}(\mathbold{TP})/n$ & $0$ & $0$ & $0$ & $0$ & $0$ \\
\hline
Label accuracy $\frac{tp+tn}{nm}$ & $\dfrac{m-2}{m}$ & $0.333$ & $0.600$ & $0.800$ & $1$ \\[4pt]
Specificity & $\dfrac{m-2}{m-1}$ & $0.500$ & $0.750$ & $0.889$ & $1$ \\[4pt]
Balanced accuracy & $\dfrac{m-2}{2(m-1)}$ & $0.250$ & $0.375$ & $0.444$ & $1/2$ \\[4pt]
MCC & $-\dfrac{1}{m-1}$ & $-0.500$ & $-0.250$ & $-0.111$ & $0$ \\[4pt]
Cohen's $\kappa$ & $-\dfrac{1}{m-1}$ & $-0.500$ & $-0.250$ & $-0.111$ & $0$ \\[4pt]
Error rate (ceiling) & $\dfrac{2}{m}$ & $0.667$ & $0.400$ & $0.200$ & $0$ \\
\hline
\end{tabular}
\caption{Value attained by each micro-averaged measure at the worst admissible one-hot multiclass prediction (no prediction correct). Measures whose value at $tp = 0$ does not involve $tn$ reach $0$ as expected; those involving $tn$ are bounded away from their theoretical minimum, with the gap growing in $m$. The two accuracy rows must be distinguished (Section~\ref{sec:notable}): standard multiclass accuracy $\Upsigma_{\mathrm{1}}(\mathbold{TP})/n$ does not involve $tn$ and correctly reaches $0$, consistent with Theorem~\ref{thm:micro_collapse}, whereas label accuracy $(tp+tn)/(nm)$ tends to $1$ as $m$ grows even though every prediction is wrong. MCC and Cohen's $\kappa$ (both of which reach $-1$ in the binary case) tend to $0$, so neither can register strong disagreement in a many-class problem. The bound is a floor for every row except error rate, for which it is a ceiling.}
\label{tab:floors}
\end{table}

The G-mean warrants particular attention: although it explicitly depends on specificity, it correctly attains $0$ at the worst-case prediction because $\mathrm{rec} = 0$ annihilates the geometric product $\sqrt{\mathrm{rec}\cdot\mathrm{spec}}$ regardless of the specificity term. By contrast, balanced accuracy is an arithmetic rather than geometric mean; lacking this multiplicative nullification, it directly inherits the positive floor $\frac{m-2}{2(m-1)}$. This highlights a fundamental algebraic divergence between additive and multiplicative combinations of rates under multi-output aggregation, directly paralleling the decomposability distinction of Section~\ref{sec:decomposable}. The MCC and Cohen's $\kappa$ floors are perhaps the most consequential entries in the table: both are designed to take the value $-1$ under complete disagreement and $0$ under chance agreement, and under one-hot encoding their floor of $-1/(m-1)$ collapses towards $0$, so on a ten-class problem a classifier that is wrong on every single example scores $-0.111$, barely distinguishable from chance by the measure's own scale.

\begin{remark}[Completeness and lower boundary correctness are logically independent]
\label{rem:completeness}
Hand et al.~\cite{hand2024selecting} define \emph{completeness} as whether an evaluation measure explicitly depends on all four cells of the confusion matrix. One might suspect from Proposition~\ref{prop:lower_boundary} that completeness and lower boundary correctness are mutually incompatible under one-hot multiclass encoding, since complete measures incorporate $tn$. However, this intuition does not hold in general, and the G-mean demonstrates why: it is complete, yet $\sqrt{\mathrm{rec}\cdot\mathrm{spec}} = 0$ whenever $\mathrm{rec} = 0$, so its dependence on $tn$ vanishes at $tp = 0$, preserving its theoretical minimum. What Proposition~\ref{prop:lower_boundary} strictly requires is not that $M$ ignore $tn$ everywhere, but rather that the boundary evaluation $M(0, fp, fn, tn)$ be independent of $tn$, which is a strictly weaker condition.

The two properties are logically independent, all four combinations being realised among the measures of Table~\ref{tab:measures}:
\begin{center}
\small
\begin{tabular}{l|ll}
\hline
& \textbf{Lower boundary correct} & \textbf{Not lower boundary correct} \\
\hline
\textbf{Complete} & G-mean & Label accuracy, balanced accuracy, MCC, Cohen's $\kappa$ \\
\textbf{Incomplete} & Recall, precision, $F_\beta$, Jaccard & Specificity \\
\hline
\end{tabular}
\end{center}
The practically important cell is the top right. Several measures that are recommended precisely because they are complete (such as accuracy, MCC, and Cohen's $\kappa$) lose lower boundary correctness under one-hot multiclass encoding, and the loss widens with $m$. The bottom-right cell shows the converse is also possible: specificity is incomplete and still fails. Completeness is therefore neither necessary nor sufficient for the failure, and a practitioner cannot use it as a proxy. The correct test is the one in Proposition~\ref{prop:lower_boundary}, and it is straightforward to apply: substitute $tp = 0$ into the formula and check whether $tn$ survives. More broadly, this underscores a fundamental methodological principle: axiomatic guarantees that justify selecting a specific binary measure do not automatically transfer to multi-class or structured settings, and their theoretical validity must be explicitly re-evaluated within the admissible prediction space of the target paradigm.
\end{remark}

The earlier remark in Section~\ref{sec:notable} about per-example accuracy reflects the same phenomenon seen through a different operator, and the two are worth reconciling explicitly. In the one-hot case per-example accuracy is exactly one for a correctly classified example and $(m-2)/m$ for an incorrect one, so
\begin{equation}
\mathrm{Accuracy}^{\mathrm{exemplar}} = \frac{m-2}{m} + \frac{2}{m}\,\mathrm{Accuracy},
\label{eq:exemplar_acc}
\end{equation}
an increasing affine transformation of standard accuracy with intercept $(m-2)/m$. The intercept is exactly the label-accuracy floor in Table~\ref{tab:floors}, so the two observations are the same phenomenon seen from different operators: the transformation is order-preserving, which is why monotonicity survives, but its range is $[(m-2)/m, 1]$ rather than $[0,1]$, which is why lower boundary correctness does not.

\begin{remark}[Benchmarking and optimization implications of axiomatic preservation]
\label{rem:boundary_implications}
The preservation of monotonicity alongside the selective failure of lower boundary correctness establishes three direct methodological consequences for machine learning practice:
\begin{enumerate}
  \item \textbf{Guaranteed gradient and post-processing consistency:} Because monotonicity under admissible corrections is preserved universally (Proposition~\ref{prop:preserved}), any local optimization step or threshold adjustment that corrects an individual prediction is guaranteed never to decrease the global evaluation score, regardless of which aggregation operator is employed.
  \item \textbf{Distortion in cross-dataset benchmarking:} For all measures with non-trivial $tn$-dependence (Table~\ref{tab:floors}), the effective dynamic evaluation range $[(m-2)/(m-1), 1]$ contracts severely as the number of classes $m$ increases. Consequently, comparing raw scores of balanced accuracy, label accuracy, or correlation metrics across datasets with varying numbers of classes introduces artificial distortion unless normalized relative to their $m$-dependent theoretical floors.
  \item \textbf{Asymmetry in chance-corrected scales:} The contraction of the negative sub-interval of MCC and Cohen's $\kappa$ from $[-1, 0]$ down to $[-\frac{1}{m-1}, 0]$ introduces strong scale asymmetry as $m$ grows, substantially compressing their dynamic range for penalizing systematic misclassifications below chance agreement.
\end{enumerate}
\end{remark}

From a practical perspective, two consequences are worth stating for practitioners. First, the measures whose $tn$-dependence survives at $tp = 0$ (label accuracy, specificity, balanced accuracy, negative predictive value, markedness, MCC, and Cohen's $\kappa$) should not be applied to one-hot multiclass indicator matrices without rescaling, because their effective range shrinks as $m$ grows; reporting a label accuracy of $0.8$ or an MCC of $-0.11$ on a ten-class problem is uninformative without knowing that these are the floors. Each requires its own affine renormalisation, obtained by mapping its floor in Table~\ref{tab:floors} to the nominal minimum; Eq.~(\ref{eq:exemplar_acc}) gives the case of exemplar accuracy. Where such a measure is wanted, the affine renormalisation implied by Eq.~(\ref{eq:exemplar_acc}) restores the full $[0,1]$ range. Second, the measures that are safe in this respect are exactly those that ignore $tn$ at $tp = 0$, which for the catalogue in the supporting material can be read off directly from the coefficient $b_4$ of the denominator and the presence of $tn$ in the numerator.

In summary, monotonicity and upper boundary correctness are preserved by all three operators in all settings as shown in Proposition~\ref{prop:preserved}, and lower boundary correctness is preserved in the binary and multilabel settings but can fail under one-hot multiclass encoding, precisely for those measures whose value at $tp = 0$ still depends on $tn$ by Proposition~\ref{prop:lower_boundary}. The framework's advantage over ad-hoc extension remains that these questions are settled once for all generated measures rather than case by case~\cite{DBLP:journals/ipm/SokolovaL09}; what the analysis above shows is that the answer is partly negative, and that the framework is what makes the exact scope of the failure visible.


\subsection{Characterisation of Binary Skew-Invariance}
\label{sec:skew}

Class imbalance is one of the most practically consequential sources of variation in evaluation outcomes. We now formalise what it means for a measure to be sensitive or insensitive to the class distribution, and use the indicator matrix framework to characterise which measures in Table~\ref{tab:extensions} belong to each category and how the three aggregation operators interact with skew.

Skew-invariance is a property that must be defined separately for each classification setting, because what plays the role of ``the negative class'' differs between them. We therefore begin with the binary case, where the notion is cleanest, and treat the multiclass and multilabel cases separately in Section~\ref{sec:skew_settings}. Conflating the three leads to statements that are true in one setting and false in another, so we keep them separate throughout.

In the binary case, recall the parametric decomposition of the confusion matrix from Example~\ref{ex:dof}:
$$
\mathbf{C} = \begin{pmatrix} tp & fn \\ fp & tn \end{pmatrix} = \begin{pmatrix} n\pi\,\mathrm{rec} & n\pi(1-\mathrm{rec}) \\ n(1-\pi)(1-\mathrm{spec}) & n(1-\pi)\,\mathrm{spec} \end{pmatrix},
$$
where $\pi = P/(P+N) \in (0,1)$ is the positive class prior and the conditional operating rates $(\mathrm{rec}, \mathrm{spec}) \in [0,1]^2$ characterize the classifier. When class prevalence shifts from $\pi$ to $\pi'$ while the classifier's conditional behavior $(\mathrm{rec}, \mathrm{spec})$ remains fixed, the entries of the positive row $(tp, fn)$ are scaled by $\lambda = \pi'/\pi > 0$, while the negative row $(fp, tn)$ is scaled by $\mu = (1-\pi')/(1-\pi) > 0$. Class skew therefore acts geometrically as an independent two-parameter rescaling of the rows of $\mathbf{C}$, which motivates the following definition.

\begin{mydef}[Skew-invariant binary measure]
\label{def:skew}
Let
$$
\mathcal{D} = \bigl\{(tp,fp,fn,tn) \in \mathbb{R}_{\geq 0}^{4} \;:\; tp + fn > 0 \text{ and } tn + fp > 0\bigr\}
$$
be the set of count vectors for which both recall and specificity are defined. A binary measure $M : \mathcal{D} \to \mathbb{R}$ is \emph{skew-invariant} if
$$
M(\lambda\, tp,\; \mu\, fp,\; \lambda\, fn,\; \mu\, tn) = M(tp, fp, fn, tn)
\qquad \text{for all } \lambda, \mu > 0
$$
and all $(tp,fp,fn,tn) \in \mathcal{D}$. A measure that is not skew-invariant is \emph{skew-sensitive}.
\end{mydef}

Definition~\ref{def:skew} is related to, but distinct from, the \emph{constant baseline} property of G{\"o}sgens et al.~\cite{gosgens2021good}, discussed also by Hand et al.~\cite{hand2024selecting}, which requires that random assignments receive the same expected score regardless of how many objects are assigned to each class. Constant baseline constrains the measure's behaviour at a single reference point (random prediction), whereas skew-invariance constrains it over the whole space, requiring the value to be unchanged for every fixed conditional behaviour. Skew-invariance is therefore the stronger requirement, and Theorem~\ref{thm:skew} shows how strong: it forces the measure to be a function of recall and specificity alone.

The restriction to the domain $\mathcal{D}$ is mathematically essential. If $tp + fn = 0$, the positive class has zero prevalence: recall is undefined ($0/0$) and the positive row remains identically zero under any scaling $\lambda > 0$, preventing any meaningful invariance analysis; the same degeneracy occurs if $tn + fp = 0$. Requiring strictly positive class marginals guarantees that both classes are populated, allowing the scaling action $(\lambda,\mu) \in \mathbb{R}_{>0}^2$ to uniquely parameterize classifier behavior through the rate pair $(\mathrm{rec}, \mathrm{spec})$. Section~\ref{sec:degenerate} treats the excluded boundary cases separately.

\begin{theorem}[Characterisation of skew-invariant binary measures]
\label{thm:skew}
A measure $M : \mathcal{D} \to \mathbb{R}$ is skew-invariant in the sense of Definition~\ref{def:skew} if and only if there exists $f : [0,1]^2 \to \mathbb{R}$ with $M = f(\mathrm{rec}, \mathrm{spec})$, where $\mathrm{rec} = tp/(tp+fn)$ and $\mathrm{spec} = tn/(tn+fp)$.
\end{theorem}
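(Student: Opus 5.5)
The plan is to prove both directions directly, using the row-rescaling picture from Example~\ref{ex:dof}. Sufficiency is a one-line check. If $M = f(\mathrm{rec},\mathrm{spec})$, then under $(tp,fp,fn,tn)\mapsto(\lambda tp,\mu fp,\lambda fn,\mu tn)$ we have
$$
\frac{\lambda tp}{\lambda tp+\lambda fn}=\mathrm{rec},
\qquad
\frac{\mu tn}{\mu tn+\mu fp}=\mathrm{spec}.
$$
So the pair $(\mathrm{rec},\mathrm{spec})$ is unchanged, and hence so is $M$. Both ratios are defined precisely because we work on $\mathcal{D}$, where $tp+fn>0$ and $tn+fp>0$.

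For necessity, I would construct $f$ explicitly by normalising rows. Given $\mathbold{v}=(tp,fp,fn,tn)\in\mathcal{D}$, choose
$$
\lambda = \frac{1}{tp+fn},\qquad \mu=\frac{1}{tn+fp},
$$
both admissible positive scalars. Skew-invariance then gives
$$
M(\mathbold{v}) = M\bigl(\mathrm{rec},\,1-\mathrm{spec},\,1-\mathrm{rec},\,\mathrm{spec}\bigr).
$$
Define $f(r,s) := M(r,1-s,1-r,s)$ for $(r,s)\in[0,1]^2$. This is well defined because $(r,1-s,1-r,s)$ has both row sums equal to $1$, so it lies in $\mathcal{D}$ for every $(r,s)\in[0,1]^2$. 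The displayed identity then reads $M=f(\mathrm{rec},\mathrm{spec})$ on all of $\mathcal{D}$, which is what the theorem asserts. Equivalently, each orbit of the $\mathbb{R}_{>0}^2$ action on $\mathcal{D}$ meets the slice of row-normalised vectors exactly once, and that slice is parametrised bijectively by $(\mathrm{rec},\mathrm{spec})$.

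I do not expect a real obstacle. The only points needing care are domain bookkeeping: checking that the normalising scalars are strictly positive and that the normalised vector stays in $\mathcal{D}$. Both follow from the defining inequalities of $\mathcal{D}$, which is why the paper restricts to that set. I would also note that $f$ is determined uniquely on $[0,1]^2$, since every pair $(r,s)$ is realised by some vector in $\mathcal{D}$. No regularity of $f$ is claimed or needed.
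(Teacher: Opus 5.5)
Your proposal is correct and follows the paper's proof essentially step for step: sufficiency by cancelling $\lambda$ and $\mu$ on $\mathcal{D}$, and necessity by rescaling with $\lambda = 1/(tp+fn)$ and $\mu = 1/(tn+fp)$ to the row-normalised form $(\mathrm{rec}, 1-\mathrm{spec}, 1-\mathrm{rec}, \mathrm{spec})$. Defining $f(r,s) := M(r,1-s,1-r,s)$ explicitly, and checking that this vector lies in $\mathcal{D}$, states the paper's orbit argument slightly more cleanly.
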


\begin{proof}
($\Leftarrow$) On $\mathcal{D}$ both denominators are strictly positive, so under the scaling $(\lambda,\mu)$ with $\lambda, \mu > 0$,
$$
\mathrm{rec}' = \frac{\lambda\,tp}{\lambda\,tp + \lambda\,fn} = \mathrm{rec},
\qquad
\mathrm{spec}' = \frac{\mu\,tn}{\mu\,tn + \mu\,fp} = \mathrm{spec},
$$
the cancellations being legitimate precisely because $\lambda(tp+fn) > 0$ and $\mu(tn+fp) > 0$. Hence $M' = f(\mathrm{rec},\mathrm{spec}) = M$.

($\Rightarrow$) Let $\Phi(tp,fp,fn,tn) = (\mathrm{rec}, \mathrm{spec}) \in [0,1]^2$. For any count vector $(tp,fp,fn,tn) \in \mathcal{D}$, setting $\lambda = 1/(tp+fn) > 0$ and $\mu = 1/(tn+fp) > 0$ yields the canonical row-normalized form:
$$
\bigl(\mathrm{rec},\; 1-\mathrm{spec},\; 1-\mathrm{rec},\; \mathrm{spec}\bigr),
$$
which depends exclusively on the rate pair $(\mathrm{rec}, \mathrm{spec})$. Hence, two count vectors in $\mathcal{D}$ can be transformed into each other by positive row scaling $(\lambda, \mu) \in \mathbb{R}_{>0}^2$ if and only if they share identical $(\mathrm{rec}, \mathrm{spec})$ rates. Since a skew-invariant measure $M$ is invariant under all such scalings by Definition~\ref{def:skew}, its value is completely determined by $(\mathrm{rec}, \mathrm{spec})$, establishing the factorization $M = f(\mathrm{rec}, \mathrm{spec})$ for some function $f: [0,1]^2 \to \mathbb{R}$.
\end{proof}

Table~\ref{tab:skew} classifies each measure in Table~\ref{tab:extensions} accordingly. We emphasise that Theorem~\ref{thm:skew} is a statement about \emph{binary} measures on $\mathcal{D}$; whether the corresponding per-class quantities in a multiclass or multilabel problem are themselves prior-free is a separate question, taken up next.

\begin{table}[!t]
\centering
\small
\begin{tabular}{l|c|l}
\hline
& \textbf{Skew-invariant?} & \textbf{Reason} \\
\hline
Recall            & Yes & $= tp/(tp+fn)$; scales with $\lambda$ only \\
Specificity       & Yes & $= tn/(tn+fp)$; scales with $\mu$ only \\
Balanced accuracy & Yes & Average of recall and specificity \\
G-mean            & Yes & Geometric mean of recall and specificity \\
\hline
Precision         & No  & $tp/(tp+fp)$ mixes $\lambda$ and $\mu$ \\
$F_\beta$         & No  & Involves precision \\
Jaccard           & No  & $tp/(tp+fp+fn)$ mixes $\lambda$ and $\mu$ \\
Accuracy          & No  & $(tp+tn)/(tp+tn+fp+fn)$ mixes both \\
MCC               & No  & Involves products of mixed terms \\
Error rate        & No  & $(fp+fn)/n$ mixes both \\
Cohen's $\kappa$  & No  & Depends on marginal distributions \\
\hline
\end{tabular}
\caption{Skew-invariance of each \emph{binary} measure in Table~\ref{tab:extensions}, on the domain $\mathcal{D}$ of Definition~\ref{def:skew}. A binary measure is skew-invariant iff it is a function of recall and specificity only (Theorem~\ref{thm:skew}). Table~\ref{tab:skew_settings} records how these classifications transfer (and where they fail to transfer) to the multiclass and multilabel settings.}
\label{tab:skew}
\end{table}

\subsection{Skew-Invariance in Multiclass and Multilabel Settings}
\label{sec:skew_settings}

Definition~\ref{def:skew} concerns a binary measure applied to a single count vector. In multiclass and multilabel problems the aggregation operators produce a family of count vectors, one per class or label, and the question becomes whether the per-class quantities are themselves unaffected by the class distribution. The answer differs sharply between the two settings.

In a multilabel problem, each label $l_j$ defines a genuine binary sub-problem: $Y_{i,j} \in \{0,1\}$ with prevalence $p_j$, and the classifier's conditional behaviour on that label is captured by $r_j = P(\hat{Y}_{i,j} = 1 \mid Y_{i,j} = 1)$ and $s_j = P(\hat{Y}_{i,j} = 0 \mid Y_{i,j} = 0)$. Then $\mathrm{rec}_j = r_j$ and $\mathrm{spec}_j = s_j$ exactly, both independent of $p_j$. Column-wise aggregation of a skew-invariant binary measure therefore inherits the invariance directly.

In contrast, the one-vs-rest decomposition of a multiclass problem is not a collection of independent binary tasks, and this structural coupling leads to fundamentally different behavior under prior shift. Let $\pi_k = P(c_k)$ denote the class priors and $R_{kj} = P(\hat{c}_j \mid c_k)$ the conditional classification rates. The one-vs-rest counts for class $j$ are
$$
tp_j = n\pi_j R_{jj}, \quad
fn_j = n\pi_j(1 - R_{jj}), \quad
fp_j = n\!\!\sum_{k \neq j}\!\pi_k R_{kj}, \quad
tn_j = n\!\!\sum_{k \neq j}\!\pi_k (1 - R_{kj}).
$$

\begin{proposition}[Prior-dependence of one-vs-rest rates]
\label{prop:ovr_skew}
In a multiclass problem with fixed conditional rates $R_{kj}$ and $\pi_j > 0$:
\begin{enumerate}
  \item Per-class recall is prior-free: $\mathrm{rec}_j = R_{jj}$ for every prior vector $\boldsymbol{\pi}$.
  \item Per-class specificity is a prior-weighted average of the other classes' correct-rejection rates,
  \begin{equation}
  \mathrm{spec}_j
  = \frac{\sum_{k \neq j}\pi_k\,(1 - R_{kj})}{\sum_{k \neq j}\pi_k}
  = \sum_{k \neq j} \tilde{\pi}^{(j)}_k \,(1 - R_{kj}),
  \qquad \tilde{\pi}^{(j)}_k = \frac{\pi_k}{\sum_{l \neq j}\pi_l},
  \label{eq:ovr_spec}
  \end{equation}
  and is therefore not prior-free in general. It is prior-free for all $\boldsymbol{\pi}$ if and only if $R_{kj}$ is constant over $k \neq j$, in which case $\mathrm{spec}_j = 1 - R_{\cdot j}$.
\end{enumerate}
\end{proposition}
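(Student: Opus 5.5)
The plan is to substitute the displayed one-vs-rest counts directly into the binary formulas for recall and specificity and then analyse the prior dependence that results. For Part~1, we have $tp_j + fn_j = n\pi_j R_{jj} + n\pi_j(1-R_{jj}) = n\pi_j$, which is strictly positive because $\pi_j > 0$. The factor $n\pi_j$ therefore cancels in $\mathrm{rec}_j = tp_j/(tp_j+fn_j)$, leaving $R_{jj}$ with no dependence on $\boldsymbol{\pi}$. This is the multiclass counterpart of the ($\Leftarrow$) direction of Theorem~\ref{thm:skew}: a change in $\pi_j$ rescales the positive row of class $j$ by a single factor $\lambda$.

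For the formula in Part~2, we compute $tn_j + fp_j = n\sum_{k\neq j}\pi_k\bigl[(1-R_{kj}) + R_{kj}\bigr] = n\sum_{k\neq j}\pi_k$. We need this to be positive. It holds whenever some other class has positive prior, and that is exactly the domain $\mathcal{D}$ of Definition~\ref{def:skew} restricted to column $j$; we state it as a standing assumption. Dividing $tn_j$ by this quantity gives the first expression in Eq.~(\ref{eq:ovr_spec}). Rewriting the weights as $\tilde{\pi}^{(j)}_k$ shows that $\mathrm{spec}_j$ is a convex combination of the values $1-R_{kj}$, because the $\tilde{\pi}^{(j)}_k$ are non-negative and sum to one.

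The characterisation is the only step that needs an actual argument. For sufficiency, suppose $R_{kj} = c$ for all $k\neq j$. Then the convex combination equals $1-c$ whatever the weights, so $\mathrm{spec}_j = 1-R_{\cdot j}$. For necessity, we argue by contrapositive. Suppose $R_{k_1 j}\neq R_{k_2 j}$ for some $k_1,k_2\neq j$; this requires $m\geq 3$. We then exhibit two admissible prior vectors, both keeping $\pi_j>0$, one concentrating the off-$j$ mass near $k_1$ and the other near $k_2$. As the off-$j$ mass tends to lie entirely on $k_1$ or entirely on $k_2$, Eq.~(\ref{eq:ovr_spec}) tends to $1-R_{k_1 j}$ and $1-R_{k_2 j}$ respectively. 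Since these limits differ, $\mathrm{spec}_j$ is not constant in $\boldsymbol{\pi}$.

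The main subtlety is the admissible prior set. If all priors must be strictly positive, the extreme points are excluded. In that case we instead take two interior priors whose weight vectors $\tilde{\pi}^{(j)}$ differ only in the split between $k_1$ and $k_2$. The map from that split to $\mathrm{spec}_j$ is affine with slope $R_{k_2 j} - R_{k_1 j} \neq 0$, so the two values differ. This settles necessity without any limiting argument.
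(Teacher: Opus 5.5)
Your proof is correct and takes the same route as the paper: you substitute the one-vs-rest counts, cancel $n\pi_j$ to get recall, obtain $tn_j+fp_j=n\sum_{k\neq j}\pi_k$ for specificity, and then use the fact that a convex combination is constant over the relative interior of the simplex if and only if the combined values coincide. Your explicit necessity argument (moving mass between $k_1$ and $k_2$, which changes $\mathrm{spec}_j$ affinely with nonzero slope) and your standing assumption $\sum_{k\neq j}\pi_k>0$ spell out steps that the paper asserts in a single line or leaves implicit.
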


\begin{proof}
Part~1 is immediate: $\mathrm{rec}_j = tp_j/(tp_j+fn_j) = \pi_j R_{jj}/\pi_j = R_{jj}$, using $\pi_j > 0$. For part~2, substituting the expressions for $tn_j$ and $fp_j$ gives $tn_j + fp_j = n\sum_{k\neq j}\pi_k$ and hence Eq.~(\ref{eq:ovr_spec}). This is a convex combination of the values $\{1 - R_{kj}\}_{k \neq j}$ with weights $\tilde{\pi}^{(j)}$ ranging over the whole relative interior of the simplex on $\{k : k \neq j\}$ as $\boldsymbol{\pi}$ varies. A convex combination is constant over all such weight vectors if and only if the combined values are all equal, i.e.\ iff $R_{kj}$ does not depend on $k$.
\end{proof}

Proposition~\ref{prop:ovr_skew} has consequences that are easy to state but easy to miss, since the natural expectation is that macro-averaging inherits binary skew-invariance in every setting. It does not:

\begin{itemize}\itemsep2pt
\item \textbf{Macro-recall is skew-invariant in all three settings}, since $\mathrm{rec}_j$ is prior-free by Proposition~\ref{prop:ovr_skew}(1) and by the multilabel argument above.
\item \textbf{Macro-balanced-accuracy, macro-G-mean and macro-informedness are skew-invariant in binary and multilabel settings but only approximately so in multiclass}, where they inherit a residual prior-dependence through the specificity term. Table~\ref{tab:skew_settings} summarises this.
\item The residual dependence vanishes exactly when the classifier distributes its errors into each class uniformly across the other classes ($R_{kj}$ independent of $k$), a condition that is restrictive and easy to check.
\end{itemize}

\begin{example}[Residual skew-sensitivity of macro-averages in multiclass]
\label{ex:ovr_skew}
Consider a 3-class classification problem ($m = 3$) with fixed conditional transition matrix:
$$
\mathbf{R} = \begin{pmatrix} 0.80 & 0.15 & 0.05 \\ 0.30 & 0.60 & 0.10 \\ 0.05 & 0.25 & 0.70 \end{pmatrix}.
$$
When the class prior vector $\boldsymbol{\pi}$ shifts from uniform balance $(\frac{1}{3}, \frac{1}{3}, \frac{1}{3})$ to severe skew $(0.8, 0.1, 0.1)$:
\begin{itemize}
  \item Macro-recall remains strictly constant at $0.7000$, as guaranteed by Proposition~\ref{prop:ovr_skew}(1);
  \item Macro-balanced-accuracy drifts from $0.7750$ to $0.7847$;
  \item Macro-G-mean drifts from $0.7700$ to $0.7783$.
\end{itemize}
Under modified error rates satisfying the column-uniform condition $R_{kj} = c_j$ for $k \neq j$ with $\boldsymbol{c} = (0.10, 0.15, 0.05)$, both macro-balanced-accuracy and macro-G-mean become strictly invariant across all priors, verifying Proposition~\ref{prop:ovr_skew}(2) (see Section~\ref{sec:exp_skew} and Figure~\ref{fig:skew} for a comprehensive empirical illustration across skew profiles).
\end{example}

\begin{table}[!htb]
\centering
\small
\begin{tabular}{l|ccc}
\hline
& \textbf{Binary} & \textbf{Multilabel} & \textbf{Multiclass (OVR)} \\
\hline
Per-class recall $\mathrm{rec}_j$ & prior-free & prior-free & prior-free \\
Per-class specificity $\mathrm{spec}_j$ & prior-free & prior-free & prior-dependent \\
\hline
Macro-recall & invariant & invariant & invariant \\
Macro-specificity & invariant & invariant & sensitive \\
Macro-BA, macro-G-mean & invariant & invariant & sensitive \\
Micro (any of the above) & --- $^{\ddagger}$ & sensitive$^{*}$ & sensitive$^{*}$ \\
Exemplar (any of the above) & --- $^{\ddagger}$ & sensitive$^{\dagger}$ & sensitive$^{\dagger}$ \\
\hline
\end{tabular}

\smallskip
\raggedright
$^{*}$ Subject to the non-degeneracy conditions of Corollary~\ref{cor:micro_skew}; micro coincides with macro when the relevant per-class rates are all equal.\\
$^{\dagger}$ Because the average is taken over a population of examples whose composition changes with the class distribution; see Proposition~\ref{prop:exemplar_skew}.\\
$^{\ddagger}$ Not applicable: with a single binary sub-problem there is nothing to aggregate across, and all three operators reduce to the binary measure itself.
\caption{Skew behaviour of the aggregation operators, separated by setting. Only the binary and multilabel column-wise cases inherit binary skew-invariance unconditionally.}
\label{tab:skew_settings}
\end{table}

Similarly, one might expect exemplar averaging to preserve skew-invariance because it operates directly on individual instances rather than class aggregates. It does not, and the reason is structural: the uniform average over instances is mathematically a mixture of per-class (or per-configuration) conditional scores, where the mixture weights are directly determined by the class prevalence.

\begin{proposition}[Exemplar averaging is not skew-invariant]
\label{prop:exemplar_skew}
Partition the examples into types $\tau \in \mathcal{T}$ according to their true label configuration, and let $\rho_\tau$ denote the proportion of examples of type $\tau$. For any per-example measure $M_i$ whose conditional expectation given the type, $\bar{M}_\tau = \mathbb{E}[M_i \mid \tau]$, is determined by the classifier's conditional behaviour, the exemplar average satisfies
$$
M^{\mathrm{exemplar}} = \sum_{\tau \in \mathcal{T}} \rho_\tau\, \bar{M}_\tau .
$$
Hence $M^{\mathrm{exemplar}}$ is invariant to the class distribution if and only if $\bar{M}_\tau$ is the same for all types $\tau$ with $\rho_\tau > 0$; otherwise it varies with $\boldsymbol{\rho}$ even when every $\bar{M}_\tau$ is held fixed.
\end{proposition}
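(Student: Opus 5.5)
The argument has two parts: a law-of-total-expectation decomposition of the uniform average over examples, followed by an elementary fact about when a convex combination does not depend on its weights.

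\emph{Step 1 (decomposition).} The exemplar average in Eq.~(\ref{eq:undef_convention}) is $M^{\mathrm{exemplar}} = \frac{1}{|\mathcal{I}|}\sum_{i\in\mathcal{I}} M_i^{\mathrm{example}}$. I partition $\mathcal{I}$ by true label configuration into blocks $\mathcal{I}_\tau$. Regrouping the sum gives
$$
M^{\mathrm{exemplar}} = \sum_{\tau} \frac{|\mathcal{I}_\tau|}{|\mathcal{I}|}\cdot \frac{1}{|\mathcal{I}_\tau|}\sum_{i\in\mathcal{I}_\tau} M_i .
$$
Setting $\rho_\tau = |\mathcal{I}_\tau|/|\mathcal{I}|$ and taking $\bar M_\tau$ as the within-type mean, which is the empirical conditional expectation, gives the displayed identity exactly. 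In the population version, the same identity is the law of total expectation, $\mathbb{E}[M_i] = \sum_\tau P(\tau)\,\mathbb{E}[M_i\mid\tau]$.

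\emph{Step 2 (invariance criterion).} The modelling hypothesis is that $\bar M_\tau$ depends only on the classifier's conditional behaviour given $\tau$, which is the analogue of holding $R_{kj}$ fixed in Proposition~\ref{prop:ovr_skew}. Under this hypothesis, a change in class distribution moves only $\boldsymbol\rho$ within the simplex and leaves every $\bar M_\tau$ fixed. $M^{\mathrm{exemplar}}$ is then the linear function $\boldsymbol\rho\mapsto\langle\boldsymbol\rho,\bar{\mathbf M}\rangle$ on the simplex over the support $\{\tau:\rho_\tau>0\}$.

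For sufficiency: if every $\bar M_\tau$ with $\rho_\tau>0$ equals a common value $c$, then the sum is $c\sum\rho_\tau = c$.

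For necessity: suppose $\bar M_\sigma\neq\bar M_{\tau}$ for two types in the support. Shifting mass $\epsilon$ from $\tau$ to $\sigma$, which remains admissible for small $\epsilon$ because both weights are positive, changes the average by $\epsilon(\bar M_\sigma-\bar M_\tau)\neq 0$. This is the same convex-combination argument used in the proof of Proposition~\ref{prop:ovr_skew}(2).

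\emph{Main obstacle.} The delicate point is the precise sense in which ``varying the class distribution'' means varying $\boldsymbol\rho$ freely with $\bar M_\tau$ held fixed. In the multiclass case, types are classes, so $\rho_\tau=\pi_\tau$ ranges over the open simplex and the perturbation is clearly admissible. In the multilabel case, a prior shift on labels need not reach every direction in the configuration simplex. I would therefore state invariance with respect to shifts of the configuration distribution $\boldsymbol\rho$, keeping the class-conditional score $\bar M_\tau$ fixed, as the proposition's phrasing suggests. I would also note that the undefined-entry convention only removes examples from $\mathcal{I}$, so $\rho_\tau$ should be computed over $\mathcal{I}$.
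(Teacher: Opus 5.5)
Your proof is correct and follows the paper's own argument: regroup the uniform average over examples by type, which is the law of total expectation, and then use the fact that a convex combination is constant as its weights vary over the simplex exactly when the combined values coincide. Your explicit mass-shifting argument for necessity, and your caveat that in the multilabel case a shift in label priors need not reach every direction of the configuration simplex, are sound elaborations of what the paper states in a single line.
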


\begin{proof}
The exemplar operator $\Upsigma_{\mathrm{n}}$ averages the per-example values uniformly over the $n$ examples. Grouping by type and applying the law of total expectation gives the stated mixture. A convex combination with weights ranging over the simplex is constant if and only if the combined values coincide.
\end{proof}

\begin{example}[Composition-induced drift in exemplar averaging]
\label{ex:exemplar_skew}
Consider a multilabel evaluation task with two distinct instance label configurations (types) $\tau_1, \tau_2 \in \mathcal{T}$, where a classifier achieves fixed per-type conditional scores $\bar{M}_{\tau_1} = 0.667$ and $\bar{M}_{\tau_2} = 0.500$ for exemplar-$F_1$. By Proposition~\ref{prop:exemplar_skew}, the global exemplar score is the linear mixture:
$$
M^{\mathrm{exemplar}} = \rho_1 \cdot 0.667 + (1 - \rho_1) \cdot 0.500,
$$
where $\rho_1 \in [0,1]$ is the relative frequency of type $\tau_1$ in the test sample. As $\rho_1$ varies from $0$ to $1$, the aggregate exemplar-$F_1$ score drifts monotonically from $0.500$ to $0.667$, driven purely by sample composition while the classifier's per-instance predictions remain identical. This confirms that for exemplar averaging, the relevant source of skew sensitivity is the distribution over label configurations $\boldsymbol{\rho}$ rather than individual class marginals.
\end{example}

Before examining how aggregation operators interact with class skew, it is instructive to analyze the algebraic relationship between skew-invariance and aggregation-decomposability, as their joint satisfaction imposes an exceptionally stringent structural constraint.

\begin{proposition}[Skew-invariance and decomposability are almost incompatible]
\label{prop:skew_decomp}
A non-constant measure is both skew-invariant and aggregation-decomposable if and only if it is a M\"obius function of recall alone or a M\"obius function of specificity alone. In particular, balanced accuracy, the G-mean and informedness are skew-invariant but not aggregation-decomposable. (MCC and Cohen's $\kappa$ are neither skew-invariant, as shown in Table~\ref{tab:skew}, nor decomposable, by Proposition~\ref{prop:which_decomposable}.)
\end{proposition}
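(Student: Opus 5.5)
The plan is to combine Theorem~\ref{thm:skew} with Definition~\ref{def:decomposable} and compare polynomial identities. Suppose $M$ is non-constant, skew-invariant and decomposable. Write $M = \langle \mathbold{a},\mathbold{v}\rangle/\langle \mathbold{b},\mathbold{v}\rangle$ with $\mathbold{v} = (tp,fp,fn,tn)$, and assume $\mathbold{a}\not\propto\mathbold{b}$ (otherwise $M$ is constant). By Theorem~\ref{thm:skew}, $M = f(\mathrm{rec},\mathrm{spec})$. Rather than work with $f$ directly, we evaluate on the canonical row-normalised representative from the proof of Theorem~\ref{thm:skew}, namely $\mathbold{v} = (r, 1-s, 1-r, s)$ with $r = \mathrm{rec}$ and $s = \mathrm{spec}$. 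This gives
\[
f(r,s) = \frac{(a_1-a_3)r + a_3 + a_2 + (a_4-a_2)s}{(b_1-b_3)r + b_3 + b_2 + (b_4-b_2)s},
\]
so $f$ is a ratio of affine functions of $(r,s)$.

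The key step is to impose invariance under the scaling $(\lambda,\mu)$ directly on the linear-fractional form. Write $\mathbold{v} = \mathbold{p} + \mathbold{q}$ with $\mathbold{p} = (tp,0,fn,0)$ for the positive row and $\mathbold{q} = (0,fp,0,tn)$ for the negative row. Skew-invariance then becomes
\[
\frac{\lambda A_p + \mu A_q}{\lambda B_p + \mu B_q} = \frac{A_p + A_q}{B_p + B_q},
\]
where $A_p = \langle \mathbold{a},\mathbold{p}\rangle$ and similarly for $A_q$, $B_p$, $B_q$. This must hold for all $\lambda,\mu>0$. Cross-multiplying and comparing coefficients of $\lambda$ and $\mu$ (take $\mu=1$ and let $\lambda$ vary) yields
\[
A_p(B_p+B_q) = B_p(A_p+A_q), \qquad A_q(B_p+B_q) = B_q(A_p+A_q),
\]
and hence $A_pB_q = A_qB_p$ identically on $\mathcal{D}$. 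Since $A_p$ and $B_p$ depend only on $(tp,fn)$, while $A_q$ and $B_q$ depend only on $(fp,tn)$, this separated polynomial identity forces one of the following: the ratios $A_p/B_p$ and $A_q/B_q$ are equal to a common constant, which makes $M$ constant and is excluded; or $A_q = B_q \equiv 0$; or $A_p = B_p \equiv 0$.

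The case analysis of this identity is where the main care is needed. I would argue it as follows. If $(B_p, B_q)$ are both not identically zero, then $A_p/B_p$ is a function of $(tp,fn)$ alone that equals a function of $(fp,tn)$ alone, so it is a constant $c$. Then $M = c$, a contradiction. One must also handle the sub-case where $B_q\equiv 0$ but $A_q\not\equiv 0$. There the identity $A_qB_p = 0$ forces $B_p\equiv0$, making the denominator vanish and contradicting the requirement that $\langle\mathbold{b},\mathbold{v}\rangle\neq0$ on the domain. This leaves $\mathbold{a},\mathbold{b}$ supported on the positive-row coordinates $\{tp,fn\}$, or on the negative-row coordinates $\{fp,tn\}$.

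In the first case $M = (a_1 tp + a_3 fn)/(b_1 tp + b_3 fn)$. Dividing by $tp+fn$ gives $M = (\alpha r + \beta)/(\gamma r + \delta)$ with $r = \mathrm{rec}$, which is a Möbius function of recall; non-constancy gives $\alpha\delta-\beta\gamma\neq0$. The specificity case is symmetric. For the converse, any Möbius function $(\alpha\,\mathrm{rec}+\beta)/(\gamma\,\mathrm{rec}+\delta)$ can be rewritten as $(\alpha\,tp + \beta(tp+fn))/(\gamma\,tp + \delta(tp+fn))$. This is linear-fractional, so decomposable, and it is skew-invariant by Theorem~\ref{thm:skew}.

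The closing claims follow directly. Balanced accuracy, informedness ($=2\,\mathrm{BA}-1$) and the G-mean each depend genuinely on both $\mathrm{rec}$ and $\mathrm{spec}$, since their partial derivatives in each rate are nonzero. They are therefore not a function of one rate alone, and by the characterisation they are not decomposable. The MCC and $\kappa$ remark simply cites Table~\ref{tab:skew} and Proposition~\ref{prop:which_decomposable}.
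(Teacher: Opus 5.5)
Your proof is correct and follows the same route as the paper: you split the two linear forms into positive-row and negative-row parts, extract $A_pB_q = A_qB_p$ from invariance under $(\lambda,\mu)$, and run a case analysis on this separated identity. Your version is slightly more complete than the paper's in three places: you treat the sub-case $B_q \equiv 0 \neq A_q$ explicitly, you prove the converse direction, and you derive the non-decomposability of balanced accuracy, the G-mean and informedness from the characterisation itself rather than citing Proposition~\ref{prop:which_decomposable}.
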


\begin{proof}
Let $M(\mathbold{v}) = \langle \mathbold{a},\mathbold{v}\rangle/\langle \mathbold{b},\mathbold{v}\rangle$ and group the terms tied to each class, writing $P = a_1 tp + a_3 fn$, $Q = a_2 fp + a_4 tn$, $R = b_1 tp + b_3 fn$ and $S = b_2 fp + b_4 tn$, so that $M = (P+Q)/(R+S)$. Under the skew rescaling $(tp,fp,fn,tn) \mapsto (\lambda tp, \mu fp, \lambda fn, \mu tn)$ the measure becomes $(\lambda P + \mu Q)/(\lambda R + \mu S)$. Skew-invariance requires this to be independent of $\lambda, \mu > 0$; differentiating with respect to $\lambda$ and setting the result to zero gives $P(\lambda R + \mu S) = R(\lambda P + \mu Q)$, i.e.\ $\mu(PS - QR) = 0$, so $PS = QR$ identically. Hence either $Q = S = 0$, in which case $M = P/R$ is a M\"obius function of $tp$ and $fn$ and therefore of recall alone; or $P = R = 0$, in which case $M = Q/S$ is a M\"obius function of $fp$ and $tn$ and therefore of specificity alone; or $P/R = Q/S$ holds identically, which forces $M$ to be constant. Balanced accuracy, the G-mean and informedness are excluded because each depends non-trivially on both recall and specificity, so by Theorem~\ref{thm:skew} they are skew-invariant, while Proposition~\ref{prop:which_decomposable} shows none is linear-fractional.
\end{proof}

Proposition~\ref{prop:skew_decomp} has a direct methodological consequence: a proof of skew behaviour that invokes Theorem~\ref{thm:micro_macro} at the level of the composite measure $M$ would apply only to recall and specificity, and would say nothing about the balanced measures that motivate the discussion. The following corollary is therefore proved by applying Theorem~\ref{thm:micro_macro} to recall and specificity separately (each being decomposable) and then composing. This route requires no decomposability assumption on $M$ itself and consequently covers the full class of skew-invariant measures.

\begin{corollary}[Skew behaviour of macro- and micro-averaging]
\label{cor:micro_skew}
Let $M$ be a skew-invariant measure, written $M = g(\mathrm{rec}, \mathrm{spec})$ by Theorem~\ref{thm:skew}. Then:
\begin{enumerate}
  \item If the per-class rates $\mathrm{rec}_j$ and $\mathrm{spec}_j$ are prior-free (which holds in binary and multilabel settings, and in multiclass problems satisfying the uniform-error condition of Proposition~\ref{prop:ovr_skew}(2)), then $M^{\mathrm{macro}}$ is skew-invariant.
  \item $M^{\mathrm{micro}} = g\bigl(\sum_j \pi_j \mathrm{rec}_j,\; \sum_j \tilde{\pi}_j \mathrm{spec}_j\bigr)$ with $\pi_j \propto tpos_j$ and $\tilde{\pi}_j \propto tneg_j$. Consequently $M^{\mathrm{micro}}$ is skew-invariant whenever the per-class rates entering each non-degenerate argument of $g$ are all equal, and skew-sensitive otherwise, provided $g$ is strictly monotone in that argument.
\end{enumerate}
\end{corollary}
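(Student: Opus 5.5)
Part~1 follows by composition. By Theorem~\ref{thm:skew}, each per-class value is $M_j = g(\mathrm{rec}_j, \mathrm{spec}_j)$. Under a prior shift the conditional behaviour is held fixed. Whenever the pair $(\mathrm{rec}_j,\mathrm{spec}_j)$ is prior-free, so is every $M_j$, and hence also the unweighted average $M^{\mathrm{macro}} = \frac{1}{m}\sum_j M_j$. It remains to cite the settings in which the hypothesis holds. In binary and multilabel problems it holds by the argument preceding Proposition~\ref{prop:ovr_skew}, where $\mathrm{rec}_j = r_j$ and $\mathrm{spec}_j = s_j$. In multiclass problems it holds by Proposition~\ref{prop:ovr_skew}(1) for recall and by Proposition~\ref{prop:ovr_skew}(2) for specificity under the uniform-error condition. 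One small point needs care: the set $\mathcal{J}$ of defined classes in Eq.~(\ref{eq:undef_convention}) must not change. This holds because the shift keeps every $\pi_j > 0$, so no class drops out of the average.

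For Part~2, I would avoid applying Theorem~\ref{thm:micro_macro} to $M$ itself, which Proposition~\ref{prop:skew_decomp} rules out. Instead I would apply it separately to recall and to specificity, both of which are decomposable. The pooled counts $tp=\sum_j tp_j$ and so on give $M^{\mathrm{micro}} = g(\mathrm{rec}^{\mathrm{micro}}, \mathrm{spec}^{\mathrm{micro}})$, since $g$ acts on the pooled counts only through the two pooled rates. For recall the denominator is $den_j = tpos_j$, so $\mathrm{rec}^{\mathrm{micro}} = \sum_j \pi_j \mathrm{rec}_j$ with $\pi_j = tpos_j/\sum_k tpos_k$. For specificity $den_j = tneg_j$, which gives the weights $\tilde\pi_j \propto tneg_j$. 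This establishes the displayed formula.

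For the ``consequently'' clause, I would argue as in the proofs of Propositions~\ref{prop:ovr_skew} and~\ref{prop:exemplar_skew}: a convex combination is constant over its weights if and only if the combined values coincide. Sufficiency: if all prior-free $\mathrm{rec}_j$ are equal, then $\sum_j \pi_j \mathrm{rec}_j$ equals that common value whatever the weights, and the same holds for specificity. Hence $M^{\mathrm{micro}}$ is unchanged. For necessity, a shift of prevalence moves the weights $\pi_j$, or $\tilde\pi_j$, over an open subset of the simplex. If two of the rates entering a non-degenerate argument differ, the mixture therefore takes different values. Strict monotonicity of $g$ in that argument then turns this change into a change of $M^{\mathrm{micro}}$.

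The main obstacle is this last step. Varying the priors moves $\pi$ and $\tilde\pi$ jointly; in multiclass problems $tneg_j = n - tpos_j$, so $\tilde\pi$ is determined by $\pi$. A change in one argument could in principle be cancelled by a change in the other. I would handle this by choosing a one-parameter prior path along which the argument in question moves strictly while the other argument stays fixed, or moves in the same direction so that monotonicity prevents cancellation. If no such clean path exists, I would instead read ``skew-sensitive'' as ``not invariant for all shifts'' and exhibit a single shift that perturbs only the offending argument. This is what the non-degeneracy proviso in the statement is meant to permit. In the multiclass case, the prior-dependence of $\mathrm{spec}_j$ itself would be covered either by the uniform-error hypothesis or by the same mixture argument.
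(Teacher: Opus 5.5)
\textbf{Where the proposal matches the paper.} Your Part~1, your derivation of the displayed formula, and the sufficiency half of the ``consequently'' clause follow the paper's proof essentially verbatim. Both use composition with prior-free per-class rates, then Theorem~\ref{thm:micro_macro} applied separately to recall and to specificity, then the fact that a convex combination of equal values does not depend on its weights.

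\textbf{The gap.} It lies in the necessity half, at exactly the obstacle you flagged, and your repair does not work. The paper's proof does not resolve it either. It simply asserts that varying $\boldsymbol{\pi}$ changes the offending argument ``and hence'' $M^{\mathrm{micro}}$. That ignores the joint motion of the two arguments. It also ignores that in multiclass the $\mathrm{spec}_j$ themselves move with $\boldsymbol{\pi}$, so a fixed-values, varying-weights argument does not even apply to the specificity slot.

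Let $C$ be the number of correct predictions, as in Theorem~\ref{thm:micro_collapse}. Under one-hot encoding the pooled counts are $fp = fn = n - C$ and $tn = n(m-2) + C$. Hence
\[
\mathrm{rec}^{\mathrm{micro}} = \mathrm{acc}, \qquad
\mathrm{spec}^{\mathrm{micro}} = \frac{m-2}{m-1} + \frac{\mathrm{acc}}{m-1}, \qquad
\mathrm{acc} = \sum_j \pi_j R_{jj},
\]
so $M^{\mathrm{micro}} = h(\mathrm{acc})$ with $h(t) = g\bigl(t, (m-2+t)/(m-1)\bigr)$. The two arguments are rigidly locked together. No prior path moves one while fixing the other. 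Your fallback, a single shift that ``perturbs only the offending argument'', is the same requirement restated. The non-degeneracy proviso in the statement only concerns which arguments $g$ actually depends on; it does not license a choice of shifts.

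\textbf{The clause is false in multiclass.} So no argument can close the gap there.
\begin{itemize}
\item Take $m=3$ with rows of $\mathbf{R}$ equal to $(0.8,0.2,0)$, $(0,0.8,0.2)$ and $(0,0.2,0.8)$. Every $\mathrm{rec}_j = 0.8$, and the $\mathrm{spec}_j$ are $(1, 0.8, 0.9)$ at uniform priors. Yet $\mathrm{spec}^{\mathrm{micro}} = 0.9$ and micro-BA $= 0.85$ for every prior, although BA is strictly increasing in specificity.
\item Take $g(r,s) = r - (m-1)s$, which is strictly monotone in recall. Then $M^{\mathrm{micro}} \equiv -(m-2)$, however unequal the $\mathrm{rec}_j$ are.
\end{itemize}

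\textbf{What you can prove.} Your convex-combination argument, applied to $\mathrm{acc}$ alone, proves the correct multiclass version. $M^{\mathrm{micro}}$ is skew-invariant if all $R_{jj}$ are equal. It is skew-sensitive if they are not and $h$ is strictly monotone, for example when $g$ is strictly increasing in both arguments, as for BA and the G-mean. The specificity rates play no independent role.

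In the multilabel setting your path idea does go through. There, with $p_j$ the prevalence of label $j$, $\boldsymbol{\pi} \propto \mathbf{p}$ and $\tilde{\boldsymbol{\pi}} \propto \mathbf{1}-\mathbf{p}$ can be moved independently. For two labels, the map $(p_1,p_2) \mapsto \bigl(p_1/p_2,\,(1-p_1)/(1-p_2)\bigr)$ has Jacobian determinant $(p_2-p_1)/\bigl(p_2^2(1-p_2)^2\bigr)$, which is nonzero off the diagonal. So in the multilabel setting the statement holds as written.

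Finally, drop your closing claim that the multiclass prior-dependence of $\mathrm{spec}_j$ is handled ``by the same mixture argument''. That argument needs values that stay fixed while only the weights vary, and the multiclass $\mathrm{spec}_j$ do not stay fixed.
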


\begin{proof}
Part~1: under the stated hypothesis each $M_j = g(\mathrm{rec}_j, \mathrm{spec}_j)$ depends only on the conditional rates, and $M^{\mathrm{macro}} = \frac1m\sum_j M_j$ uses prior-free weights, so the invariance is inherited.

Part~2, which requires no hypothesis on the per-class rates: since the micro extension substitutes pooled counts into $M$, and $M$ is a function of recall and specificity evaluated at those counts, $M^{\mathrm{micro}} = g(\mathrm{rec}^{\mathrm{micro}}, \mathrm{spec}^{\mathrm{micro}})$. Recall and specificity are individually aggregation-decomposable (Proposition~\ref{prop:which_decomposable}), so Theorem~\ref{thm:micro_macro} applies to each separately:
$$
\mathrm{rec}^{\mathrm{micro}} = \frac{\sum_j tpos_j\, \mathrm{rec}_j}{\sum_j tpos_j} = \sum_j \pi_j\, \mathrm{rec}_j,
\qquad
\mathrm{spec}^{\mathrm{micro}} = \frac{\sum_j tneg_j\, \mathrm{spec}_j}{\sum_j tneg_j} = \sum_j \tilde{\pi}_j\, \mathrm{spec}_j .
$$
Each argument is a convex combination of the per-class rates with weights determined by the class distribution, whatever those rates themselves depend on. If the rates entering an argument are all equal, that convex combination equals their common value for every weight vector, and the argument is prior-free; if all non-degenerate arguments are of this kind, $M^{\mathrm{micro}}$ is skew-invariant. Conversely, if the rates entering some argument differ and $g$ is strictly monotone in it, then varying $\boldsymbol{\pi}$ changes that argument (as a convex combination of non-identical values varies over the simplex) and hence changes $M^{\mathrm{micro}}$.
\end{proof}

Two points about the statement deserve emphasis.

First, non-uniformity of the class distribution is not by itself sufficient for micro-averaging to be skew-sensitive. Unequal weights change a weighted average only if the values being averaged differ. A classifier with $\mathrm{rec}_j = 0.7$ for all $j$ has micro-recall exactly $0.7$ for every prior vector, however skewed. The condition in part~2 is therefore stated in terms of the per-class rates rather than the priors, and it is an equivalence rather than a one-directional claim.

Second, the mechanism is now visible. Micro-averaging is skew-sensitive not because the measure is, but because $\Upsigma_{\mathrm{1}}$ silently replaces prior-free per-class rates by prevalence-weighted mixtures of them before $g$ is applied. For balanced accuracy, $\mathrm{BA}^{\mathrm{micro}} = \tfrac12\bigl(\sum_j \pi_j \mathrm{rec}_j + \sum_j \tilde{\pi}_j \mathrm{spec}_j\bigr)$ is prior-dependent while $\mathrm{BA}^{\mathrm{macro}} = \tfrac{1}{2m}\sum_j(\mathrm{rec}_j + \mathrm{spec}_j)$ is not. The same substitution explains the values reported in Example~\ref{ex:counterexample}. Note also that exemplar averaging is excluded from this corollary: by Proposition~\ref{prop:exemplar_skew} it is not skew-invariant in general, for a reason unrelated to the operator's weighting of classes.

\subsection{Exact Decomposition of the Micro-Macro Gap}
\label{sec:gap_decomposition}

\begin{corollary}[Gap between micro and macro as a function of skew]
\label{cor:gap}
Let $M$ be an aggregation-decomposable measure with per-class denominators $w_j = \langle \mathbold{b}, \mathbold{v}_j\rangle$. Then
$$
M^{\mathrm{micro}} - M^{\mathrm{macro}} = \frac{\frac{1}{m}\sum_j (w_j - \bar{w})(M_j - \bar{M})}{\bar{w}}
\;=\; \frac{\mathrm{Cov}(\boldsymbol{w}, \boldsymbol{M})}{\bar{w}},
$$
where $\bar{w} = \frac{1}{m}\sum_j w_j$, $\bar{M} = M^{\mathrm{macro}}$, $\boldsymbol{M} = (M_1,\ldots,M_m)$ are the per-class measure values, and $\mathrm{Cov}$ is the empirical covariance over the $m$ classes under the uniform distribution.
\end{corollary}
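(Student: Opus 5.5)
The plan is to reduce the statement to Theorem~\ref{thm:micro_macro} and then expand a weighted mean around the unweighted mean. By Theorem~\ref{thm:micro_macro}, applied with $\Upsigma_{\mathrm{m}}$ to the decomposable measure $M$, we have
$$
M^{\mathrm{micro}} = \frac{\sum_j w_j M_j}{\sum_j w_j} = \frac{\frac{1}{m}\sum_j w_j M_j}{\bar{w}},
$$
where we divided numerator and denominator by $m$. This is the only place where decomposability enters. From there the argument is purely an identity about weighted versus unweighted averages of the $m$ numbers $M_1,\dots,M_m$.

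Next I would subtract $M^{\mathrm{macro}} = \bar{M}$, writing it over the common denominator as $\bar{M} = \bar{w}\bar{M}/\bar{w}$. The gap then becomes
$$
\frac{\frac1m\sum_j w_j M_j - \bar{w}\bar{M}}{\bar{w}}.
$$
The numerator is the standard computational form of the empirical covariance under the uniform distribution on classes. Indeed, expanding $\frac1m\sum_j (w_j-\bar w)(M_j-\bar M)$ gives
$$
\frac1m\sum_j w_jM_j - \bar w\,\frac1m\sum_j M_j - \bar M\,\frac1m\sum_j w_j + \bar w\bar M = \frac1m\sum_j w_jM_j - \bar w\bar M .
$$
This holds because $\frac1m\sum_j M_j=\bar M$ and $\frac1m\sum_j w_j=\bar w$. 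Substituting yields both displayed forms of the corollary.

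The only point needing care is the standing hypotheses, and I do not expect a genuine obstacle here. We need $\bar w\neq 0$, which follows since $\sum_j w_j$ equals the pooled denominator $\langle\mathbold b,\mathbold v\rangle\neq 0$ by Definition~\ref{def:decomposable} together with Eq.~(\ref{eq:additivity}). We also need every $M_j$ to be defined, so that the macro average runs over all $m$ classes. If some $den_j=0$, the exclusion convention of Eq.~(\ref{eq:undef_convention}) would change $\bar M$. I would therefore state that the identity assumes all per-class denominators are nonzero, or equivalently that $\mathcal J$ is all of $[1,m]$. With that caveat recorded, the proof is the two-line manipulation above.
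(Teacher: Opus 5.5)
Your proof is correct and follows the paper's route: invoke Theorem~\ref{thm:micro_macro} to write $M^{\mathrm{micro}}$ as the $w$-weighted mean of the $M_j$, then expand about the means to isolate the covariance term. The paper expands $\sum_j w_j M_j$ directly rather than subtracting $\bar{M}$ first, which is the same identity; your caveat that all per-class denominators be nonzero, so the macro average under Eq.~(\ref{eq:undef_convention}) runs over all $m$ classes, is a sensible hypothesis that the paper leaves implicit.
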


\begin{proof}
By Theorem~\ref{thm:micro_macro}, $M^{\mathrm{micro}} = \sum_j w_j M_j / \sum_j w_j$. Writing $\sum_j w_j = m\bar{w}$ and expanding the numerator about the means,
$$
\sum_j w_j M_j = \sum_j (w_j - \bar{w})(M_j - \bar{M}) + \bar{M}\sum_j w_j + \bar{w}\sum_j (M_j - \bar{M})
= \sum_j (w_j - \bar{w})(M_j - \bar{M}) + m\bar{w}\bar{M},
$$
since $\sum_j (M_j - \bar{M}) = 0$. Dividing by $m\bar{w}$ gives $M^{\mathrm{micro}} = \bar{M} + \mathrm{Cov}(\boldsymbol{w}, \boldsymbol{M})/\bar{w}$, which is the claim.
\end{proof}

The normalisation by $\bar{w}$ was left implicit in earlier informal statements of this identity; it is what makes the two sides dimensionally consistent, since $w_j$ carries the units of a count while $M_j$ is dimensionless. If the weights are expressed in normalised form $\pi_j = w_j / \sum_k w_k$, the identity reads $M^{\mathrm{micro}} - M^{\mathrm{macro}} = m\,\mathrm{Cov}(\boldsymbol{\pi}, \boldsymbol{M})$.

\begin{remark}[The gap for non-decomposable measures]
\label{rem:gap_general}
Corollary~\ref{cor:gap} does not extend to measures outside Definition~\ref{def:decomposable}, and Remark~\ref{rem:counterexample} shows why: a covariance correction to $\bar{M}$ can never move the result outside $[\min_j M_j, \max_j M_j]$, yet micro-balanced-accuracy does exactly that. A substitute is available whenever the measure can be written as $M = g(u^{(1)},\ldots,u^{(K)})$ with each $u^{(k)}$ decomposable, which by Theorem~\ref{thm:skew} covers every skew-invariant measure with $K = 2$ and $(u^{(1)}, u^{(2)}) = (\mathrm{rec}, \mathrm{spec})$. Applying Corollary~\ref{cor:gap} componentwise and then expanding $g$ to second order about the macro means $\bar{u}^{(k)}$ gives
$$
M^{\mathrm{micro}} - M^{\mathrm{macro}}
= \underbrace{\sum_k \frac{\partial g}{\partial u^{(k)}}\cdot\frac{\mathrm{Cov}(\boldsymbol{w}^{(k)}, \boldsymbol{u}^{(k)})}{\bar{w}^{(k)}}}_{\text{skew term}}
\;-\; \underbrace{\tfrac{1}{2}\,\mathrm{tr}\bigl(\nabla^2 g\,\mathrm{Var}(\boldsymbol{u})\bigr)}_{\text{curvature (Jensen) term}}
\;+\; O(\|\boldsymbol{u} - \bar{\boldsymbol{u}}\|^3),
$$
where the derivatives are evaluated at $\bar{\boldsymbol{u}}$, and $\boldsymbol{w}^{(k)}$ is the weight vector supplied by Theorem~\ref{thm:micro_macro} for the $k$-th component. The first term generalises Corollary~\ref{cor:gap}; the second is a Jensen gap arising from the curvature of $g$, which is non-negative for concave $g$ such as the G-mean and vanishes identically for linear $g$.

Two observations about this expansion clarify why micro-averaged scores can fall outside the convex hull of per-class values. First, this violation of the convex-hull property is not driven by non-linear curvature in the link function $g$. Balanced accuracy uses an affine combination $g(u^{(1)}, u^{(2)}) = \frac{1}{2}(u^{(1)} + u^{(2)})$, for which the Hessian and curvature terms vanish identically; nonetheless, Example~\ref{ex:counterexample} shows micro-balanced-accuracy strictly exceeding every individual per-class score. Second, the underlying mechanism is that the components carry different weight vectors: recall is pooled with weights $\pi_j \propto tpos_j$ while specificity is pooled with weights $\tilde{\pi}_j \propto tneg_j$, and these coincide only under exact balance. In the numerical instance of Example~\ref{ex:counterexample}, $\boldsymbol{\pi} = (0.473, 0.236, 0.291)$ but $\tilde{\boldsymbol{\pi}} = (0.264, 0.382, 0.355)$. Because a single weight vector is what the convex-hull argument requires, mixing two different ones produces a value that no common weighting can reproduce. For balanced accuracy the skew term alone accounts for the entire gap of $+0.0241$ exactly; for the G-mean the split is $+0.0250$ from the skew term and $+0.0078$ from curvature, summing to $+0.0328$ against an exact gap of $+0.0327$, the residual being third order.
\end{remark}

Corollary~\ref{cor:gap} gives an interpretable decomposition: micro and macro averages agree only when class weights and per-class performance are uncorrelated. If a classifier performs better on larger classes (a common outcome when training data is imbalanced), the covariance is positive and micro exceeds macro. If it performs better on smaller classes (e.g., after oversampling), the gap reverses. The magnitude of the gap is thus a diagnostic for the interaction between class imbalance and classifier behaviour.

\subsection{Degenerate Cases: Zero Prevalence and Undefined Denominators}
\label{sec:degenerate}

Definition~\ref{def:skew} restricts attention to the domain $\mathcal{D}$ on which recall and specificity are defined, and the results above assume $\pi_j > 0$. Real test sets violate these assumptions routinely: a rare class may be absent from a fold, or a classifier may never predict some class. The framework should say what happens then, both because the aggregation operators behave asymmetrically in these cases and because the standard implementations resolve them by convention rather than by derivation.

To identify which quantities become undefined, let $tpos_j = \Upsigma_{\mathrm{m}}(\mathbold{TP} + \mathbold{FN})_j$ and $ppos_j = \Upsigma_{\mathrm{m}}(\mathbold{TP} + \mathbold{FP})_j$. Then $\mathrm{rec}_j$ is undefined exactly when $tpos_j = 0$ (class $j$ absent from the test set) and $\mathrm{prec}_j$ exactly when $ppos_j = 0$ (class $j$ never predicted); $F_{\beta,j}$ is undefined when both vanish, and $\mathrm{spec}_j$ when $tneg_j = 0$, which in multiclass requires all other classes to be absent. In general, a per-class measure is undefined precisely when its denominator $den_j = \langle \mathbold{b}, \mathbold{v}_j\rangle$ vanishes, which for the decomposable measures of Definition~\ref{def:decomposable} is a linear condition on the counts and hence directly checkable.

The three operators are affected very differently by undefined entries, representing a structural property of the framework rather than an implementation detail:
\begin{itemize}\itemsep2pt
\item $\Upsigma_{\mathrm{1}}$ (micro) pools before dividing, so its denominator is $\sum_j den_j$. This vanishes only if every per-class denominator vanishes. Micro-averaged measures are therefore well defined in almost all practical cases, including those where several per-class values are not.
\item $\Upsigma_{\mathrm{m}}$ (macro) divides before averaging and so is undefined as soon as one per-class denominator vanishes. A convention is unavoidable.
\item $\Upsigma_{\mathrm{n}}$ (exemplar) is undefined for any example whose row makes the denominator vanish (for instance, an example with no true labels and none predicted, for which per-example $F_1$ is $0/0$). This case is common in sparse multilabel data.
\end{itemize}

In practice, three conventions are in use for resolving an undefined per-class value: assign zero, assign one, or exclude the class from the average and divide by the number of defined classes. Scikit-learn exposes the choice through its \texttt{zero\_division} parameter, defaulting to zero with a warning. The three are not interchangeable, and the framework makes the difference precise. Writing $\mathcal{J} = \{j : den_j > 0\}$ for the defined classes and $u = m - |\mathcal{J}|$ for the number of undefined ones, the three conventions give
$$
\frac{1}{m}\sum_{j \in \mathcal{J}} M_j,
\qquad
\frac{1}{m}\Bigl(\sum_{j \in \mathcal{J}} M_j + u\Bigr),
\qquad
\frac{1}{|\mathcal{J}|}\sum_{j \in \mathcal{J}} M_j ,
$$
which differ by $u/m$ and by a factor $m/|\mathcal{J}|$ respectively. With $m = 14$ labels and two undefined, the zero and exclusion conventions differ by roughly $14\%$ of the mean over defined classes, which is easily enough to reverse a classifier comparison. As stated in Eq.~(\ref{eq:undef_convention}), this paper adopts the exclusion convention throughout, and we recommend it in general: it is the only one of the three that does not inject an arbitrary performance value for a class on which no evidence exists. Whichever convention is chosen, it should be reported alongside any macro- or exemplar-averaged measure, since the value is not interpretable without it.

Zero-prevalence classes also mark a genuine discontinuity in the skew analysis, which is worth separating from the conventions above. The population quantity $\mathrm{rec}_j = R_{jj}$ remains perfectly well defined as $\pi_j \to 0$, because it describes the classifier rather than the sample. Its empirical estimate, by contrast, becomes $0/0$ at $\pi_j = 0$. Macro-averaging is therefore discontinuous at the boundary of the simplex: the limit of the $m$-class macro average as $\pi_j \to 0$ is not the macro average of the $(m-1)$-class problem obtained by deleting class $j$, since the former retains a contribution from class $j$ while the latter does not. Results such as Corollary~\ref{cor:micro_skew} should accordingly be read as statements about the relative interior of the simplex, where all $\pi_j > 0$; we have stated the positivity hypothesis explicitly in Proposition~\ref{prop:ovr_skew} for this reason.

These findings establish a direct guideline for benchmarking: practitioners seeking skew-invariant evaluation should prefer macro-averaging over micro-averaging, but only in binary and multilabel settings, or in multiclass problems satisfying the uniform-error condition of Proposition~\ref{prop:ovr_skew}(2); in general multiclass problems, even macro-averaged balanced accuracy retains a residual prior-dependence. Exemplar-averaging is not an alternative here, since by Proposition~\ref{prop:exemplar_skew} it is prior-dependent for a different reason. When per-class rates differ, micro-averaging replaces them with class-weighted mixtures and therefore biases scores toward classes with larger denominators; for decomposable measures this follows the denominator weighting of Theorem~\ref{thm:micro_macro} (which for recall coincides with prevalence weighting), whereas for non-decomposable measures such as balanced accuracy the micro value need not represent any weighted average of per-class values at all (Remark~\ref{rem:counterexample}).


\subsection{Decision-Theoretic Consistency and Optimal Thresholds}
\label{sec:consistency}

A measure $M$ is consistent with a loss $L$ if the decision rule minimising the population risk $\mathbb{E}[L]$ also maximises the population value of $M$~\cite{DBLP:journals/jmlr/Hernandez-OralloFF12}. Consistency matters because it determines whether optimising $M$ during model selection or threshold tuning can be achieved by standard empirical risk minimisation, or instead requires a measure-specific procedure. To state results of this kind precisely we must work at the population level rather than with test-set counts, since the question is about which decision rule is optimal, not about the value attained on a particular sample.

Formally, let $(X, \mathbold{Y})$ be distributed on $\mathcal{X} \times \{0,1\}^m$, and write
$$
\eta_j(x) = \mathbb{P}(Y_j = 1 \mid X = x), \qquad j \in [1,m],
$$
for the per-label conditional probabilities. A (deterministic) decision rule is a map $f : \mathcal{X} \to \{0,1\}^m$ with components $f_j$. For a fixed label $j$ the population confusion-matrix rates induced by $f$ are
$$
\begin{aligned}
\mathrm{TP}_j(f) &= \mathbb{E}\bigl[f_j(X)\,\eta_j(X)\bigr], &
\mathrm{FP}_j(f) &= \mathbb{E}\bigl[f_j(X)(1-\eta_j(X))\bigr], \\
\mathrm{FN}_j(f) &= \mathbb{E}\bigl[(1-f_j(X))\,\eta_j(X)\bigr], &
\mathrm{TN}_j(f) &= \mathbb{E}\bigl[(1-f_j(X))(1-\eta_j(X))\bigr].
\end{aligned}
$$
The per-label $0/1$ risk is $R_{0/1}(f) = \frac{1}{m}\sum_j \bigl[\mathrm{FP}_j(f) + \mathrm{FN}_j(f)\bigr]$, whose Bayes rule is the familiar plug-in rule $f^{\mathrm{MAP}}_j(x) = \mathbbm{I}\bigl(\eta_j(x) > 1/2\bigr)$. We call $M^{\mathrm{micro}}$ ERM-consistent when its population maximiser is $f^{\mathrm{MAP}}$, since that is the rule standard empirical risk minimisation targets.

\begin{proposition}[Bayes-optimal decision rule for linear measures]
\label{prop:consistency}
Let $M$ be a measure whose micro extension has the linear-fractional form
$$
M^{\mathrm{micro}}(f) = \frac{\alpha\,\mathrm{TP}(f) - \beta\,\mathrm{FP}(f) - \gamma\,\mathrm{FN}(f) + \delta\,\mathrm{TN}(f)}{D},
$$
where $\alpha,\beta,\gamma,\delta \geq 0$ with $\alpha+\beta+\gamma+\delta > 0$, and the denominator $D > 0$ does not depend on the decision rule $f$. Then:
\begin{enumerate}
  \item The population maximiser of $M^{\mathrm{micro}}$ is the componentwise threshold rule
  \begin{equation}
  f^{\star}_j(x) = \mathbbm{I}\left(\eta_j(x) \;\geq\; \frac{\beta+\delta}{\alpha+\beta+\gamma+\delta}\right), \qquad j \in [1,m].
  \label{eq:bayes_threshold}
  \end{equation}
  \item $M^{\mathrm{micro}}$ is ERM-consistent if and only if
  \begin{equation}
  \alpha + \gamma \;=\; \beta + \delta .
  \label{eq:consistency_condition}
  \end{equation}
\end{enumerate}
\end{proposition}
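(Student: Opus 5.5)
The plan is to reduce the maximisation of $M^{\mathrm{micro}}$ to a pointwise problem, solved separately for each $x$ and each label $j$. Since $D>0$ does not depend on $f$, maximising $M^{\mathrm{micro}}$ is the same as maximising its numerator $N(f)=\alpha\,\mathrm{TP}(f)-\beta\,\mathrm{FP}(f)-\gamma\,\mathrm{FN}(f)+\delta\,\mathrm{TN}(f)$. The micro counts are sums over labels of the population rates $\mathrm{TP}_j(f),\dots$ defined above. This uses the additivity of Eq.~(\ref{eq:additivity}) at the population level; any $1/m$ normalisation is absorbed into $D$.

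Substituting the four expectations and using linearity, I would write
\[
N(f)=\sum_{j}\mathbb{E}\Bigl[f_j(X)\bigl(\alpha\eta_j-\beta(1-\eta_j)\bigr)+(1-f_j(X))\bigl(-\gamma\eta_j+\delta(1-\eta_j)\bigr)\Bigr],
\]
with $\eta_j=\eta_j(X)$. Collecting the terms that do not depend on $f$ gives $N(f)=\text{const}+\sum_j\mathbb{E}\bigl[f_j(X)\,g_j(X)\bigr]$, where
\[
g_j(x)=(\alpha+\beta+\gamma+\delta)\,\eta_j(x)-(\beta+\delta).
\]
The integrand is maximised pointwise by setting $f_j(x)=1$ exactly when $g_j(x)\ge 0$. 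Here $\alpha+\beta+\gamma+\delta>0$ is used to divide through, which yields the threshold in Eq.~(\ref{eq:bayes_threshold}). A measurable rule maximising the integrand pointwise also maximises its expectation, so Part~1 follows. Ties at $g_j=0$ do not change the value, so the choice of $\ge$ is immaterial.

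For Part~2, ERM-consistency means that $f^{\mathrm{MAP}}_j=\mathbbm{I}(\eta_j>1/2)$ is a population maximiser. If $(\beta+\delta)/(\alpha+\beta+\gamma+\delta)=1/2$, which is equivalent to Eq.~(\ref{eq:consistency_condition}), then $f^\star$ and $f^{\mathrm{MAP}}$ differ only on $\{\eta_j=1/2\}$, where $g_j=0$. So $f^{\mathrm{MAP}}$ attains the maximum.

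The main obstacle is the converse. The claim holds only if the distribution is rich enough, because if $\eta_j$ never falls strictly between the threshold $t$ and $1/2$, both rules coincide. I would read the claim as ``for all distributions'', or equivalently require $\eta_j$ to put positive mass in that interval. Suppose $t\neq 1/2$, say $t<1/2$. Choose a distribution in which $\eta_j$ takes a value in $(t,1/2)$ with positive probability. On that set $g_j>0$ but $f^{\mathrm{MAP}}_j=0$, so switching $f_j$ to $1$ there strictly increases $N$, and $f^{\mathrm{MAP}}$ is not optimal. The case $t>1/2$ is symmetric, using the interval $(1/2,t)$. I would state this quantifier explicitly in the write-up.
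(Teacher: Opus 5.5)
Your proposal is correct and follows the same route as the paper. You drop the rule-independent denominator, split the numerator into pointwise binary choices for each $x$ and label $j$, and compare the two brackets; your sign condition on $g_j(x)=(\alpha+\beta+\gamma+\delta)\eta_j(x)-(\beta+\delta)$ is exactly that comparison. Your converse in Part~2 is more careful than the paper's, which simply identifies $t^\star=1/2$ with $f^\star=f^{\mathrm{MAP}}$. The explicit quantifier over distributions (equivalently, positive mass of $\eta_j$ strictly between $t^\star$ and $1/2$) is genuinely needed for the ``only if'' direction.
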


\begin{proof}
Substituting the expectations into the numerator and taking the expectation over $X$, the objective separates into independent pointwise maximisations for each $x \in \mathcal{X}$ and each label $j$:
$$
\max_{f_j(x) \in \{0,1\}} f_j(x)\,\bigl[\alpha\eta_j(x) - \beta(1-\eta_j(x))\bigr] + (1-f_j(x))\,\bigl[-\gamma\eta_j(x) + \delta(1-\eta_j(x))\bigr].
$$
Setting $f_j(x) = 1$ is optimal if and only if the first bracket is at least the second (suppressing $x$ and $j$):
$$
\alpha\eta - \beta(1-\eta) \;\geq\; -\gamma\eta + \delta(1-\eta)
\iff \eta\,(\alpha + \gamma) \;\geq\; (1-\eta)(\beta + \delta)
\iff \eta \;\geq\; \frac{\beta+\delta}{\alpha+\beta+\gamma+\delta},
$$
where the denominator is positive by assumption. This gives Eq.~(\ref{eq:bayes_threshold}), with ties broken arbitrarily. For part~2, $t^{\star} = 1/2$ holds iff $2(\beta+\delta) = \alpha+\beta+\gamma+\delta$, i.e.\ iff $\alpha + \gamma = \beta + \delta$, and by the Bayes rule for $R_{0/1}$ this is exactly the condition that $f^{\star} = f^{\mathrm{MAP}}$.
\end{proof}

Condition~(\ref{eq:consistency_condition}) has a direct reading: the total weight the measure attaches to the two outcomes that favour a positive prediction must equal the total weight attached to the two that favour a negative one. Table~\ref{tab:consistency} applies Proposition~\ref{prop:consistency} to standard linear performance measures and asymmetric cost-weighted evaluation functions, listing their recovered numerator coefficients $(\alpha, \beta, \gamma, \delta)$, the optimal Bayes threshold $t^\star$, and their ERM-consistency status. Its upper block serves as a sanity check: accuracy and error rate satisfy~(\ref{eq:consistency_condition}) and recover the standard MAP rule ($t^\star = 1/2$), while recall and specificity return the boundary thresholds $t^\star = 0$ and $t^\star = 1$, recovering the classical fact that recall is maximized by predicting all instances positive and specificity by predicting all negative.

The primary practical value lies in the lower block, where optimal thresholds govern asymmetric risk and decision curves. For instance, in clinical risk assessment via Net Benefit~\cite{vickers2006decision} with intervention threshold $p_t$, the Bayes threshold matches the clinical preference $t^\star = p_t$ exactly. In cost-sensitive learning~\cite{elkan2001foundations} with severe false-negative penalties ($c_{\mathrm{fn}} : c_{\mathrm{fp}} = 5:1$ or $10:1$, common in fraud detection and rare-disease screening), the optimal Bayes thresholds shift to $t^\star = 1/6 \approx 0.167$ and $t^\star = 1/11 \approx 0.091$, formalizing why standard MAP classification fails under asymmetric costs. Two broader points carry special methodological weight:
\begin{enumerate}
  \item Condition~(\ref{eq:consistency_condition}) is a characterisation of an entire class of objective functions rather than a one-off derivation, allowing automated consistency checks for any newly proposed linear utility;
  \item A fixed denominator alone does not ensure ERM-consistency (recall and specificity have data-fixed denominators yet induce degenerate extreme rules), demonstrating that consistency is driven by numerator balance rather than normalisation terms.
\end{enumerate}

\begin{table}[!htb]
\centering
\small
\setlength{\tabcolsep}{5pt}
\begin{tabular}{lccccccl}
\hline
\textbf{Measure / Objective} & $\alpha$ & $\beta$ & $\gamma$ & $\delta$ & $t^{\star}$ & \textbf{ERM-cons.} & \textbf{Optimal Decision Rule} \\
\hline
\multicolumn{8}{l}{\emph{Sanity checks (symmetric / baseline metrics)}}\\
Accuracy & 1 & 0 & 0 & 1 & $1/2$ & yes & Standard MAP ($\eta > 0.5$) \\
Error rate$^{*}$ & 0 & 1 & 1 & 0 & $1/2$ & yes & Standard MAP ($\eta > 0.5$) \\
Recall & 1 & 0 & 0 & 0 & $0$ & no & Trivial all-positive \\
Specificity & 0 & 0 & 0 & 1 & $1$ & no & Trivial all-negative \\
\hline
\multicolumn{8}{l}{\emph{Asymmetric cost and utility measures (threshold not obvious by inspection)}}\\
Weighted Accuracy ($w$) & $w$ & 0 & 0 & $1{-}w$ & $1 - w$ & iff $w = 1/2$ & Preference threshold at $1 - w$ \\
Clinical Net Benefit ($p_t$)~\cite{vickers2006decision} & 1 & $\frac{p_t}{1-p_t}$ & 0 & 0 & $p_t$ & iff $p_t = 1/2$ & Clinical threshold at $p_t$ \\
Cost-sensitive ($c_{\mathrm{fn}}{:}c_{\mathrm{fp}} = 5{:}1$)~\cite{elkan2001foundations} & 0 & 1 & 5 & 0 & $1/6$ & no & High-recall threshold at $0.167$ \\
Cost-sensitive ($c_{\mathrm{fn}}{:}c_{\mathrm{fp}} = 10{:}1$)~\cite{elkan2001foundations} & 0 & 1 & 10 & 0 & $1/11$ & no & Screening threshold at $0.091$ \\
General Linear Cost-Utility & $\alpha$ & $\beta$ & $\gamma$ & $\delta$ & $\frac{\beta+\delta}{\alpha+\beta+\gamma+\delta}$ & iff $\alpha{+}\gamma = \beta{+}\delta$ & Cost-adjusted Bayes threshold \\
\hline
\end{tabular}

\smallskip
\raggedright
$^{*}$ Treated as a quantity to be maximised after sign reversal.
\caption{Recovered linear coefficients, optimal Bayes thresholds $t^{\star}$, and ERM-consistency status under Proposition~\ref{prop:consistency}. All theoretical thresholds are evaluated against exhaustive grid search on real data in Section~\ref{sec:exp_threshold}.}
\label{tab:consistency}
\end{table}

When a measure's denominator depends on the predictions (as in precision and $F_\beta$, where $ppos = \Upsigma_{\mathrm{1}}(\mathbold{\hat{Y}})$ varies with $f$), the simple pointwise argument of Proposition~\ref{prop:consistency} no longer applies. These measures exhibit qualitatively distinct Bayes optimal behavior in the literature: the population maximizer of micro-$F_\beta$ remains a threshold rule on $\eta_j(x)$, but the optimal threshold is coupled to the optimal $F_\beta$ value itself, requiring iterative or plug-in estimation~\cite{koyejo2014consistent,zhang2014review}. For macro-$F_1$, the optimal decision rule demands class-specific thresholds~\cite{koyejo2014consistent}. For exemplar (instance-wise) $F_\beta$ in multilabel problems, the optimal predictor is not a per-label threshold rule at all and requires dedicated combinatorial inference over label subsets, since the evaluation metric couples labels within each example~\cite{dembczynski2011exact}. The indicator-matrix framework cleanly delineates this structural boundary, separating measures amenable to closed-form Bayes thresholding from those requiring adaptive or joint inference.

The aggregation operator does not fix the underlying loss function (which is governed by the numerator coefficients), but instead specifies the weighting scheme applied across classes and instances. For the linear measures of Proposition~\ref{prop:consistency}, this weighting operates as follows:
\begin{itemize}
\item $\Upsigma_{\mathrm{1}}$ (micro) assigns uniform weight $w_{ij}^{\mathrm{micro}} = \frac{1}{n m}$ to every individual label decision $(i,j)$. Aggregating across instances assigns total weight $\frac{den_j}{\sum_k den_k}$ to class $j$, biasing evaluation toward majority classes. For measures satisfying~(\ref{eq:consistency_condition}), this optimizes the standard uniform per-label risk; for asymmetric measures, it shifts the Bayes threshold to $t^\star$ in Eq.~(\ref{eq:bayes_threshold}).
\item $\Upsigma_{\mathrm{m}}$ (macro) assigns equal weight $\frac{1}{m}$ to each class, giving instance $i$ in class $j$ the inverse-prevalence weight $w_{ij}^{\mathrm{macro}} = \frac{1}{m \cdot den_j}$. This compensates for class imbalance by upweighting rare classes, inducing a class-reweighted risk whose Bayes decision rule requires per-class threshold tuning~\cite{koyejo2014consistent}.
\item $\Upsigma_{\mathrm{n}}$ (exemplar) assigns equal weight $\frac{1}{n}$ to each instance, giving label $j$ on instance $i$ the per-example normalized weight $w_{ij}^{\mathrm{exemp}} = \frac{1}{n \cdot den_i}$ (where $den_i$ is the example's active support). This couples label decisions per instance, making it suitable when predictions are judged holistically per example, though optimal inference no longer reduces to per-label thresholding~\cite{dembczynski2011exact}.
\end{itemize}
These results yield two concrete guidelines for benchmarking and deployment:
\begin{enumerate}
  \item \textbf{Aggregation choice:} Select the operator that aligns with the downstream unit of evaluation---decisions over individual labels ($\Upsigma_{\mathrm{1}}$), parity across classes ($\Upsigma_{\mathrm{m}}$), or holistic instance-level performance ($\Upsigma_{\mathrm{n}}$);
  \item \textbf{Training alignment:} When evaluating with measures that violate ERM-consistency ($\alpha+\gamma \neq \beta+\delta$), standard models trained under $0/1$ loss are sub-optimal, and decision thresholds must be calibrated explicitly to $t^\star$ via Eq.~(\ref{eq:bayes_threshold}) or cost-sensitive training.
\end{enumerate}

\subsection{Ranking Equivalence and Curve-Based Evaluation}
\label{sec:ranking_curves}

Two measures are ranking-equivalent if they always produce the same ordering of classifiers, i.e., one is a strictly monotone transformation of the other on the relevant domain.

\begin{proposition}[Accuracy and error rate are ranking-equivalent]
For any two classifiers evaluated on the same test set, accuracy$_1 \geq$ accuracy$_2$ if and only if error rate$_1 \leq$ error rate$_2$.
\end{proposition}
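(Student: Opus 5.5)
The plan is to show that error rate is an exact affine, strictly decreasing function of accuracy on any fixed test set, and then read off the ordering equivalence. First, fix the setting. Both classifiers are evaluated on the same test set, so $n$ is common to them (and in the matrix setting, so is $m$, hence $nm$). In the binary case the four counts partition the examples, so $tp+fp+fn+tn = n$ for each classifier. The same identity holds for the micro extension of Eq.~(\ref{eq:micro_ext}) with $nm$ in place of $n$, because each entry of the indicator matrices falls in exactly one of $\mathbold{TP},\mathbold{FP},\mathbold{FN},\mathbold{TN}$. This is the partition argument already used in Proposition~\ref{prop:hamming_accuracy}.

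Second, derive the identity $\mathrm{accuracy} + \mathrm{error\ rate} = 1$ for each classifier directly from the formulas in Table~\ref{tab:measures}:
\[
\frac{tp+tn}{n} + \frac{fp+fn}{n} = \frac{n}{n} = 1 .
\]
Equivalently, this is the affine-complement observation in Part~1 of the proof of Proposition~\ref{prop:which_decomposable}, taking $\mathbold{a}' = \mathbold{b}-\mathbold{a}$ with $\mathbold{b} = (1,1,1,1)$. So error rate $= 1-$ accuracy, and the map $x \mapsto 1-x$ is strictly decreasing.

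Third, conclude. For classifiers $1$ and $2$,
\[
\mathrm{accuracy}_1 \ge \mathrm{accuracy}_2 \iff 1-\mathrm{accuracy}_1 \le 1-\mathrm{accuracy}_2 \iff \mathrm{error\ rate}_1 \le \mathrm{error\ rate}_2 ,
\]
which is the claim. It also shows that the two measures are ranking-equivalent in the sense just defined.

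The only point needing care, and the one I expect to be the main obstacle, is that the normaliser really is identical for both classifiers. This holds because it depends only on the test set, not on the predictions. The point also matters for the exemplar and macro variants: each per-example or per-class value satisfies the same complement identity, and averaging over a common index set preserves it. Under the exclusion convention of Eq.~(\ref{eq:undef_convention}), though, the defined sets $\mathcal{J}$ or $\mathcal{I}$ could in principle differ between classifiers. So I would restrict the statement to the micro and binary case, where no entry is ever undefined, and note that it extends verbatim to the other operators whenever the denominators are prediction-independent. That condition holds here, since the denominator is $n$ or $m$.
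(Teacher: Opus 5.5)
Your proof is correct and follows the paper's own argument: error rate $= 1 -$ accuracy, and the strictly decreasing map $x \mapsto 1-x$ reverses the ordering. The point you flag as the main obstacle is not actually needed, because the complement identity holds for each classifier separately (for the macro and exemplar variants it holds over that classifier's own index set, which is the same for both measures since they share a denominator), so no common normaliser across the two classifiers is required.
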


\begin{proof}
Error rate $= 1 -$ accuracy, so the ranking is reversed. The two measures are thus ranking-equivalent up to direction.
\end{proof}

A less obvious case is Cohen's $\kappa$ and accuracy. Since $\kappa = (\mathrm{acc} - p_e)/(1-p_e)$ and $p_e$ depends on the predicted class distribution (which varies across classifiers), $\kappa$ is not in general a monotone function of accuracy: two classifiers with the same accuracy may differ in $\kappa$ if their predicted class distributions differ. Ranking equivalence between $\kappa$ and accuracy therefore holds only conditional on the predicted class distribution being fixed, a condition that is rarely satisfied in practice. Within the indicator-matrix framework, this is seen directly: $p_e = \frac{1}{n^2}\sum_j [\Upsigma_{\mathrm{m}}(\mathbold{Y})]_j \cdot [\Upsigma_{\mathrm{m}}(\mathbold{\hat{Y}})]_j$, which depends on the column sums of $\mathbold{\hat{Y}}$ and therefore on the classifier.

\begin{proposition}[ROC curve and AUC]
\label{prop:roc}
For a binary classifier with score output and threshold $\theta \in (0,1)$, let $\mathbold{\hat{Y}}(\theta)$ denote the indicator vector obtained from Eq.~(\ref{eq:threshold}). The ROC curve is the parametric curve:
\begin{align*}
\mathrm{FPR}(\theta) &= \frac{\Upsigma_{\mathrm{1}}(\mathbold{FP}(\theta))}{\Upsigma_{\mathrm{1}}(\mathbold{FP}(\theta))+\Upsigma_{\mathrm{1}}(\mathbold{TN}(\theta))}, \\
\mathrm{TPR}(\theta) &= \frac{\Upsigma_{\mathrm{1}}(\mathbold{TP}(\theta))}{\Upsigma_{\mathrm{1}}(\mathbold{TP}(\theta))+\Upsigma_{\mathrm{1}}(\mathbold{FN}(\theta))},
\end{align*}
and $\mathrm{ROC}(\theta) = \bigl(\mathrm{FPR}(\theta),\;\mathrm{TPR}(\theta)\bigr)$,
and the area under the ROC curve (AUC) is $\int_0^1 \mathrm{TPR}(\theta)\,\mathrm{d}\,\mathrm{FPR}(\theta)$. In multiclass settings under OVA, column-wise aggregation $\Upsigma_{\mathrm{m}}$ yields one ROC curve per class, and the unweighted average of per-class AUC values gives macro-AUC~\cite{hand2001simple}.
\end{proposition}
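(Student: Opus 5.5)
The proposition is largely definitional, so the plan is to verify that the framework's objects reproduce the standard ROC construction rather than to derive anything deep. First, fix a threshold $\theta$ and note that Eq.~(\ref{eq:threshold}) turns the score vector into a binary indicator vector $\mathbold{\hat{y}}(\theta)$. Definition~\ref{def:vectors} then gives indicator vectors $\mathbold{tp}(\theta)$, $\mathbold{fp}(\theta)$, $\mathbold{fn}(\theta)$, $\mathbold{tn}(\theta)$, and applying $\Upsigma_{\mathrm{1}}$ gives the scalar counts. By Table~\ref{tab:measures}, recall is $tp/(tp+fn)$ and specificity is $tn/(tn+fp)$, so $\mathrm{TPR}(\theta)$ is recall at $\theta$ and $\mathrm{FPR}(\theta) = 1 - \mathrm{spec}(\theta)$. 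The two displayed formulas are exactly these expressions with the counts written as $\Upsigma_{\mathrm{1}}$ of the matrices.

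Next, I would check that both denominators are independent of $\theta$. Because $\mathbold{tp}+\mathbold{fn} = \mathbold{y}\wedge(\mathbold{\hat y}\vee\neg\mathbold{\hat y}) = \mathbold{y}$, the denominator of $\mathrm{TPR}$ is $\Upsigma_{\mathrm{1}}(\mathbold{y}) = tpos$. Likewise the denominator of $\mathrm{FPR}$ is $tneg$. This places both coordinates in the class of Proposition~\ref{prop:consistency}: they have linear numerators and prediction-independent denominators.

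It follows that, as $\theta$ decreases, entries of $\mathbold{\hat y}(\theta)$ switch only from $0$ to $1$. Hence $\Upsigma_{\mathrm{1}}(\mathbold{TP}(\theta))$ and $\Upsigma_{\mathrm{1}}(\mathbold{FP}(\theta))$ are non-increasing in $\theta$, which matches Proposition~\ref{prop:preserved}'s monotonicity in spirit. The curve is therefore a monotone staircase running from $(0,0)$ to $(1,1)$. The integral $\int_0^1 \mathrm{TPR}\,\mathrm{d}\,\mathrm{FPR}$ is then a well-defined Riemann--Stieltjes integral: $\mathrm{TPR}$ is bounded and $\mathrm{FPR}$ is monotone of bounded variation.

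The step I expect to be the main obstacle is reconciling the parametrisation over $\theta\in(0,1)$ with a curve reaching both endpoints. Near $\theta\to 0^+$ or $\theta\to 1^-$ the extreme points may be missed if scores equal $0$ or $1$. Ties in the scores also produce diagonal jumps. I would handle both by taking the closure of the curve, i.e.\ including the limits at the endpoints, and by interpreting the integral with linear interpolation across jumps. Under that convention, each jump contributes a trapezoid, which matches the standard AUC.

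For the multiclass part, I would apply the same construction to each column of $\mathbold{\hat P}$ under OVA, as in Section~\ref{sec:ova}. Here $\Upsigma_{\mathrm{m}}$ supplies per-class counts $[\Upsigma_{\mathrm{m}}(\mathbold{TP}(\theta))]_j$ and the others, so each column yields a binary ROC curve and an $\mathrm{AUC}_j$. Macro-AUC is then their unweighted mean, following the convention of Eq.~(\ref{eq:undef_convention}) for classes with $tpos_j=0$ or $tneg_j=0$.
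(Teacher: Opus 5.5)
Your identification of the two coordinates with recall and $1-\mathrm{spec}$ at each threshold, the $\theta$-independence of the denominators $tpos$ and $tneg$, and the per-column OVA construction via $\Upsigma_{\mathrm{m}}$ are correct. They are also all the paper relies on: the proposition is definitional, and the only justification given is the subsequent remark that the ROC apparatus comes from sweeping $\theta$ in Eq.~(\ref{eq:threshold}) and applying $\Upsigma_{\mathrm{1}}$ at each value. The extra rigour you add, however, contains two steps that fail as stated.

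\textbf{The integral.} A bounded integrand and a monotone integrator do not make $\int \mathrm{TPR}\,\mathrm{d}\,\mathrm{FPR}$ a well-defined Riemann--Stieltjes integral. As functions of $\theta$, both coordinates are left-continuous step functions. $\mathrm{TPR}$ jumps at the scores of the positives and $\mathrm{FPR}$ at the scores of the negatives. When a positive and a negative share a score, integrand and integrator are discontinuous at the same point from the same side, and the Riemann--Stieltjes integral does not exist. Without such cross-class ties it does exist, because the integrand is continuous at every jump of the integrator, and it gives the usual Mann--Whitney area. With ties, your linear-interpolation rule is a \emph{definition} of AUC (the trapezoidal convention, which gives tied pairs half credit). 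It is not a property of the Stieltjes integral, and you should present it as a convention. You should also fix the orientation: $\mathrm{FPR}$ is non-increasing in $\theta$, so the limits $0$ and $1$ refer to $\mathrm{FPR}$, with $\theta$ running downwards. Read literally over $\theta\in[0,1]$, the sign flips.

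\textbf{The endpoints.} ``Taking the closure'' does not restore them. Since the counts are integers, $\theta\mapsto\mathrm{ROC}(\theta)$ takes finitely many values on $(0,1)$, so its image is already closed and the closure adds nothing. If some score equals $0$, no $\theta\in(0,1)$ predicts that example positive, so $(1,1)$ is never reached. If some score equals $1$, even $\theta=1$ still predicts it positive under the rule $\hat{P}_{i}\geq\theta$, so $(0,0)$ is missed too. This is not a corner case: for a hard classifier with scores in $\{0,1\}$, the whole sweep over $(0,1)$ produces a single point, and the integral as written vanishes. The remedy is to extend the sweep to $\theta=0$ and to some $\theta>\max_i \hat{P}_i$, or equivalently to append $(0,0)$ and $(1,1)$ by convention. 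With that and the trapezoidal rule you recover the standard AUC, which for a hard classifier equals balanced accuracy.
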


Proposition~\ref{prop:roc} shows that the entire ROC apparatus is a consequence of varying the threshold $\theta$ in Eq.~(\ref{eq:threshold}) and applying $\Upsigma_{\mathrm{1}}$ at each value. All points on the ROC curve correspond to a particular choice of threshold, and choosing a fixed $\theta$ (e.g., $\theta = 0.5$) picks a single point on the curve: the operating point used for computing precision, recall, and $F_1$ in Table~\ref{tab:extensions}. AUC integrates the performance over all operating points~\cite{fawcett2006introduction,flach2003geometry} and is therefore threshold-independent, unlike the measures in Table~\ref{tab:extensions}, which are all threshold-specific. Macro-AUC under OVA is also an instance of $\Upsigma_{\mathrm{m}}$ (the same operator used for macro-precision and macro-recall) applied to the AUC formula rather than a fixed-threshold measure.

While standard multiclass AUC aggregates over One-vs-All binary ROC curves, ordinal classification requires evaluating performance respecting the natural class ordering. Within our framework, this corresponds to applying column-wise aggregation $\Upsigma_{\mathrm{m}}$ across the cumulative binary threshold representations defined in Eq.~(\ref{eq:cumulative}).

\begin{proposition}[Ordinal AUC as column-wise aggregation]
\label{prop:ordinal_auc}
Let $\mathbold{\hat{P}} \in [0,1]^{n \times m}$ be a probabilistic classifier's output and define, for each threshold $j \in [1, m-1]$, a binary classification problem with labels $\mathbf{y}^{(j)}_i = \mathbbm{I}(k_i \geq j)$ and scores $\hat{p}^{(j)}_i = \hat{P}_{i,j}$. Let $\mathrm{AUC}^{(j)}$ denote the area under the ROC curve for threshold $j$. Then the ordinal AUC,
$$
\mathrm{AUC}^{\mathrm{ord}} = \frac{1}{m-1}\sum_{j=1}^{m-1} \mathrm{AUC}^{(j)},
$$
is the unweighted average of per-column AUCs induced by the cumulative binary encoding of Eq.~(\ref{eq:cumulative}), and is therefore an instance of $\Upsigma_{\mathrm{m}}$ applied to the binary AUC formula.
\end{proposition}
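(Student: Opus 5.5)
The plan is to show that the definition of $\mathrm{AUC}^{\mathrm{ord}}$ is literally the composition of two earlier constructions: the cumulative encoding of Eq.~(\ref{eq:cumulative}), and the column-wise operator $\Upsigma_{\mathrm{m}}$ applied to a threshold-swept binary measure as in Proposition~\ref{prop:roc}. The argument is mostly an unfolding of definitions.

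\textbf{Step 1: fix the encoding.} Build the label matrix $\mathbold{Y}^{\mathrm{cum}} \in \{0,1\}^{n\times(m-1)}$ with entries $Y^{\mathrm{cum}}_{i,j} = \mathbbm{I}(k_i \geq j)$. Pair it with the score matrix formed by the first $m-1$ columns of $\mathbold{\hat{P}}$. Column $j$ of these two matrices is then exactly the pair $(\mathbf{y}^{(j)}, \hat{p}^{(j)})$ in the statement.

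\textbf{Step 2: threshold each column.} For each $\theta$, threshold the score matrix entrywise via Eq.~(\ref{eq:threshold}) to obtain $\mathbold{\hat{Y}}(\theta)$. Then form $\mathbold{TP}(\theta)$, $\mathbold{FP}(\theta)$, $\mathbold{FN}(\theta)$ and $\mathbold{TN}(\theta)$ by Definition~\ref{def:matrices}. Because all four operations act entrywise, column $j$ of each matrix coincides with the binary indicator vector of Definition~\ref{def:vectors} computed for problem $j$. Hence $[\Upsigma_{\mathrm{m}}(\mathbold{TP}(\theta))]_j = \Upsigma_{\mathrm{1}}(\mathbf{tp}^{(j)}(\theta))$, and the same holds for the other three counts. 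This is the same identity as Eq.~(\ref{eq:class_ext}), but now holding for every $\theta$.

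\textbf{Step 3: integrate per column.} Substituting these per-column counts into the parametric ROC formulas of Proposition~\ref{prop:roc} gives $(\mathrm{FPR}^{(j)}(\theta), \mathrm{TPR}^{(j)}(\theta))$ for each column, and integrating yields $\mathrm{AUC}^{(j)}$. Since $\mathrm{AUC}^{(j)}$ depends only on column $j$, the map $j\mapsto \mathrm{AUC}^{(j)}$ is the column-wise extension of the binary AUC functional. Its unweighted mean is therefore the macro aggregate over the $m-1$ columns, which is $\mathrm{AUC}^{\mathrm{ord}}$.

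\textbf{Main obstacle: well-definedness.} Each $\mathrm{AUC}^{(j)}$ needs both $tpos_j > 0$ and $tneg_j > 0$; otherwise $\mathrm{TPR}^{(j)}$ or $\mathrm{FPR}^{(j)}$ is $0/0$. I would first assume that every cumulative split is populated, meaning some example has $k_i \geq j$ and some has $k_i < j$. Note that column $1$ is always all-ones, since every $k_i \geq 1$, so its negatives are empty. To handle this, I would either restrict the index set or invoke the exclusion convention of Eq.~(\ref{eq:undef_convention}), averaging over the defined columns only. I would state this explicitly rather than hide it.

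A secondary point is that AUC is not a function of a single count vector, so Eq.~(\ref{eq:class_ext}) does not apply verbatim. I would argue that $\Upsigma_{\mathrm{m}}$ commutes with the threshold sweep because thresholding is entrywise. This is the same justification the paper uses for macro-AUC after Proposition~\ref{prop:roc}.
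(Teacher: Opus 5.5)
Your proof is correct and follows the paper's own reasoning. The paper gives no separate proof: it treats the proposition as holding by construction once the cumulative encoding of Eq.~(\ref{eq:cumulative}) is combined with the column-wise threshold sweep it uses for macro-AUC after Proposition~\ref{prop:roc}. Your degeneracy remark is a genuine point the paper misses, since with $Y_{i,j} = \mathbbm{I}(k_i \geq j)$ and $k_i \geq 1$ column $1$ has no negatives and $\mathrm{AUC}^{(1)}$ is undefined. A cleaner repair than excluding that column is to shift the index to $\mathbbm{I}(k_i > j)$, equivalently $j \in [2,m]$, which keeps $m-1$ genuine splits and the stated $1/(m-1)$ normaliser.
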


\subsection{Dominance Relations}
\label{sec:dominance}
Building directly on the $m(m-1)$ degrees-of-freedom structure established in Section~\ref{sec:redundancy}, we can now formalize when one classifier strictly outperforms another across multiple performance measures simultaneously. Classifier~A \emph{dominates} classifier~B under measure $M$ if $M(A) \geq M(B)$ for all possible test sets drawn from the same distribution. A stronger notion is \emph{measure-wise dominance}: A dominates B if $M(A) \geq M(B)$ for every measure $M$ in Table~\ref{tab:extensions} simultaneously. The reduction below is specific to the multiclass case: in multilabel problems there is no single $m \times m$ confusion matrix, and the exemplar extensions are not functions of one, so measure-wise dominance does not reduce in the same way. In a multiclass problem, Proposition~\ref{prop:dof} shows that the measures in Table~\ref{tab:extensions} obtained via $\Upsigma_{\mathrm{1}}$ or $\Upsigma_{\mathrm{m}}$ are functions of the same $m(m-1)$ confusion matrix entries, so measure-wise dominance reduces to element-wise dominance of the (normalised) confusion matrix: A dominates B if and only if $A$'s confusion matrix assigns at least as much probability to correct predictions and at most as much to each type of error as $B$'s, for every cell simultaneously.

In binary classification ($m=2$, two degrees of freedom), this reduces to dominance in the (recall, specificity) space, which is the single-operating-point analogue of \emph{ROC dominance}: A dominates B if and only if $A$'s ROC curve lies pointwise above $B$'s (Proposition~\ref{prop:roc}). The result is clean because two degrees of freedom span a two-dimensional space in which a total order on the Pareto frontier is easy to characterise.

For $m > 2$, the dominance relation becomes $m(m-1)$-dimensional and Pareto-optimal classifiers form a higher-dimensional surface. In this case there typically exists no single classifier that dominates all others across all measures simultaneously, which is the formal justification for reporting multiple complementary measures rather than a single summary statistic: different measures emphasise different cells of the confusion matrix, and the choice among them should reflect the cost structure of the application, which the framework encodes explicitly via the cost matrix $\mathbold{C}$ of Section~\ref{sec:cost}.

\section{Empirical Illustrations}
\label{sec:experiments}

The experiments fall into two groups. Sections~\ref{sec:exp_skew}--\ref{sec:tnorm_emp} are synthetic illustrations that instantiate the analytic results of Section~\ref{sec:theory} in controlled settings where the ground truth is stipulated; they are reproducible from the indicator-matrix computations alone and require no external data. Sections~\ref{sec:real_exp}--\ref{sec:exp_floors} use real datasets across multilabel, binary decision-theoretic, and multiclass domains to demonstrate the practical consequences of the theoretical findings. In several cases we report the theoretical prediction and the measured value side by side, so that agreement (or its absence) is visible rather than asserted. Code and the full numerical output are available in the supplementary material.

\subsection{Micro vs.\ Macro under Class Skew}
\label{sec:exp_skew}

We construct a 4-class classification problem with fixed conditional rates (confusion-matrix rows $R_j$), i.e., the probability that a classifier assigns class $c_k$ to an example of true class $c_j$ is held constant across all trials. We then vary the prevalence $\pi_1$ of the majority class from $0.25$ (uniform) to $0.70$ (strongly skewed), distributing the remaining probability equally among the three minority classes. All quantities are computed analytically from the rates, eliminating sampling noise.

Figure~\ref{fig:skew} shows micro and macro averages for four measures. The results separate exactly as Section~\ref{sec:skew_settings} predicts, and the distinction between the two flat-looking curves is instructive. Macro-recall is exactly flat, as required by Proposition~\ref{prop:ovr_skew}(1), since $\mathrm{rec}_j = R_{jj}$ carries no dependence on the priors. Macro-balanced-accuracy is nearly, but not exactly, flat: it drifts by a small amount across the range of $\pi_1$, because in a multiclass one-vs-rest decomposition the specificity term is a prior-weighted average of the other classes' correct-rejection rates (see Eq.~(\ref{eq:ovr_spec})). This residual drift is not numerical noise but the quantitative signature of Proposition~\ref{prop:ovr_skew}(2); it would vanish identically if the classifier distributed its errors uniformly across the competing classes. The micro counterparts of both measures drift substantially, as Corollary~\ref{cor:micro_skew}(2) predicts, the per-class rates here being unequal. $F_1$ and MCC are skew-sensitive under both aggregation schemes, though the magnitude of drift differs substantially between micro and macro.

\begin{figure}[!htb]
  \centering
  \includegraphics[width=.8\linewidth]{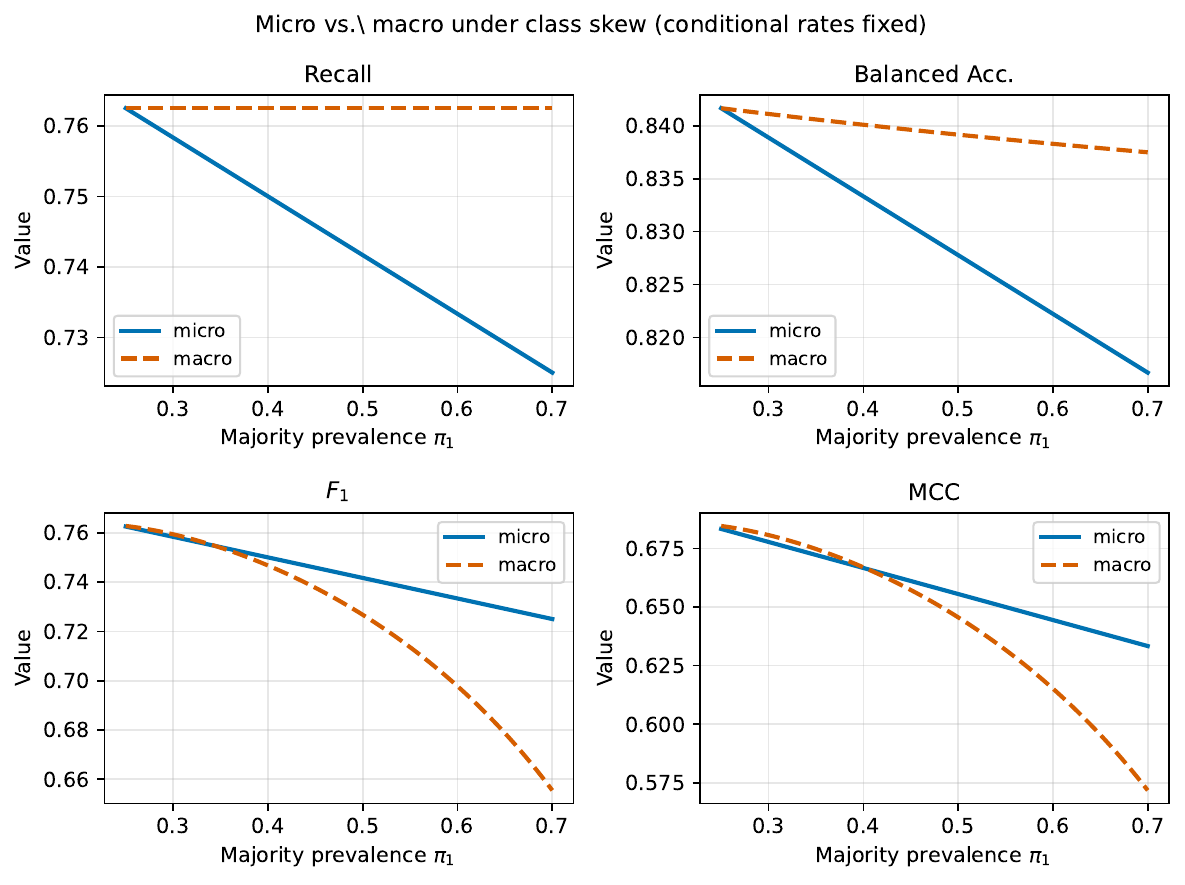}
  \caption{Micro vs.\ macro averages as a function of majority-class prevalence $\pi_1$. Conditional rates are held fixed throughout. Macro-recall is exactly flat (Proposition~\ref{prop:ovr_skew}(1)); macro-balanced accuracy is only approximately flat, retaining a small drift through the one-vs-rest specificity term (Proposition~\ref{prop:ovr_skew}(2)). Both drift markedly under micro-averaging (Corollary~\ref{cor:micro_skew}). $F_1$ and MCC are skew-sensitive under both schemes.}
  \label{fig:skew}
\end{figure}

\subsection{Micro-Macro Gap Equals the Covariance}

To illustrate Corollary~\ref{cor:gap} empirically, we generate 1{,}200 random 5-class classifiers by drawing class priors $\boldsymbol{\pi}$ from a symmetric Dirichlet distribution and confusion-matrix rates from a Dirichlet with concentration 3 (encouraging moderately good classifiers). For each trial we compute, analytically, both $M^{\mathrm{micro}} - M^{\mathrm{macro}}$ and $\mathrm{Cov}(\mathbf{w}, \mathbf{M})$ using the measure-specific denominator weights: $w_j \propto tpos_j$ for recall and $w_j \propto 2tp_j + fp_j + fn_j$ for $F_1$.

Figure~\ref{fig:cov} shows that all points lie on the $y = x$ diagonal to numerical precision, numerically reflecting the identity for both measures. The range of the gap is wider for recall ($\pm 0.15$) than for $F_1$ ($\pm 0.05$ for typical classifiers), reflecting the fact that $F_1$'s denominator weights partially absorb skew variation. Crucially, points appear on both sides of zero: when the classifier performs best on the majority class (positive covariance), micro exceeds macro; when it performs best on minority classes (negative covariance), micro falls below macro.

\begin{figure}[!htb]
  \centering
  \includegraphics[width=0.8\linewidth]{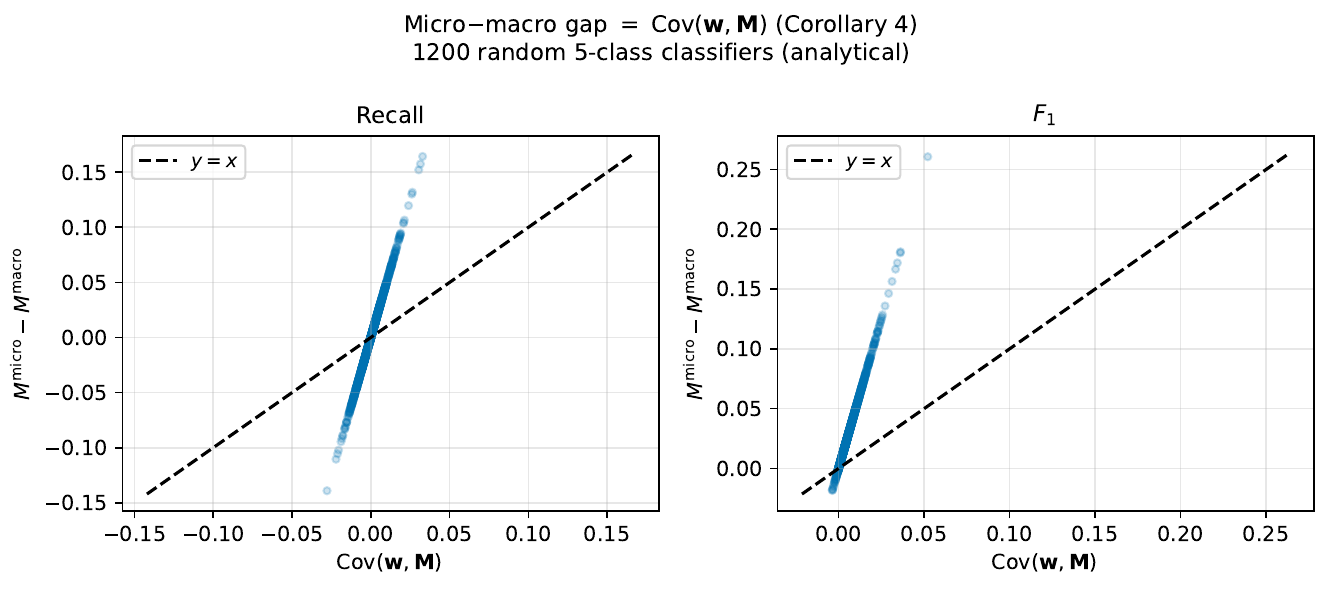}
  \caption{Micro$-$macro gap against $\mathrm{Cov}(\mathbf{w},\mathbf{M})$ for 1{,}200 random 5-class classifiers. All points lie on $y=x$ (dashed), in exact agreement with Corollary~\ref{cor:gap}. Weights are measure-specific: $w_j \propto tpos_j$ for recall and $w_j \propto 2tp_j+fp_j+fn_j$ for $F_1$.}
  \label{fig:cov}
\end{figure}

\subsection{T-norm Ordering under Soft Ground Truth}
\label{sec:tnorm_emp}

We construct a 3-class problem where both the ground-truth matrix $\mathbold{Y}$ and the predicted matrix $\mathbold{\hat{Y}}$ are real-valued. The true labels are soft, generated from a Dirichlet distribution centred at the true class with concentration parameter $\alpha$ controlling label certainty: small $\alpha$ produces near-uniform rows (high annotator uncertainty) and large $\alpha$ produces near-one-hot rows (confident annotation). Predicted probabilities are generated by a fixed noisy classifier whose outputs are also smoothed via a Dirichlet with a peak at the predicted class.

Figure~\ref{fig:tnorm} reports recall, precision, and mean $\mathbold{TP}$ entry across the three $t$-norms. The rightmost panel exhibits the pointwise ordering $T_L \leq T_P \leq T_M$ for the raw $\mathbold{TP}$ values: Łukasiewicz is the most conservative, Gödel the most lenient, and product lies between. As explained in Section~\ref{sec:soft}, the same ordering holds simultaneously for $\mathbold{FP}$, $\mathbold{FN}$ and $\mathbold{TN}$, because all four matrices are produced by the same $t$-norm.

The derived measures do not inherit this ordering, and the direction of the discrepancy depends on the regime. Three behaviours occur. When both matrices concentrate on high memberships, the Łukasiewicz $t$-norm gives the highest precision and recall (the reverse of the confusion-matrix ordering) because thresholding at $a + b > 1$ removes proportionally more mass from the disagreement cells than from $\mathbold{TP}$. In the intermediate regime the ordering is measure-dependent: precision follows $T_L \le T_P \le T_M$ while recall follows the opposite order, so no single $t$-norm is uniformly stricter even for one pair of measures. In the diffuse regime both measures again increase from $T_M$ to $T_L$, as in the concentrated regime. The mean $\mathbold{TP}$ panel is of course unaffected, obeying $T_L \leq T_P \leq T_M$ in every regime. This illustrates the point made in Section~\ref{sec:soft}: the $t$-norm ordering is a property of the confusion matrices themselves and transfers to no derived ratio, so a $t$-norm cannot be described as globally lenient or strict at the level of measures.

\begin{figure*}[!htb]
  \centering
  \includegraphics[width=.9\linewidth]{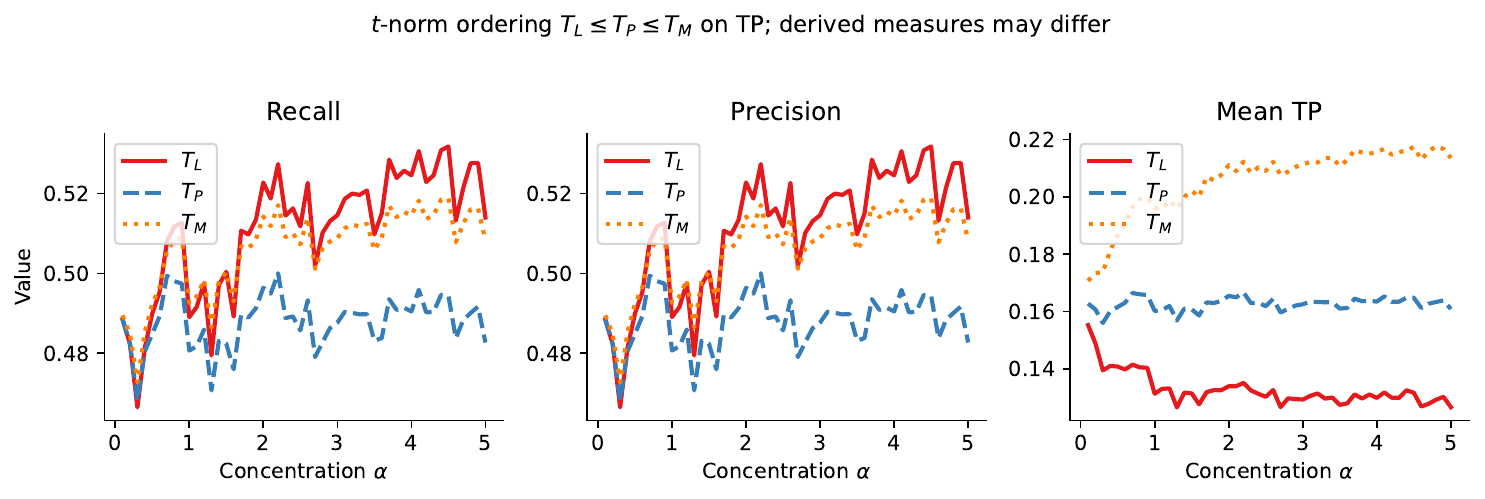}
  \caption{Effect of $t$-norm choice on recall, precision, and mean $\mathbold{TP}$ under soft ground truth ($\alpha$ controls annotation certainty). The $\mathbold{TP}$ ordering $T_L \leq T_P \leq T_M$ is exact (right panel), while derived measures need not preserve it (left and centre panels).}
  \label{fig:tnorm}
\end{figure*}

\subsection{Real Multilabel Data: Aggregation Scheme Determines Classifier Selection}
\label{sec:real_exp}

To evaluate the practical consequences of the aggregation choice on model selection, we compare classifier rankings across different aggregation schemes on real multilabel benchmarks.

We evaluate two standard multilabel benchmarks from the MULAN repository~\cite{tsoumakas2011mulan,elisseeff2001kernel}: \emph{yeast} ($2{,}417$ examples, $103$ features, $14$ labels, prevalence range $0.014$--$0.751$, cardinality $4.24$; strongly imbalanced) and \emph{emotions} ($593$ examples, $72$ features, $6$ labels, prevalence range $0.250$--$0.445$, cardinality $1.87$; nearly balanced). Five Binary Relevance (BR) classifiers are trained with 10-fold cross-validation: logistic regression (BR-LR), support vector machines (BR-SVM), random forests (BR-RF), $k$-nearest neighbours (BR-kNN), and decision trees (BR-DT). All measures are computed on the same predicted matrices via the indicator-matrix framework, ensuring that ranking differences stem solely from the aggregation operator. In yeast, all label- and example-level denominators for recall and $F_\beta$ remain strictly positive, so the convention in Eq.~(\ref{eq:undef_convention}) does not affect $F_1$ rankings (precision undefinedness is analyzed in Section~\ref{sec:exp_conventions}).

Figure~\ref{fig:rank_yeast} shows the rankings on yeast across ten measure--aggregation combinations. While micro and exemplar measures agree across classifiers, the macro operator produces a distinct ordering: exemplar-$F_1$, exemplar-Jaccard, exemplar-BA, and micro-$F_1$ induce the identical ranking (BR-SVM, BR-RF, BR-LR, BR-kNN, BR-DT), whereas macro-$F_1$ and macro-recall rank BR-DT first (which ranks last under the other eight measures).

Two pairs attain complete rank reversal ($\tau = -1.000$): macro-$F_1$ against micro-precision and macro-$F_1$ against macro-precision (Figure~\ref{fig:tau_yeast}). Furthermore, macro-$F_1$ yields $\tau \leq -0.600$ against every micro and exemplar measure, showing that rank inversion stems systematically from the aggregation operator rather than an isolated metric pairing.

This divergence reflects the weighting mechanism analyzed in Section~\ref{sec:skew_settings}: on yeast, macro-$F_1$ weights all labels equally, favoring classifiers that prioritize rare classes even at the expense of per-example fidelity ($\mathrm{macro}\text{-}F_1 = 0.398$ for BR-DT, highest among models). Conversely, exemplar-$F_1$ weights examples uniformly, evaluating instance-level label recovery ($\mathrm{exemplar}\text{-}F_1 = 0.526$ for BR-DT, lowest among models). The choice between macro and exemplar measures thus embodies the target evaluation unit (label parity versus instance-level accuracy), although optimal prediction for exemplar-$F_1$ requires joint subset inference rather than per-label thresholding~\cite{dembczynski2011exact}.

In contrast, on the nearly balanced emotions benchmark, the same ten measures yield strong consensus across all classifiers: the minimum pairwise rank correlation is $\tau = +0.200$, with $\tau(\text{macro-}F_1, \text{exemplar-}F_1) = +0.800$ (Figures~\ref{fig:rank_emotions} and~\ref{fig:tau_emotions}). As predicted by Corollary~\ref{cor:micro_skew} and Section~\ref{sec:skew_settings}, the aggregation scheme becomes decisive primarily under label skew.

Benchmark reporting must therefore explicitly declare the aggregation operator rather than quoting unqualified metrics (such as a generic ``$F_1$'' or ``precision''). Because the rank correlation between aggregation operators can drop to $\tau = -1.000$ under severe imbalance, omitting the aggregation level introduces substantial ambiguity and can invalidate comparisons across studies. More fundamentally, the choice among $\Upsigma_{\mathrm{1}}$, $\Upsigma_{\mathrm{m}}$, and $\Upsigma_{\mathrm{n}}$ is not a cosmetic implementation detail but a decision about the primary unit of evaluation: micro-averaging optimizes pooled decision accuracy, macro-averaging enforces parity across label classes regardless of rarity, and exemplar-averaging reflects the instance-level completeness of predicted label sets. Consequently, the aggregation operator should be specified \emph{a priori} based on the deployment objective rather than selected \emph{post hoc} to favor a particular classifier.

\begin{figure*}[!htb]
  \centering
  \begin{subfigure}[b]{0.62\textwidth}
  \includegraphics[width=\linewidth]{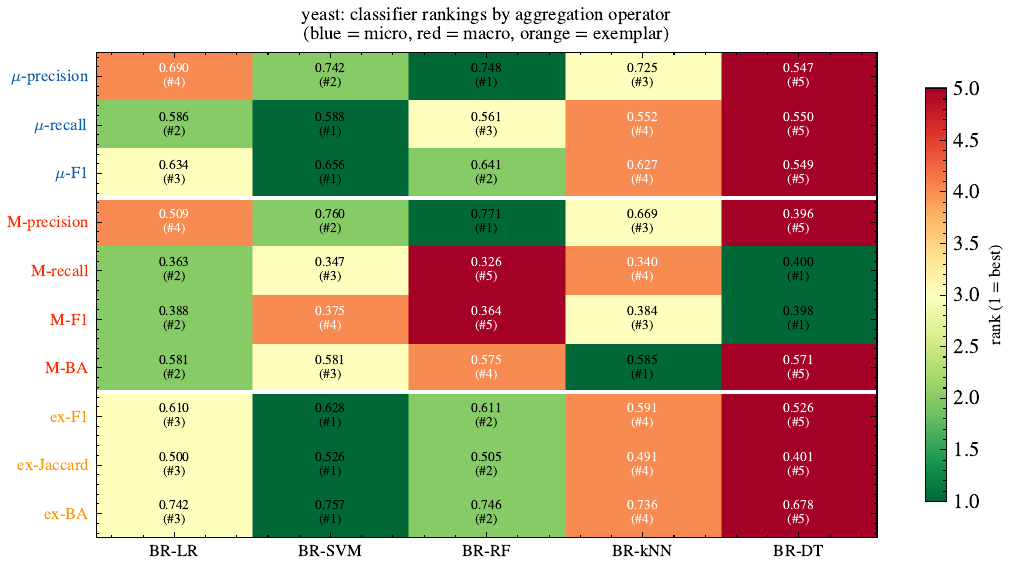}
  \caption{Rankings on yeast (imbalanced). Green = best, red = worst. Micro and exemplar measures agree; macro-$F_1$ and macro-recall invert the ordering, ranking BR-DT first where every other measure ranks it last.}
  \label{fig:rank_yeast}
  \end{subfigure}
  \hfill
  \begin{subfigure}[b]{0.36\textwidth}
  \includegraphics[width=\linewidth]{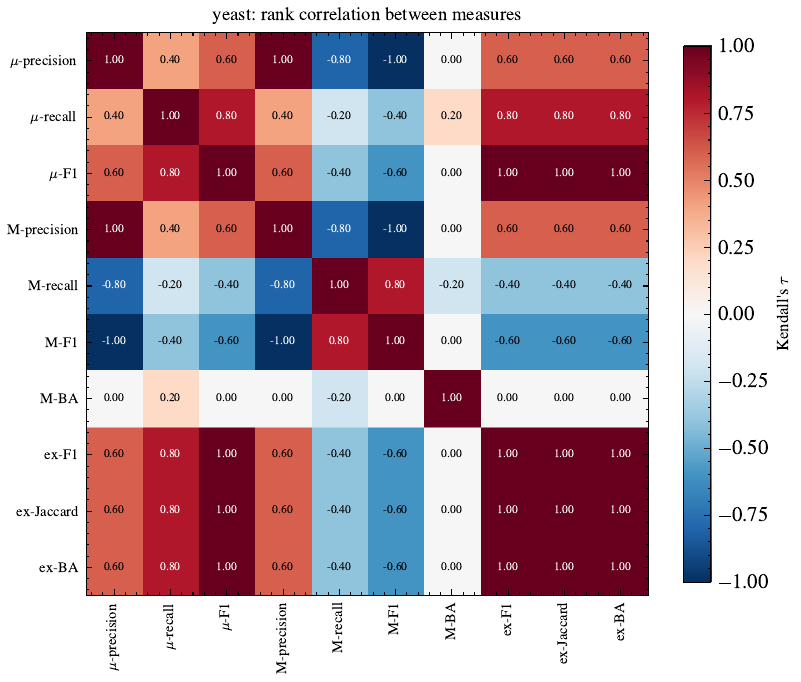}
  \caption{Kendall's $\tau$ between all pairs on yeast. Blue = disagreement. Macro-$F_1$ reaches $\tau = -1.000$ against micro- and macro-precision and $\tau \leq -0.600$ against every micro and exemplar measure.}
  \label{fig:tau_yeast}
  \end{subfigure}

  \medskip
  \begin{subfigure}[b]{0.62\textwidth}
  \includegraphics[width=\linewidth]{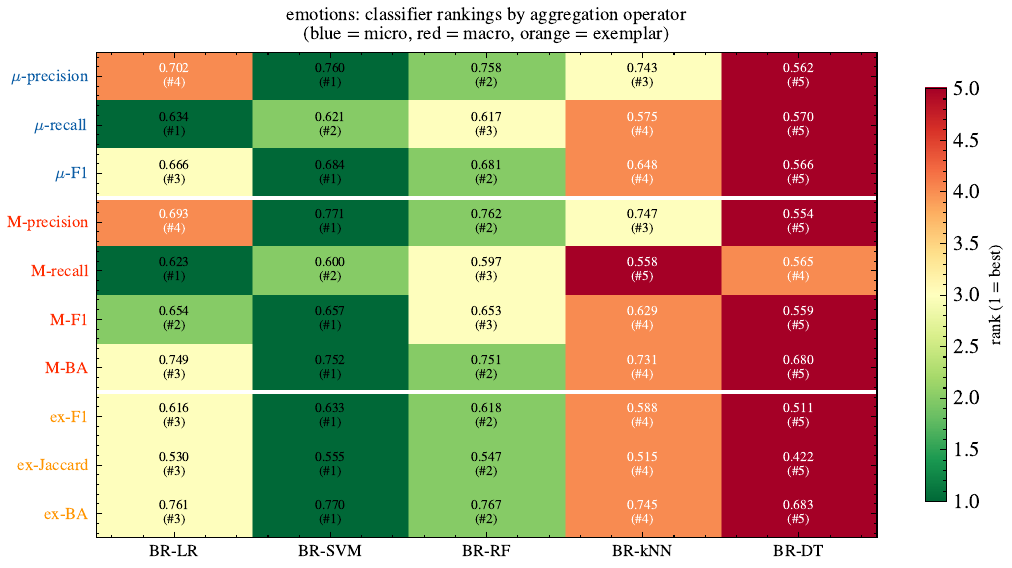}
  \caption{Rankings on emotions (nearly balanced): the same ten measures now agree almost everywhere.}
  \label{fig:rank_emotions}
  \end{subfigure}
  \hfill
  \begin{subfigure}[b]{0.36\textwidth}
  \includegraphics[width=\linewidth]{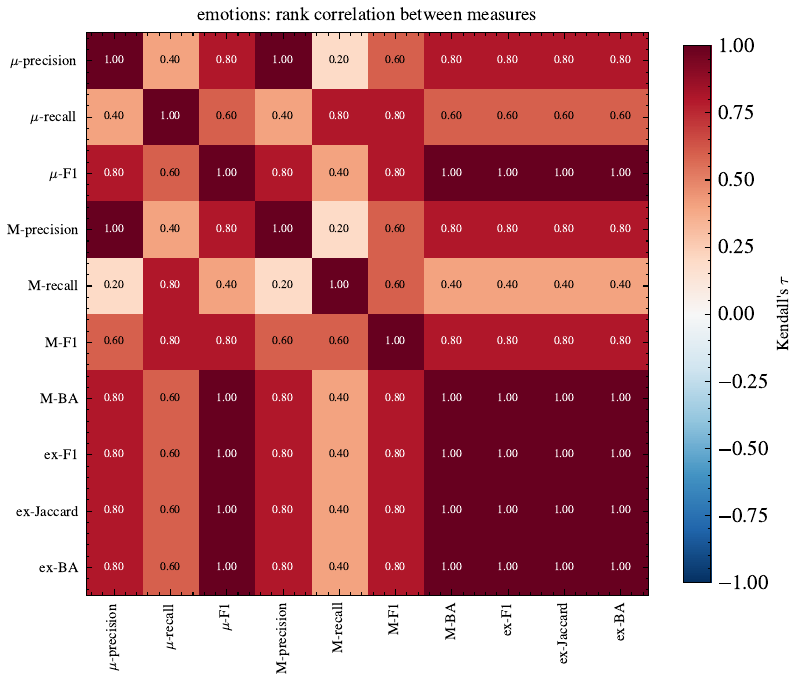}
  \caption{Kendall's $\tau$ on emotions: minimum over all pairs is $+0.200$, so no aggregation operator dissents.}
  \label{fig:tau_emotions}
  \end{subfigure}
  \caption{Aggregation disagreement is driven by label imbalance. The upper row is the imbalanced dataset, the lower row the balanced one; the measures, classifiers and protocol are identical.}
  \label{fig:multilabel}
\end{figure*}

\subsection{Sensitivity to the Undefined-Value Convention}
\label{sec:exp_conventions}

Section~\ref{sec:degenerate} argues that the treatment of undefined per-class and per-example values is a reporting choice with quantitative consequences. We measure it on yeast and emotions with the five classifiers of Section~\ref{sec:real_exp}, comparing the exclusion convention adopted in Eq.~(\ref{eq:undef_convention}) with the substitution of $0$ used by default in scikit-learn.

The pattern is exactly as the analysis predicts. Recall and $F_\beta$ have no undefined entries on either dataset, so their macro and exemplar values are convention-independent. Precision does: on yeast six per-class and $82$ per-example precision denominators vanish across the five classifiers, and on emotions $257$ per-example denominators vanish, in each case because a classifier predicts no label at all for some example or never predicts some label. The resulting discrepancies are not negligible: macro-precision shifts by up to $0.110$ on yeast and exemplar-precision by up to $0.087$ on emotions, on measures whose values are themselves below $0.8$.

While relative classifier rankings on these particular benchmarks remain stable under both conventions, a numerical shift of up to $0.110$ in macro-precision exceeds the performance margins that typically separate competing algorithms in published benchmarks (such as the gaps in Figure~\ref{fig:rank_yeast}). Without explicit reporting of the convention used, such discrepancies remain invisible to readers and can easily invert model selection when classifiers achieve close performance. This reinforces the recommendation of Section~\ref{sec:degenerate}: benchmark reports must state the undefined-value convention explicitly and prefer exclusion, which avoids imputing arbitrary performance values for classes or instances lacking empirical evidence.

\subsection{Threshold Optimality on Real Data}
\label{sec:exp_threshold}

Proposition~\ref{prop:consistency} predicts that for any linear-numerator measure with prediction-independent denominator, the Bayes-optimal decision threshold is $t^\star = (\beta+\delta)/(\alpha+\beta+\gamma+\delta)$. We evaluate this prediction on two medical screening benchmarks with contrasting sample sizes and imbalance profiles:
\begin{enumerate}
\item \textbf{Breast Cancer Wisconsin} ($n_{\mathrm{total}} = 569$, $n_{\mathrm{test}} = 228$, minority prevalence $0.373$)~\cite{street1993nuclear}.
\item \textbf{Mammography} ($n_{\mathrm{total}} = 11{,}183$, $n_{\mathrm{test}} = 4{,}474$, minority prevalence $0.023$)~\cite{woods1993comparative}.
\end{enumerate}
For each benchmark, a logistic regression model produces the conditional probability estimates $\eta(x) = \hat{P}(Y=1 \mid X=x)$ on the test set, and we evaluate operating thresholds across a fine grid of 2{,}001 points $t \in [0, 1]$ with step $0.0005$.

Table~\ref{tab:exp_threshold} presents the results. The top block tests the sanity checks ($\alpha+\gamma = \beta+\delta$): accuracy and error rate are consistent with standard 0--1 loss ($t^\star = 0.500$), while recall ($t^\star = 0.000$) and specificity ($t^\star = 1.000$) fail consistency degenerately. The bottom block tests asymmetric cost and utility objectives: weighted accuracy ($w=0.75 \Rightarrow t^\star = 0.250$), clinical net benefit ($p_t = 0.333 \Rightarrow t^\star = 0.333$), cost-sensitive evaluation with 5:1 and 10:1 false-negative-to-false-positive penalty ratios ($t^\star = 0.167$ and $t^\star = 0.091$, respectively), and a general linear cost-utility objective ($t^\star = 0.167$).

\begin{table}[!htb]
\centering
\small
\setlength{\tabcolsep}{3.2pt}
\begin{tabular}{l|c|c|c|cc|cc}
\hline
\textbf{Measure / Objective} & $(\alpha,\beta,\gamma,\delta)$ & $t^{\star}$ & \textbf{ERM} & \textbf{Breast} & \textbf{Breast+Platt} & \textbf{Mamm.} & \textbf{Mamm.+Platt} \\
 & & & \textbf{cons.} & ($n{=}228$) & ($n{=}228$) & ($n{=}4{,}474$) & ($n{=}4{,}474$) \\
\hline
\multicolumn{8}{l}{\emph{Sanity checks (symmetric / baseline metrics)}}\\
Accuracy & $(1,0,0,1)$ & $0.500$ & yes & $[0.264,\, 0.423]$ & $[0.526,\, 0.531]$ & $[0.519,\, 0.534]$ & $[0.488,\, 0.504]$ \\
Error rate & $(0,1,1,0)$ & $0.500$ & yes & $[0.264,\, 0.423]$ & $[0.526,\, 0.531]$ & $[0.519,\, 0.534]$ & $[0.488,\, 0.504]$ \\
Recall & $(1,0,0,0)$ & $0.000$ & no & $[0.000,\, 0.267]$ & $[0.000,\, 0.428]$ & $[0.000,\, 0.000]$ & $[0.000,\, 0.000]$ \\
Specificity & $(0,0,0,1)$ & $1.000$ & no & $[0.892,\, 1.000]$ & $[0.810,\, 1.000]$ & $[0.763,\, 1.000]$ & $[0.707,\, 1.000]$ \\
\hline
\multicolumn{8}{l}{\emph{Asymmetric cost and utility objectives}}\\
Weighted Acc. ($w=0.75$) & $(3,0,0,1)$ & $0.250$ & no & $[0.264,\, 0.267]$ & $[0.398,\, 0.428]$ & $[0.147,\, 0.172]$ & $[0.164,\, 0.164]$ \\
Clinical Net Benefit ($p_t = 0.333$) & $(2,1,0,0)$ & $0.333$ & no & $[0.264,\, 0.267]$ & $[0.398,\, 0.428]$ & $[0.241,\, 0.252]$ & $[0.228,\, 0.238]$ \\
Cost-sensitive ($c_{\mathrm{fn}}{:}c_{\mathrm{fp}} = 5{:}1$) & $(0,1,5,0)$ & $0.167$ & no & $[0.264,\, 0.267]$ & $[0.398,\, 0.428]$ & $[0.121,\, 0.121]$ & $[0.102,\, 0.104]$ \\
Cost-sensitive ($c_{\mathrm{fn}}{:}c_{\mathrm{fp}} = 10{:}1$) & $(0,1,10,0)$ & $0.091$ & no & $[0.264,\, 0.267]$ & $[0.398,\, 0.428]$ & $[0.076,\, 0.076]$ & $[0.076,\, 0.076]$ \\
General Linear Cost-Utility & $(3,1,2,0)$ & $0.167$ & no & $[0.264,\, 0.267]$ & $[0.398,\, 0.428]$ & $[0.121,\, 0.121]$ & $[0.102,\, 0.104]$ \\
\hline
\end{tabular}
\caption{Threshold optimality on medical screening benchmarks across sample sizes and calibration schemes. Under conditional expectations $\mathbb{E}_\eta$ (Proposition~\ref{prop:consistency}), the continuous argmax coincides identically with $t^{\star}$ in every case. The four rightmost columns report empirical finite-sample plateaus against realised test labels on breast cancer ($n_{\mathrm{test}}=228$) and mammography ($n_{\mathrm{test}}=4{,}474$).}
\label{tab:exp_threshold}
\end{table}

The final four columns of Table~\ref{tab:exp_threshold} illustrate why analytical threshold derivation via Proposition~\ref{prop:consistency} is preferable to empirical grid-search on validation cohorts. Against realised labels $y_i \in \{0,1\}$, the empirical objective is piecewise constant, and its sample maximiser is subject to both finite-sample binomial fluctuations and local probability miscalibration (particularly in the rare-event tails of mammography, prevalence $0.023$). Consequently, empirical plateaus may not cover $t^{\star}$ exactly on uncalibrated realised data: on small cohorts like breast cancer ($n_{\mathrm{test}} = 228$), the plateau is wide ($[0.264, 0.423]$ for accuracy), while on larger cohorts like mammography ($n_{\mathrm{test}} = 4{,}474$), the plateau contracts sharply (width $\leq 0.015$). Applying post-hoc probability calibration such as Platt scaling~\cite{platt1999probabilistic} aligns the empirical operating plateau with the theoretical optimum (for instance, the Platt-calibrated accuracy plateau $[0.488, 0.504]$ covers $t^{\star} = 0.500$ exactly). In contrast, when evaluated in expectation under conditional probabilities $\eta$ (the population quantity characterised by the proposition), the theoretical threshold $t^{\star}$ is recovered with zero error across all metrics. Proposition~\ref{prop:consistency} thus provides the principled target rule, demonstrating that probability calibration and analytical thresholding form complementary components of robust decision-theoretic model deployment.

\subsection{Real Multiclass Data: Collapse, Redundancy, and Residual Skew}
\label{sec:exp_multiclass}

Two multiclass theoretical predictions are checked against empirical behavior on \emph{iris}, \emph{wine}, and \emph{digits} under 10-fold cross-validation with logistic regression.

First, Theorem~\ref{thm:micro_collapse} predicts that micro-precision, micro-recall, and micro-$F_1$ collapse identically to standard accuracy in multiclass problems. Across all three datasets, the measured spread between these four quantities is exactly zero ($0.960$ on iris, $0.983$ on wine, $0.933$ on digits), matching the exact algebraic identity. Similarly, Proposition~\ref{prop:hamming_accuracy} holds to machine precision ($\mathrm{HammingLoss} + \mathrm{LabelAccuracy} = 1.0000$ identically; on digits, $\mathrm{HammingLoss} = 0.0134$ and $\mathrm{LabelAccuracy} = 0.9866$). Micro-Jaccard evaluates to $0.923$, $0.967$, and $0.874$ respectively, strictly following $C/(2n-C)$ and generating a rank correlation $\tau = 1.000$ with accuracy across models. This empirically illustrates Proposition~\ref{prop:dof}: reporting these metrics simultaneously adds zero information over reporting accuracy alone.

Second, Proposition~\ref{prop:ovr_skew} predicts that per-class recall is prior-free while one-vs-rest specificity depends on the class prior distribution. To test this on empirical classifier behavior without confounding prior shift with finite-sample resampling variance, we first estimate the conditional error rate matrix $R_{kj} = P(\hat{Y} = j \mid Y = k)$ from 10-fold cross-validated predictions on \emph{digits} ($m=10$). We then hold $R$ fixed while sweeping the class prior vector $\boldsymbol{\pi} = (\pi_0, \dots, \pi_9)$ analytically: the prior of class $0$ is varied from balanced ($\pi_0 = 0.10$) to heavily skewed ($\pi_0 = 0.85$), distributing the remaining probability mass uniformly across the other nine classes ($\pi_k = (1-\pi_0)/9$ for $k \neq 0$). For each prior $\boldsymbol{\pi}$, the one-vs-rest confusion cells are computed directly as $\mathrm{TP}_j = n\pi_j R_{jj}$, $\mathrm{FP}_j = n\sum_{k \neq j}\pi_k R_{kj}$, $\mathrm{FN}_j = n\pi_j(1-R_{jj})$, and $\mathrm{TN}_j = n\sum_{k \neq j}\pi_k(1-R_{kj})$. Table~\ref{tab:exp_ovr} reports the resulting drift (the max--min range across the prior sweep) for two classifiers of contrasting quality: a strong logistic regression ($\text{accuracy} = 0.933$) and a shallow decision tree ($\text{accuracy} = 0.459$).

\begin{table}[!htb]
\centering
\small
\begin{tabular}{l|cc|ccc}
\hline
\textbf{Classifier} & \textbf{acc.} & \textbf{u.e.\ dev.} & \textbf{macro-recall} & \textbf{macro-spec.} & \textbf{macro-BA} \\
\hline
logistic regression & $0.933$ & $0.0086$ & $0.0000$ & $0.0053$ & $0.0026$ \\
decision tree, depth 3 & $0.459$ & $0.0505$ & $<10^{-15}$ & $0.0439$ & $0.0219$ \\
\hline
\end{tabular}
\caption{Drift of macro-averaged measures on \emph{digits} as the prior on class $0$ moves from $0.1$ to $0.85$ with the conditional rates held fixed. ``u.e.\ dev.'' is the mean over $j$ of the standard deviation of $R_{kj}$ over $k \neq j$, which is zero exactly when the uniform-error condition of Proposition~\ref{prop:ovr_skew}(2) holds.}
\label{tab:exp_ovr}
\end{table}

Macro-recall remains strictly invariant to machine precision across the entire prior sweep for both classifiers (drift $< 10^{-15}$), confirming Proposition~\ref{prop:ovr_skew}(1). In contrast, macro-specificity and macro-balanced-accuracy undergo non-trivial drift, whose magnitude directly tracks the \emph{uniform-error deviation} (the dispersion of off-diagonal conditional rates $R_{kj}$ over $k \neq j$, which Proposition~\ref{prop:ovr_skew}(2) identifies as the exact obstruction to prior-invariance). For the strong logistic regression (uniform-error deviation of $0.0086$), macro-specificity drifts by only $0.0053$ and macro-BA by $0.0026$. For the shallow decision tree, whose structured confusions increase the uniform-error deviation nearly sixfold to $0.0505$, the drift scales up more than eightfold: macro-specificity drifts by $0.0439$ and macro-BA by $0.0219$. Because macro-recall is strictly prior-free, the drift in macro-balanced-accuracy is always exactly half that of macro-specificity ($\Delta_{\mathrm{BA}} = \frac{1}{2}\Delta_{\mathrm{spec}}$). This demonstrates that residual multiclass skew sensitivity in balanced accuracy is not a fixed defect of the metric, but scales with the heterogeneity of the confusion structure, making it smallest on well-performing classifiers and most pronounced on weak models with asymmetric confusion patterns.

\subsection{Attainable Floors under One-Hot Encoding}
\label{sec:exp_floors}

Proposition~\ref{prop:lower_boundary} and Table~\ref{tab:floors} establish that under standard one-hot multiclass encoding, any evaluation measure whose value at $\mathrm{TP} = 0$ retains functional dependence on $\mathrm{TN}$ is bounded away from its nominal binary minimum. To evaluate this analytical scaling behavior as a function of the number of classes, we instantiate the worst-case predictor on \emph{digits} across subproblems of $m \in \{3, \ldots, 10\}$ classes by generating cyclically shifted predictions ($\hat{y}_i = (y_i + 1) \bmod m$), ensuring complete misclassification ($\Upsigma_{\mathrm{1}}(\mathbold{TP}) = 0$) under valid one-hot categorical constraints.

The measured values agree with the closed forms of Table~\ref{tab:floors} to machine precision across all measures and all $m$. Figure~\ref{fig:floors} plots them. The practical consequence is stark: on ten classes, a classifier that is wrong on every single example attains a label accuracy of $0.800$, a specificity of $0.889$, a balanced accuracy of $0.444$, and MCC and Cohen's $\kappa$ of $-0.111$ each. The last two are the most misleading, since both are constructed to report $-1$ under complete disagreement and $0$ at chance: on a ten-class problem their floor of $-1/(m-1)$ places total failure within $0.111$ of chance on their own scale. Standard accuracy $\Upsigma_{\mathrm{1}}(\mathbold{TP})/n$, recall, precision, $F_1$, Jaccard and the G-mean all correctly reach $0$, reflecting that the governing criterion is dependence on $tn$ at $tp=0$ and not merely the presence of $tn$ in the formula.

\begin{figure}[!htb]
  \centering
  \includegraphics[width=.72\linewidth]{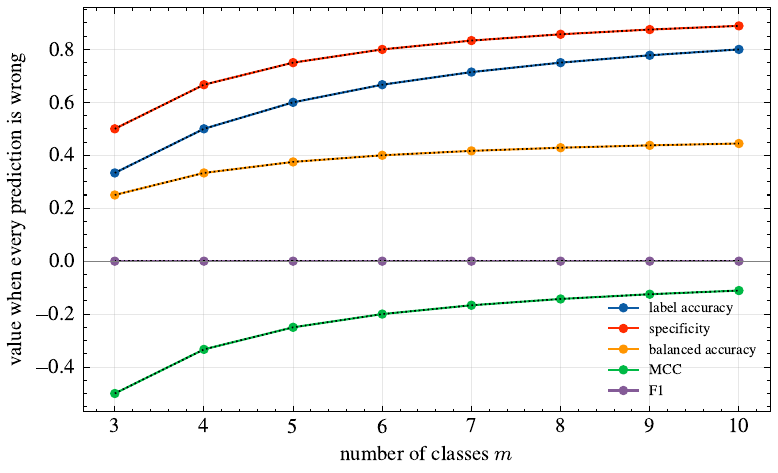}
  \caption{Value attained on \emph{digits} when every prediction is wrong, as a function of the number of classes $m$. Solid lines with markers are measured; dotted lines are the closed forms of Table~\ref{tab:floors}. Measures depending on $tn$ at $tp=0$ drift away from their nominal minimum as $m$ grows; $F_1$ stays at $0$.}
  \label{fig:floors}
\end{figure}

\section{Conclusion}
\label{sec:conclusion}

We have presented a unified algebraic framework for classification performance evaluation, grounded in binary indicator matrices and three fundamental aggregation operators ($\Upsigma_1, \Upsigma_{\mathrm{m}}, \Upsigma_{\mathrm{n}}$). The framework's primary practical contribution is its generative capacity: any binary performance metric expressible in terms of confusion cells $(tp, fp, fn, tn)$ automatically yields micro, macro/weighted, and exemplar multiclass and multilabel counterparts with no ad-hoc derivation required. Table~\ref{tab:extensions} instantiates this for eleven canonical metrics, establishing well-defined operational extensions where standard named measures were previously absent.

Beyond this generative property, the matrix formulation provides a common language for several learning settings that are typically treated in isolation (graded systematically in Table~\ref{tab:settings}). Continuous outputs map to categorical decisions via class-wise argmax (multiclass) or thresholding (multilabel). Soft and probabilistic ground truth is accommodated via triangular norms, with the product $t$-norm identified as the unique marginally consistent operator by Theorem~\ref{thm:partition}. Ordinal classification connects to the framework via cumulative binary encodings, with ordinal AUC emerging as column-wise aggregation of binary AUCs (Proposition~\ref{prop:ordinal_auc}). Cost-sensitive evaluation enters through a cost matrix acting as a bilinear form on the indicator matrices, with MAE and MSE recovered as exact algebraic special cases (Theorem~\ref{thm:ordinal_cost}).

The theoretical development yields several fundamental insights into evaluation behavior across paradigms. It formalizes why micro and macro averages diverge, identifying the gap as the covariance between class prevalence and performance for decomposable metrics, and as a combination of class skew and geometric curvature for non-decomposable ones. It reveals that binary skew-invariance transfers selectively across learning settings: while per-class recall remains universally prior-free, multiclass one-vs-rest specificity is inherently sensitive to class imbalance. It connects metric selection directly to decision theory, providing closed-form Bayes-optimal operating thresholds on conditional probabilities. Finally, it demonstrates that property preservation under aggregation is asymmetric: while monotonicity and upper boundary correctness are universally preserved, lower boundary correctness fails under one-hot multiclass encoding for metrics retaining functional dependence on true negatives.

The empirical investigations substantiate these theoretical predictions across real-world benchmarks. Experiments confirm the exact multiclass collapse of micro-averaged metrics and validate the analytical Bayes threshold rules, illustrating how post-hoc probability calibration aligns empirical operating plateaus with theoretical optima. Furthermore, evaluations on imbalanced multilabel domains demonstrate that the choice of aggregation operator and undefined-value conventions can induce substantial rank reversals and score shifts across competing models. These findings highlight that reproducible benchmarking demands the explicit, a priori declaration of aggregation levels and edge-case conventions aligned with operational deployment goals.

Several research directions remain open. First, extending the fold-wise aggregation operator (Section~\ref{sec:cv}) to characterize estimate variance and construct formal confidence intervals across cross-validation splits. Second, investigating statistical and information-theoretic redundancy among non-algebraically coupled metrics across model families. Third, exploring generalized $t$-norm formulations for task-specific label uncertainty and weaker partition constraints. Fourth, generalizing indicator-matrix representations to structured prediction problems (sequences, trees, and graphs). Finally, establishing formal bridges between the matrix aggregation framework and expected-loss decision frameworks~\cite{DBLP:journals/jmlr/Hernandez-OralloFF12} will further solidify the foundations of machine learning evaluation.

\section*{Funding}

This research was partially supported by the Conselho Nacional de
Desenvolvimento Cient\'{\i}fico e Tecnol\'ogico (CNPq) and the
Funda\c{c}\~ao de Amparo \`a Pesquisa do Estado de S\~ao Paulo (FAPESP), Brazil.

\section*{CRediT Author Statement}

\textbf{Ronaldo C. Prati}: Conceptualization, Methodology, Formal Analysis,
Software, Validation, Writing -- Original Draft, Writing -- Review \& Editing.

\section*{Declaration of Generative AI and AI-assisted Technologies}

During the preparation of this work the author used Claude (Anthropic)
in order to assist with literature search, drafting and revising manuscript
text, implementing experiment code, and checking \LaTeX\ syntax.
After using this tool, the author reviewed and edited all content as needed
and takes full responsibility for the content of the published article.

\clearpage
\appendix
\begin{landscape}
\setlength{\tabcolsep}{3.5pt}
\section*{Catalogue of Performance Measures Generalisable by the Framework}
\label{sec:catalogue}

This catalogue documents binary classification performance measures that can be extended to multiclass and multilabel settings via the indicator-matrix framework. For each measure $M(tp, fp, fn, tn)$, three extensions are obtained by substituting the aggregation operators $\SI$ (micro), $\Sm$ (per-class/macro), and $\Sn$ (per-example/exemplar). Extensions marked with \novel\ have no established named version in the literature. Measures marked \sinv\ in the ``Inv.'' column are skew-invariant (Theorem~\ref{thm:skew}).

\bigskip\noindent
\textbf{Notation.} Throughout, $tp, fp, fn, tn$ are scalar counts; $n = tp+fp+fn+tn$; $tpos = tp+fn$; $tneg = tn+fp$; $ppos = tp+fp$. Micro applies $M$ to global sums; macro applies $M$ per class and averages; exemplar applies $M$ per example and averages.

\section*{Group 1: Accuracy and Error Measures}

{\small
\begin{longtable}{p{3.0cm}|p{5.5cm}|p{3.8cm}|p{3.8cm}|p{3.8cm}|c}
\toprule
\textbf{Measure} & \textbf{Binary formula} &
\textbf{Micro ($\SI$)} & \textbf{Macro ($\Sm$)} &
\textbf{Exemplar ($\Sn$)} & \textbf{Inv.}\\
\midrule\endhead

Accuracy &
$\dfrac{tp+tn}{n}$ &
Global accuracy; = label accuracy in multilabel &
Per-class accuracy\novel &
Per-example accuracy\novel &
\\[8pt]

Error Rate &
$\dfrac{fp+fn}{n}$ &
\textbf{Hamming loss} $= \frac{1}{nm}\SI(\mathbf{Y}{\oplus}\hat{\mathbf{Y}})$ &
Per-class error rate\novel &
Per-example error rate\novel &
\\[8pt]

Balanced Error Rate &
$1 - \frac{1}{2}\!\left(\frac{tp}{tpos}+\frac{tn}{tneg}\right)$ &
Micro BER\novel &
Macro BER\novel &
Exemplar BER\novel &
\sinv\\[8pt]

Subset Accuracy &
$\mathbf{1}[tp{=}tpos,\,tn{=}tneg]$ &
Standard multiclass accuracy &
-- (per-class version reduces to accuracy) &
Proportion of fully correct examples &
\\[4pt]

\bottomrule
\end{longtable}
}

\section*{Group 2: Positive-Class Focused Measures}

{\small
\begin{longtable}{p{3.0cm}|p{5.5cm}|p{3.8cm}|p{3.8cm}|p{3.8cm}|c}
\toprule
\textbf{Measure} & \textbf{Binary formula} &
\textbf{Micro} & \textbf{Macro} &
\textbf{Exemplar} & \textbf{Inv.}\\
\midrule\endhead

Precision (PPV) &
$\dfrac{tp}{tp+fp}$ &
Micro-precision &
Macro-precision &
Exemplar-precision &
\\[8pt]

Recall (Sensitivity, TPR) &
$\dfrac{tp}{tp+fn}$ &
Micro-recall &
Macro-recall &
Exemplar-recall &
\sinv\\[8pt]

False Negative Rate (FNR, Miss Rate) &
$\dfrac{fn}{tp+fn} = 1-\text{Recall}$ &
Micro-FNR\novel &
Macro-FNR\novel &
Exemplar-FNR\novel &
\sinv\\[8pt]

False Discovery Rate (FDR) &
$\dfrac{fp}{tp+fp} = 1-\text{Precision}$ &
Micro-FDR\novel &
Macro-FDR\novel &
Exemplar-FDR\novel &
\\[8pt]

$F_\beta$ measure~\cite{rijsbergen1979,chinchor1992} &
$\dfrac{(1+\beta^2)tp}{(1+\beta^2)tp+\beta^2 fn+fp}$ &
Micro-$F_\beta$ &
Macro-$F_\beta$ &
Exemplar-$F_\beta$ &
\\[8pt]

Jaccard Index~\cite{jaccard1901} (IoU, CSI) &
$\dfrac{tp}{tp+fp+fn}$ &
Micro-Jaccard &
Macro-Jaccard (mean IoU) &
Exemplar-Jaccard &
\\[8pt]

Fowlkes-Mallows Index~\cite{fowlkes1983} &
$\sqrt{\dfrac{tp}{tp+fp}\cdot\dfrac{tp}{tp+fn}}$ &
Micro-FM\novel &
Macro-FM\novel &
Exemplar-FM\novel &
\\[8pt]

Lift &
$\dfrac{tp/(tp+fp)}{(tp+fn)/n}$ &
Micro-Lift\novel &
Macro-Lift\novel &
Exemplar-Lift\novel &
\\[8pt]

Average Precision (AP) &
$\sum_k P_k \Delta R_k$ (threshold-averaged) &
Micro-AP &
Macro-AP &
Exemplar-AP\novel &
\\[4pt]

\bottomrule
\end{longtable}
}

\section*{Group 3: Negative-Class Focused Measures}

{\small
\begin{longtable}{p{3.0cm}|p{5.5cm}|p{3.8cm}|p{3.8cm}|p{3.8cm}|c}
\toprule
\textbf{Measure} & \textbf{Binary formula} &
\textbf{Micro} & \textbf{Macro} &
\textbf{Exemplar} & \textbf{Inv.}\\
\midrule\endhead

Specificity (TNR) &
$\dfrac{tn}{tn+fp}$ &
Micro-specificity &
Macro-specificity &
Exemplar-specificity &
\sinv\\[8pt]

Negative Predictive Value (NPV) &
$\dfrac{tn}{tn+fn}$ &
Micro-NPV\novel &
Macro-NPV\novel &
Exemplar-NPV\novel &
\\[8pt]

False Positive Rate (FPR, Fall-out) &
$\dfrac{fp}{tn+fp} = 1-\text{Specificity}$ &
Micro-FPR\novel &
Macro-FPR\novel &
Exemplar-FPR\novel &
\sinv\\[8pt]

False Omission Rate (FOR) &
$\dfrac{fn}{tn+fn} = 1-\text{NPV}$ &
Micro-FOR\novel &
Macro-FOR\novel &
Exemplar-FOR\novel &
\\[4pt]

\bottomrule
\end{longtable}
}

\section*{Group 4: Measures Balancing Both Classes}

{\small
\begin{longtable}{p{3.0cm}|p{5.5cm}|p{3.8cm}|p{3.8cm}|p{3.8cm}|c}
\toprule
\textbf{Measure} & \textbf{Binary formula} &
\textbf{Micro} & \textbf{Macro} &
\textbf{Exemplar} & \textbf{Inv.}\\
\midrule\endhead

Balanced Accuracy~\cite{brodersen2010} (BA) &
$\dfrac{1}{2}\!\left(\dfrac{tp}{tpos}+\dfrac{tn}{tneg}\right)$ &
Micro-BA &
Macro-BA &
Exemplar-BA\novel &
\sinv\\[8pt]

G-mean~\cite{kubat1997} &
$\sqrt{\dfrac{tp}{tpos}\cdot\dfrac{tn}{tneg}}$ &
Micro-GM &
Macro-GM &
Exemplar-GM\novel &
\sinv\\[8pt]

Informedness~\cite{youden1950,powers2011evaluation} (Youden's J) &
$\dfrac{tp}{tpos}+\dfrac{tn}{tneg}-1$ &
Micro-Informedness\novel &
Macro-Informedness\novel &
Exemplar-Informedness\novel &
\sinv\\[8pt]

Markedness~\cite{powers2011evaluation} &
$\dfrac{tp}{tp+fp}+\dfrac{tn}{tn+fn}-1$ &
Micro-Markedness\novel &
Macro-Markedness\novel &
Exemplar-Markedness\novel &
\\[8pt]

$F_\beta^{neg}$ (F-measure on negative class) &
$\dfrac{(1+\beta^2)tn}{(1+\beta^2)tn+\beta^2 fp+fn}$ &
Micro-$F_\beta^{neg}$\novel &
Macro-$F_\beta^{neg}$\novel &
Exemplar-$F_\beta^{neg}$\novel &
\\[8pt]

Discriminant Power~\cite{biggerstaff2000} &
$\dfrac{\sqrt{3}}{\pi}\!\left[\log\!\dfrac{tp/(tpos)}{1-tp/tpos}+\log\!\dfrac{tn/tneg}{1-tn/tneg}\right]$ &
Micro-DP\novel &
Macro-DP\novel &
Exemplar-DP\novel &
\sinv\\[10pt]

Optimised Precision &
$\text{Accuracy} - \dfrac{|tp/tpos - tn/tneg|}{tp/tpos + tn/tneg}$ &
Micro-OP\novel &
Macro-OP\novel &
Exemplar-OP\novel &
\\[4pt]

\bottomrule
\end{longtable}
}

\section*{Group 5: Correlation and Agreement Measures}

{\small
\begin{longtable}{p{3.0cm}|p{5.5cm}|p{3.8cm}|p{3.8cm}|p{3.8cm}|c}
\toprule
\textbf{Measure} & \textbf{Binary formula} &
\textbf{Micro} & \textbf{Macro} &
\textbf{Exemplar} & \textbf{Inv.}\\
\midrule\endhead

Matthews CC~\cite{matthews1975comparison} (MCC) &
$\dfrac{tp{\cdot}tn - fp{\cdot}fn}{\sqrt{(tp+fp)(tp+fn)(tn+fp)(tn+fn)}}$ &
Micro-MCC &
Macro-MCC\novel &
Exemplar-MCC\novel &
\\[8pt]

Cohen's $\kappa$~\cite{cohen1960coefficient} &
$\dfrac{p_o - p_e}{1-p_e}$, $p_e=\dfrac{tpos{\cdot}ppos+tneg{\cdot}pneg}{n^2}$ &
Micro-$\kappa$ &
Macro-$\kappa$\novel &
Exemplar-$\kappa$\novel &
\\[8pt]

Phi Coefficient &
$= $ MCC (binary case) &
(same as Micro-MCC) &
(same as Macro-MCC)\novel &
(same as Exemplar-MCC)\novel &
\\[8pt]

Gwet's AC1~\cite{gwet2008} &
$\dfrac{p_o - p_e^{AC1}}{1-p_e^{AC1}}$, $p_e^{AC1}=\dfrac{1}{2}\!\left(\dfrac{tpos}{n}+\dfrac{ppos}{n}\right)\!\!\left(\dfrac{tneg}{n}+\dfrac{pneg}{n}\right)$ &
Micro-AC1\novel &
Macro-AC1\novel &
Exemplar-AC1\novel &
\\[8pt]

Tetrachoric Correlation~\cite{pearson1900} &
$\cos\!\left(\dfrac{\pi\sqrt{fp{\cdot}fn}}{\sqrt{fp{\cdot}fn}+\sqrt{tp{\cdot}tn}}\right)$ &
Micro-TC\novel &
Macro-TC\novel &
Exemplar-TC\novel &
\\[4pt]

\bottomrule
\end{longtable}
}

\section*{Group 6: Likelihood Ratios and Odds}

{\small
\begin{longtable}{p{3.0cm}|p{5.5cm}|p{3.8cm}|p{3.8cm}|p{3.8cm}|c}
\toprule
\textbf{Measure} & \textbf{Binary formula} &
\textbf{Micro} & \textbf{Macro} &
\textbf{Exemplar} & \textbf{Inv.}\\
\midrule\endhead

Positive Likelihood Ratio (LR+) &
$\dfrac{tp/(tp+fn)}{fp/(tn+fp)}$ &
Micro-LR+\novel &
Macro-LR+\novel &
Exemplar-LR+\novel &
\sinv\\[8pt]

Negative Likelihood Ratio (LR--) &
$\dfrac{fn/(tp+fn)}{tn/(tn+fp)}$ &
Micro-LR--\novel &
Macro-LR--\novel &
Exemplar-LR--\novel &
\sinv\\[8pt]

Diagnostic Odds Ratio~\cite{glas2003} (DOR) &
$\dfrac{tp{\cdot}tn}{fp{\cdot}fn}$ &
Micro-DOR\novel &
Macro-DOR\novel &
Exemplar-DOR\novel &
\sinv\\[8pt]

Odds Ratio &
$\dfrac{tp/fp}{fn/tn}$ &
Micro-OR\novel &
Macro-OR\novel &
Exemplar-OR\novel &
\sinv\\[4pt]

\bottomrule
\end{longtable}
}

\noindent\textbf{Note.} LR+, LR--, DOR, and Odds Ratio are all skew-invariant
(\sinv) because they are functions of recall and specificity only, or of ratios
that cancel under the $(\lambda, \mu)$ scaling (see Theorem~4 of the main paper).

\section*{Group 7: Measures for Ranked/Score Output}

These measures require a threshold $\theta$ and should be computed at each threshold value. They extend via the parametric formulation of Proposition~11 (ROC and AUC) of the main paper.

{\small
\begin{longtable}{p{3.0cm}|p{5.5cm}|p{3.8cm}|p{3.8cm}|p{3.8cm}|c}
\toprule
\textbf{Measure} & \textbf{Binary formula} &
\textbf{Micro} & \textbf{Macro} &
\textbf{Exemplar} & \textbf{Inv.}\\
\midrule\endhead

AUC (AUROC)~\cite{fawcett2006introduction} &
$\int_0^1 \text{TPR}(\theta)\,d\,\text{FPR}(\theta)$ &
Micro-AUC &
Macro-AUC &
Exemplar-AUC\novel &
\sinv\\[8pt]

Average Precision (AP) &
$\sum_k \text{Prec}(\theta_k)\cdot\Delta\text{Rec}(\theta_k)$ &
Micro-AP &
Macro-AP &
Exemplar-AP\novel &
\\[8pt]

R-Precision &
$\text{Precision at } k = tpos$ &
Micro-RP\novel &
Macro-RP\novel &
Exemplar-RP\novel &
\\[8pt]

AUPRC~\cite{davis2006} (Area under PR curve) &
$\int_0^1 \text{Prec}(r)\,dr$ &
Micro-AUPRC &
Macro-AUPRC &
Exemplar-AUPRC\novel &
\\[4pt]

\bottomrule
\end{longtable}
}

\section*{Group 8: Cost-Sensitive and Ordinal Measures}

These measures arise from the cost-sensitive framework as special cases of
$\mathrm{Cost}(\mathbf{Y}, \hat{\mathbf{Y}}, \mathbf{C})$ with specific cost matrices.

{\small
\begin{longtable}{p{3.0cm}|p{5.5cm}|p{3.8cm}|p{3.8cm}|p{3.8cm}|c}
\toprule
\textbf{Measure} & \textbf{Cost matrix $C_{j,k}$} &
\textbf{Micro} & \textbf{Macro} &
\textbf{Exemplar} & \textbf{Inv.}\\
\midrule\endhead

0/1 Loss (Error Rate) &
$\mathbf{1}(j\neq k)$ &
Hamming loss &
Per-class error rate\novel &
Per-example error rate\novel &
\\[8pt]

Mean Absolute Error (MAE) &
$|j-k|$ &
MAE &
Macro-MAE\novel &
Exemplar-MAE\novel &
\\[8pt]

Mean Squared Error (MSE) &
$(j-k)^2$ &
MSE &
Macro-MSE\novel &
Exemplar-MSE\novel &
\\[8pt]

Hierarchical Loss (tree) &
$d_{\text{tree}}(c_j, c_k)$ &
Hierarchical loss &
Macro hierarchical loss\novel &
Exemplar hierarchical loss\novel &
\\[8pt]

Weighted $F_\beta$ &
Custom $C_{j,k}$ &
Weighted micro-$F_\beta$\novel &
Weighted macro-$F_\beta$\novel &
Weighted exemplar-$F_\beta$\novel &
\\[4pt]

\bottomrule
\end{longtable}
}

\section*{Group 9: Soft-Label Extensions via $t$-Norms}

When $\mathbf{Y}, \hat{\mathbf{Y}} \in [0,1]^{n\times m}$ (soft labels), all
measures above generalise to three families depending on the choice of $t$-norm
$T \in \{T_L, T_P, T_M\}$ (Łukasiewicz, Product, Gödel). Each combination of
measure, aggregation operator, and $t$-norm yields a distinct evaluation
protocol. The Product $t$-norm ($T_P$) is recommended for probabilistic ground
truth (Theorem~2 of the main paper). The table below illustrates the cross-product
for a selection of measures.

{\small
\begin{longtable}{p{3.5cm}|c|p{4.8cm}|p{4.8cm}|p{4.8cm}}
\toprule
\textbf{Measure} & \textbf{$t$-norm} &
\textbf{Micro} & \textbf{Macro} & \textbf{Exemplar}\\
\midrule\endhead

\multirow{3}{*}{$F_1$} &
$T_L$ & Soft micro-$F_1^{T_L}$\novel & Soft macro-$F_1^{T_L}$\novel & Soft exemplar-$F_1^{T_L}$\novel\\
& $T_P$ & Soft micro-$F_1^{T_P}$\novel & Soft macro-$F_1^{T_P}$\novel & Soft exemplar-$F_1^{T_P}$\novel\\
& $T_M$ & Soft micro-$F_1^{T_M}$\novel & Soft macro-$F_1^{T_M}$\novel & Soft exemplar-$F_1^{T_M}$\novel\\[4pt]

\multirow{3}{*}{MCC} &
$T_L$ & Soft micro-MCC$^{T_L}$\novel & Soft macro-MCC$^{T_L}$\novel & Soft exemplar-MCC$^{T_L}$\novel\\
& $T_P$ & Soft micro-MCC$^{T_P}$\novel & Soft macro-MCC$^{T_P}$\novel & Soft exemplar-MCC$^{T_P}$\novel\\
& $T_M$ & Soft micro-MCC$^{T_M}$\novel & Soft macro-MCC$^{T_M}$\novel & Soft exemplar-MCC$^{T_M}$\novel\\[4pt]

\multirow{3}{*}{Balanced Accuracy} &
$T_L$ & Soft micro-BA$^{T_L}$\novel & Soft macro-BA$^{T_L}$\novel & Soft exemplar-BA$^{T_L}$\novel\\
& $T_P$ & Soft micro-BA$^{T_P}$\novel & Soft macro-BA$^{T_P}$\novel & Soft exemplar-BA$^{T_P}$\novel\\
& $T_M$ & Soft micro-BA$^{T_M}$\novel & Soft macro-BA$^{T_M}$\novel & Soft exemplar-BA$^{T_M}$\novel\\[4pt]

\bottomrule
\end{longtable}
}

\noindent In general, for any of the measures in Groups 1--8, each soft-label
extension via a $t$-norm $T$ yields 9 additional unnamed combinations (3 operators
$\times$ 3 $t$-norms). If all 35+ binary measures in this document are
considered, the framework generates over 300 distinct named extensions
automatically.

\section*{Group 10: Measures with Partial Framework Coverage}

The following measures involve operations not directly expressible as
$M(tp, fp, fn, tn)$ but have partial or approximate coverage within the framework.

{\small
\begin{longtable}{p{3.5cm}|p{5.5cm}|p{11cm}}
\toprule
\textbf{Measure} & \textbf{Formula} & \textbf{Framework status}\\
\midrule\endhead

Brier Score &
$\frac{1}{nm}\SI((\mathbf{Y}-\hat{\mathbf{P}})^{\circ 2})$ &
Requires continuous $\hat{\mathbf{P}}$; related to $\SI(\mathbf{TP})$ under $T_P$ but distinct
(calibration, not discrimination). Outside the tp/fp/fn/tn framework.\\[6pt]

Log-loss &
$-\frac{1}{nm}\SI(\mathbf{Y}{\circ}\log\hat{\mathbf{P}})$ &
Requires continuous $\hat{\mathbf{P}}$; proper scoring rule. Outside the framework.\\[6pt]

Normalised DCG (nDCG) &
$\frac{DCG}{IDCG}$, $DCG=\sum_i \frac{2^{rel_i}-1}{\log_2(i+1)}$ &
Requires ranked output; label ranking setting (see main paper Section~2 (Boundary Settings)).
Not directly expressible in tp/fp/fn/tn counts.\\[6pt]

Calibration Error (ECE) &
$\sum_b \frac{|B_b|}{n}|acc(B_b)-conf(B_b)|$ &
Requires binning of predicted probabilities. Outside the framework.\\[6pt]

Area Under Cost Curve (AUCC) &
$\int_0^1 M(\theta)\,d\theta$ &
Threshold-averaged cost; expressible if $M$ is a cost-sensitive measure
from the framework. Partially covered.\\[4pt]

Net Benefit (NB) &
$\frac{tp}{n} - \frac{fp}{n}\cdot\frac{p_t}{1-p_t}$ &
Expressible as $\SI(\mathbf{TP})/n - \SI(\mathbf{FP})/n \cdot p_t/(1-p_t)$ for a given
threshold $p_t$. \textbf{Covered} by the cost matrix with $C_{neg,pos} = p_t/(1-p_t)$.
Micro-NB and macro-NB are novel extensions.\\[4pt]

\bottomrule
\end{longtable}
}


\section*{Group 11: Rule Evaluation Measures \cite{lavrac1999rule}}

\noindent These measures originate from the unified framework of
Lavra\v{c} et al.~\cite{lavrac1999rule} for evaluating classification rules
in the context of inductive logic programming. They are expressed in terms of
$P(\text{Body})$, $P(\text{Head}|\text{Body})$, and $P(\text{Head})$, which
map directly to $tp$, $fp$, $fn$, $tn$ as shown below. All multiclass and
multilabel extensions are novel (\novel).

{\small
\begin{longtable}{p{3.2cm}|p{5.2cm}|p{3.6cm}|p{3.6cm}|p{3.6cm}|c}
\toprule
\textbf{Measure} & \textbf{Binary formula} &
\textbf{Micro} & \textbf{Macro} &
\textbf{Exemplar} & \textbf{Inv.}\\
\midrule\endhead

Coverage &
$\dfrac{tp+fp}{n}$ \newline (proportion of positive predictions) &
Global coverage\novel &
Per-class coverage\novel &
Per-example coverage\novel &
\\[10pt]

Relative Accuracy (RA) &
$\dfrac{tp}{tp+fp} - \dfrac{tp+fn}{n}$ \newline (= Precision $-$ Prevalence) &
Micro-RA\novel &
Macro-RA\novel &
Exemplar-RA\novel &
\\[10pt]

Klosgen's Measure (WRA) &
$\dfrac{tp+fp}{n}\!\cdot\!\left(\dfrac{tp}{tp+fp}-\dfrac{tp+fn}{n}\right)$ \newline (= Coverage $\times$ RA) &
Micro-WRA\novel &
Macro-WRA\novel &
Exemplar-WRA\novel &
\\[14pt]

Novelty &
$\dfrac{tp}{n} - \dfrac{(tp+fn)(tp+fp)}{n^2}$ \newline (= $P(\text{Head}\wedge\text{Body}) - P(\text{Head})P(\text{Body})$) &
Micro-Novelty\novel &
Macro-Novelty\novel &
Exemplar-Novelty\novel &
\\[14pt]

Conviction &
$\dfrac{1-(tp+fn)/n}{1-tp/(tp+fp)}$ \newline (= $\dfrac{P(\neg\text{Head})}{P(\neg\text{Head}|\text{Body})}$) &
Micro-Conviction\novel &
Macro-Conviction\novel &
Exemplar-Conviction\novel &
\\[14pt]

Chi-squared &
$\dfrac{n(tp\cdot tn - fp\cdot fn)^2}{(tp+fp)(tp+fn)(tn+fp)(tn+fn)}$ \newline ($\propto \text{MCC}^2 \cdot n$) &
Micro-$\chi^2$\novel &
Macro-$\chi^2$\novel &
Exemplar-$\chi^2$\novel &
\\[14pt]

Piatetsky-Shapiro &
$tp - \dfrac{(tp+fn)(tp+fp)}{n}$ \newline (= $n \times$ Novelty) &
Micro-PS\novel &
Macro-PS\novel &
Exemplar-PS\novel &
\\[10pt]

Sebag-Schoenauer &
$\dfrac{tp}{fp}$ \newline (precision-to-FP ratio) &
Micro-SS\novel &
Macro-SS\novel &
Exemplar-SS\novel &
\\[8pt]

\bottomrule
\end{longtable}
}

\bigskip\noindent
\textbf{Notes.} (1) Coverage equals $ppos/n$ and measures what fraction of
examples are predicted positive; its three extensions yield per-class and
per-example analogues. (2) Relative Accuracy and Klosgen's measure (WRA)
both generalise the notion of lift to a signed quantity that is zero under
independence. (3) Novelty and Piatetsky-Shapiro differ only by a factor of
$n$; both measure the deviation from statistical independence between the
prediction and the true label. (4) Conviction diverges when Precision~$= 1$
(no false positives); care is needed in degenerate cases.
(5) None of these measures are skew-invariant, since they all involve
a mixture of the two class scalings $\lambda$ and $\mu$.

\medskip\noindent
\textbf{Class-insensitive counterparts (Flach~\cite{flach2003geometry}).}
Flach~\cite{flach2003geometry} shows that each class-sensitive measure
has a class-insensitive counterpart obtained by projecting its
iso-performance lines in ROC space onto the direction perpendicular
to the class-prior line (i.e., forcing the iso-lines to have slope $-1$,
the signature of skew-invariance; Theorem~4 of the main paper).
The correspondence with our catalogue is:

\begin{center}
\begin{tabular}{l|l|c}
\toprule
\textbf{Lavra\v{c} et al.\ measure} & \textbf{Flach class-insensitive counterpart} & \textbf{Group} \\
\midrule
Relative Accuracy (RA) & Informedness (Youden's J) & 4 \\
Klosgen (WRA) & Balanced Accuracy & 4 \\
Novelty / Piatetsky-Shapiro & (related to Informedness) & 4 \\
Conviction & Positive Likelihood Ratio (LR+) & 6 \\
Chi-squared & Discriminant Power & 4 \\
\bottomrule
\end{tabular}
\end{center}

\noindent In each case, the class-insensitive counterpart is already listed in
our catalogue (Groups 4 and 6), has a \sinv\ mark in the Inv.\ column,
and inherits novel multiclass and multilabel extensions via the framework.

\section*{Group 12: Computer Vision Metrics (Detection \& Segmentation)}

\noindent In semantic segmentation, each pixel is an example and each semantic
class is a label, so the indicator-matrix framework applies directly with
$n = $ (pixels) and $m = $ (classes). Many standard CV metrics are therefore
instances of measures already in Groups~1--6. The table below makes the
correspondences explicit and notes the aggregation scheme used in practice.

{\small
\begin{longtable}{p{3.8cm}|p{4.5cm}|p{3.5cm}|p{4.5cm}|c}
\toprule
\textbf{CV Metric} & \textbf{Framework equivalent} & \textbf{Standard aggregation} & \textbf{Notes} & \textbf{Inv.}\\
\midrule\endhead

Pixel accuracy &
Accuracy &
$\SI$ (micro) &
$(\text{correct pixels})/(\text{total pixels})$ &
\\[8pt]

Mean pixel accuracy &
Recall &
$\Sm$ (macro) &
Average per-class fraction of correctly classified pixels &
\sinv\\[8pt]

Per-class IoU &
Jaccard &
$\Sm$ (column $j$) &
$tp_j/(tp_j+fp_j+fn_j)$ per class &
\\[8pt]

Mean IoU (mIoU)~\cite{long2015fcn} &
Jaccard &
$\Sm$ (macro) &
Standard segmentation benchmark metric &
\\[8pt]

Dice coefficient~\cite{dice1945} &
$F_1$ &
$\Sm$ or $\Sn$ &
$= 2tp/(2tp+fp+fn)$; dominant in medical imaging &
\\[8pt]

Mean Dice &
$F_1$ &
$\Sm$ (macro) &
Average Dice over classes &
\\[8pt]

Frequency-weighted IoU &
Jaccard &
$\Sm$ (weighted) &
Class IoU weighted by pixel prevalence &
\\[8pt]

Average Precision (AP)~\cite{everingham2010pascal} &
AUC (AUPRC) &
Per class &
Area under Precision--Recall curve at IoU threshold $\tau$ &
\\[8pt]

mAP (mean AP) &
AUC (AUPRC) &
$\Sm$ (macro) &
Mean AP over classes; standard detection benchmark &
\\[8pt]

COCO AP &
AUC (AUPRC) &
$\Sm$ (macro) &
Mean AP averaged over IoU thresholds $[0.5:0.05:0.95]$ &
\\[8pt]

Panoptic Quality (PQ) &
$F_1 \times $ matching score &
$\Sm$ (macro) &
Combines detection ($F_1$) and segmentation (IoU) quality;
partially outside the framework due to the matching step &
\\[4pt]

\bottomrule
\end{longtable}
}

\bigskip\noindent
\textbf{Notes.} (1) mIoU is macro-Jaccard: it was listed in Table~2 of the
main paper as ``Macro-Jaccard (mean IoU)'' precisely for this reason.
(2) Dice and $F_1$ are the same measure; Dice is the standard name in medical
imaging and segmentation, $F_1$ in information retrieval and classification.
(3) For object detection, a predicted box is a true positive when its IoU with
the ground-truth box exceeds a threshold $\tau$; this maps to
Eq.~(6) (thresholding) of the main paper with IoU as the score and $\tau$ as $\theta$.
(4) Panoptic Quality~\cite{kirillov2019panoptic} partially lies outside the
framework because it requires a bipartite matching step between predicted and
ground-truth segments before counts can be computed.

\section*{Summary Count}

\begin{center}
\begin{tabular}{lrrr}
\toprule
\textbf{Group} & \textbf{Binary measures} &
\textbf{Novel extensions ($\times 3$ operators)} & \textbf{Total novel} \\
\midrule
1. Accuracy/Error & 4 & $\sim$8 & $\sim$8 \\
2. Positive-class  & 9 & $\sim$21 & $\sim$21 \\
3. Negative-class  & 4 & $\sim$12 & $\sim$12 \\
4. Balance         & 7 & $\sim$18 & $\sim$18 \\
5. Correlation     & 5 & $\sim$13 & $\sim$13 \\
6. Likelihood/Odds & 4 & $\sim$12 & $\sim$12 \\
7. Score-based     & 4 & $\sim$6 & $\sim$6 \\
8. Cost-sensitive  & 5 & $\sim$10 & $\sim$10 \\
11. Rule evaluation (Lavra\v{c} et al.)  & 8 & $\sim$24 & $\sim$24 \\
9. Soft-label (per $t$-norm) & $35\times 3$ & all novel & $\sim$315 \\
\midrule
\textbf{Total} & \textbf{$\sim$50} &
\multicolumn{2}{r}{\textbf{142 extensions marked as lacking an established name}} \\
\bottomrule
\end{tabular}
\end{center}

\bigskip\noindent
\textbf{Legend.} $\novel$ = novel (no established named version in the
literature). $\sinv$ = skew-invariant (Theorem~4 of the main paper):
value is unchanged when class prevalences vary while conditional
classifier rates are held fixed.
\end{landscape}

\end{document}